\newcommand*{\circled}[1]{\lower.7ex\hbox{\tikz\draw (0pt, 0pt)%
    circle (.5em) node {\makebox[1em][c]{\small #1}};}}
\newcommand{\pref}[1]{\prettyref{#1}}
\newcommand{\savehyperref}[2]{\texorpdfstring{\hyperref[#1]{#2}}{#2}}
\newcommand{\risk}{\text{\rm Risk}}
\newcommand{\diff}{\text{\rm Diff}}
\newcommand{\advrisk}{\text{\rm AdvRisk}}
\newcommand{\distribution}{P}
\newcommand{\distance}{d}
\newcommand{\calC}{\mathcal{C}}
\newcommand{\calD}{\mathcal{D}}
\newcommand{\calL}{\mathcal{L}}
\newcommand{\calN}{\mathcal{N}}
\newcommand{\calB}{\mathcal{B}}
\newcommand{\calE}{\mathcal{E}}
\newcommand{\calH}{\mathcal{H}}
\newcommand{\calS}{\mathcal{S}}
\newcommand{\calF}{\mathcal{F}}
\newcommand{\R}{\mathbb{R}}
\newcommand{\E}{\mathbb{E}}
\newcommand{\test}{\text{\rm test}}
\newcommand{\wtilde}{\widetilde}
\newcommand{\what}{\widehat}
\newcommand{\calP}{\mathcal{P}}
\newcommand{\calX}{\mathcal{X}}
\newcommand{\calY}{\mathcal{Y}}
\newcommand{\truextest}{X_{\test}}
\newcommand{\trueytest}{c(X_{\test})}
\newcommand{\truepara}{X}
\newcommand{\distmodel}{\widetilde{g}}
\newcommand{\truemodel}{g}
\newcommand{\protector}{\text{\rm prot}}
\newcommand{\attacker}{\text{\rm attack}}
\newcommand{\distpara}{\widetilde{X}}
\newcommand{\distdistribution}{\widetilde{P}}
\newcommand{\red}[1]{{\color{red}#1}}
\newcommand{\blue}[1]{{\color{blue}#1}}
\newtheorem{assumption}{Assumption}[section]
\newtheorem{defi}{Definition}[section]
\newtheorem{thm}{Theorem}[section]
\newtheorem{lem}[thm]{Lemma}
\begin{document}

\title{Probably Approximately Correct Federated Learning}

\author{\name Xiaojin Zhang \email xiaojinzhang@ust.hk \\
\addr Hong Kong University of Science and Technology, China
      \AND 
      \name Anbu Huang \email stevenhuang@webank.com\\
      \addr Webank, Shenzhen, China
      \AND
      \name Lixin Fan \email lixinfan@webank.com\\
      \addr Webank, Shenzhen, China
      \AND
      \name Kai Chen \email kaichen@cse.ust.hk\\
      \addr Hong Kong University of Science and Technology, China
      \AND
      \name Qiang Yang\thanks{Corresponding author} \email qyang@cse.ust.hk \\
      \addr WeBank and Hong Kong University of Science and Technology, China
      }

\editor{}

\newcommand{\colnote}[3]{\textcolor{#1}{$\ll$\textsf{#2}$\gg$\marginpar{\tiny\bf #3}}}
\newcommand{\lfan}[1]{\colnote{red}{#1--Lixin}{LF}}

\maketitle


\begin{abstract}
Federated learning (FL) is a distributed learning framework with privacy, utility, and efficiency as its primary concerns. Existing research indicates that it is unlikely to simultaneously achieve privacy protection, utility and efficiency. How to find an optimal trade-off for the three factors is a key consideration for trustworthy federated learning.  Is it possible to cast the trade-off as a multi-objective optimization problem in order to minimize the utility loss and efficiency reduction while constraining the privacy leakage? Existing multi-objective optimization frameworks are not suitable for this goal because they are  time-consuming and provide no guarantee for the existence of the Pareto frontier for solutions.  This challenge motivates us to seek an alternative solution to transform the multi-objective problem into a single-objective problem, which can potentially be more cost-effective and optimal. To this end, we present FedPAC, a unified framework that leverages PAC learning to quantify multiple objectives in terms of sample complexity. This unification provides a quantification that allows us to constrain the solution space of multiple objectives to a shared dimension.  We can then solve the problem with the help of a single-objective optimization algorithm. Specifically, we provide the theoretical results and detailed analyses on how to quantify the utility loss, privacy leakage, and efficiency, and show that their trade-off can be attained at the maximal cost on potential attackers from the PAC learning perspective.
\end{abstract}

\begin{keywords}
Federated learning, PAC learning, Multi-objective optimization, Trade-off,  Generalization error bounds
\end{keywords}

\section{Introduction}

\medskip

Federated learning (FL) is a trustworthy machine learning framework,  where \textit{privacy}, \textit{utility}, and \textit{efficiency}
are the three main concerns that must be scrutinized and optimized. Existing research has shown that there always exists a trade-off among privacy protection, utility and efficiency (\cite{zhang2019theoretically}).  How to find an optimal trade-off among all desired objectives is a key consideration when designing and deploying the FL framework in practice (\cite{tsipras2018robustness}).

Most of the existing works cast the above-mentioned trade-off problem as a constrained optimization problem, which aims to minimize certain objectives subject to imposed constraints. For example, \citep{makhdoumi2013privacy,du2012privacy} formulated the privacy-utility trade-off as an optimization problem where {private information leaked to} the adversary is minimized {subject to} a given set of utility constraints. \citep{rassouli2019optimal} illustrated that the optimal privacy-utility trade-off can be solved using a standard linear program and provided a closed-form solution for the special case when the data to be released is a binary variable. {In essence, this line of research can be generalized to the following multi-objective constrained optimization problem:}

{
\begin{align} \label{eq: constraint_optimization_problem_add_max}
\begin{array}{r@{\quad}l@{}l@{\quad}l}
\quad\min &  (\epsilon_1,\ \dots,\ \epsilon_k) \\
\text{subject to} &  \epsilon_{k+j}\le\alpha_j,\ \ \ \ \forall \  j \in \{1,\cdots,m\},
\end{array}
\end{align}}

{In the above, $(\epsilon_1, ..., \epsilon_k)$ denote objectives to be optimized in federated learning and $(\epsilon_{k+1}, ..., \epsilon_{k+m})$ are constrains to be fulfilled.} The constrained multi-objective optimization as such poses several challenges that do not admit trivial solutions. First, one must prescribe in advance a proper constraint (i.e., $\alpha_j$) that ensures feasible solutions of equation \ref{eq: constraint_optimization_problem_add_max} to be useful in practice. Unfortunately, existing multi-objective optimization frameworks do not guarantee the existence of Pareto solutions that
minimizes all objective functions simultaneously (\cite{miettinen1999nonlinear,hassanzadeh2010multi}); Second, in most cases, the objective function of equation \ref{eq: constraint_optimization_problem_add_max} is usually not convex, direct multi-objective optimization cannot make use of efficient gradient-based approaches, and are often time-consuming by {using e.g. evolution-based methods ( \cite{DBLP:journals/csur/TianSZCHTJ22,DBLP:journals/ieeejas/TianCMZTJ22,DBLP:journals/memetic/ChengDDJ22,DBLP:journals/corr/abs-2210-08295,DBLP:journals/ieeejas/HuaLHJ21});} Third, the problem of equation \ref{eq: constraint_optimization_problem_add_max} can be scalarized into the single objective
optimization problem (i.e., $\lambda_1\epsilon_1+\dots+\lambda_k\epsilon_k$), {yet it remains an open problem of} how to set proper scalarizing coefficients such that optimal solutions can reach every Pareto optimal solutions of the original problem.  Furthermore, when considering how to model the efforts of potential attackers in the FL framework, we need to unify both the protection and the attacking mechanisms.  If we can find a unifying dimension in which to compare these efforts, it would be much easier to optimize the trustworthy FL framework under all three objectives.

{

Taking consideration of the aforementioned challenges of multi-objective problems, we propose a new unified framework that leverages the PAC learning theory (\cite{valiant1984theory}) to quantify multiple objectives in terms of the \textit{sample complexity} (denoted as $s$). The sample complexity is used in the PAC learning framework to measure the number of samples needed for a learning model to attain a certain level of competence within a given confidence level.  This number can be used to measure both the costs for the defender and the attacker in an FL framework.  With this in mind, we can reformulate the optimal trade-off problem as equation \ref{eq: fedpac_multi_2_single}.  
{\begin{align} \label{eq: fedpac_multi_2_single}
\begin{array}{r@{\quad}l@{}l@{\quad}l}
\quad\mathop{\min}\limits_{s} &  f(\epsilon_1(s),\epsilon_2(s),\dots,\epsilon_k(s)) \\
\text{subject to} & \epsilon_{i}\le f_i(s),\ \ \ \ \ \ \ \forall \ i \in \{1,\dots,k,\dots,(k+m)\} . 
\end{array}
\end{align}}
}

Here, $f$ is a specific function with sample complexity ($s$) as its variables.  We list the advantages of equation \ref{eq: fedpac_multi_2_single} as follows:

\begin{enumerate}
    \item Solutions for multiple objectives of equation \ref{eq: fedpac_multi_2_single} share the same space (i.e., sample complexity), while the solution spaces of equation \ref{eq: constraint_optimization_problem_add_max} are independent of each other. A shared and small dimensional space means that it is more likely to achieve the optimal solution instead of a suboptimal one.

    \item Following up the aforementioned conclusion, theoretically, it can potentially be more efficient to find the optimal solution in a smaller search space.

    

    \item Sample complexity is easier to understand and applied to model both the attackers' costs and protection expenses, thus leading to a trade-off mechanism for both the attackers and the defenders.
\end{enumerate}

The main contributions of our paper are summarized as follows:

\begin{itemize}
    \item We present a unified framework to analyze the federated learning algorithm. Specifically, we proposed FedPAC framework, to measure and quantify privacy leakage (\pref{defi:Privacy_Leakage}), utility loss (\pref{defi:generalization_error_pd}), and training efficiency (\pref{defi:Training_Efficiency}). 

     \item From the PAC learning perspective, we provide the upper bound for the utility loss (\pref{thm: relation_utility_loss}) and privacy leakage (\pref{thm: privacy_distortion_and_datasize}), and further formalize the privacy-utility-efficiency trade-off expression (\pref{thm: trade-off analysis for general protection mechanism}). Based on these results, we formalize the concept of private PAC learning (\pref{corr: utility_privacy_efficiency}), which serves as the basis for the algorithmic proposal of a novel protection mechanism.
     


     \item For the attacker, we provide a detailed analysis about the upper bound of the privacy leakage (Section \ref{subsec:upper_bound_privacy_leakage}), and analyze the cost of the attacker from the perspective of PAC learning (Section \ref{sec:PAC_Learnability_of_the_Attackers}).
     
    \item For the protector, we provide a detailed analysis about the upper bound of the utility loss (Section \ref{sec:Generalized_error_Bound}). Specifically, we analyze the utility loss caused by different protection mechanisms separately, including \textit{Randomization} (\cite{geyer2017differentially,truex2020ldp,abadi2016deep}) and \textit{Homomorphic Encryption (HE)} (\cite{gentry2009fully,batchCryp}) (Section \pref{subsec:trade_off_random_he}).


\end{itemize}

Our paper is organized as follows: In section \ref{sec:Preliminary}, we first provide necessary preliminary to better understand this paper, including learning scenario setting in section \ref{subsec:learning_setting}, threat model in section \ref{subsec:thread_model}, and summarize our main results in section \ref{subsec:main_results}. In section \ref{sec:related_work}, we review some related works of our paper, including privacy-utility-efficiency trade-off, multi-objective optimization, and PAC learning; In section \ref{sec:on_the_learnability_of_the_attacker},  we focus on the attacker, we provide a detailed analysis of the upper bound of privacy leakage and the cost of the attacker. In section \ref{sec:utility_loss_analysis}, we focus on the protector and provide a detailed analysis of the upper bound of utility loss in the presence of different protection mechanisms. In section \ref{sec:discussions}, we discuss how our results provide theoretical bounds on the gains and losses of privacy, utility, and efficiency in federated learning.


\section{Preliminary}
\label{sec:Preliminary}

In this section, we describe the learning scenario setting and threat model of our paper, as well as briefly summarize the main results.

\subsection{Learning scenario setting}
\label{subsec:learning_setting}

{
In this work, we consider the learning scenario based on horizontal federated learning (HFL), as shown in Figure \ref{fig:hfl}. 
\begin{figure}[!ht]
    \centering
    \includegraphics[width=0.7\linewidth]{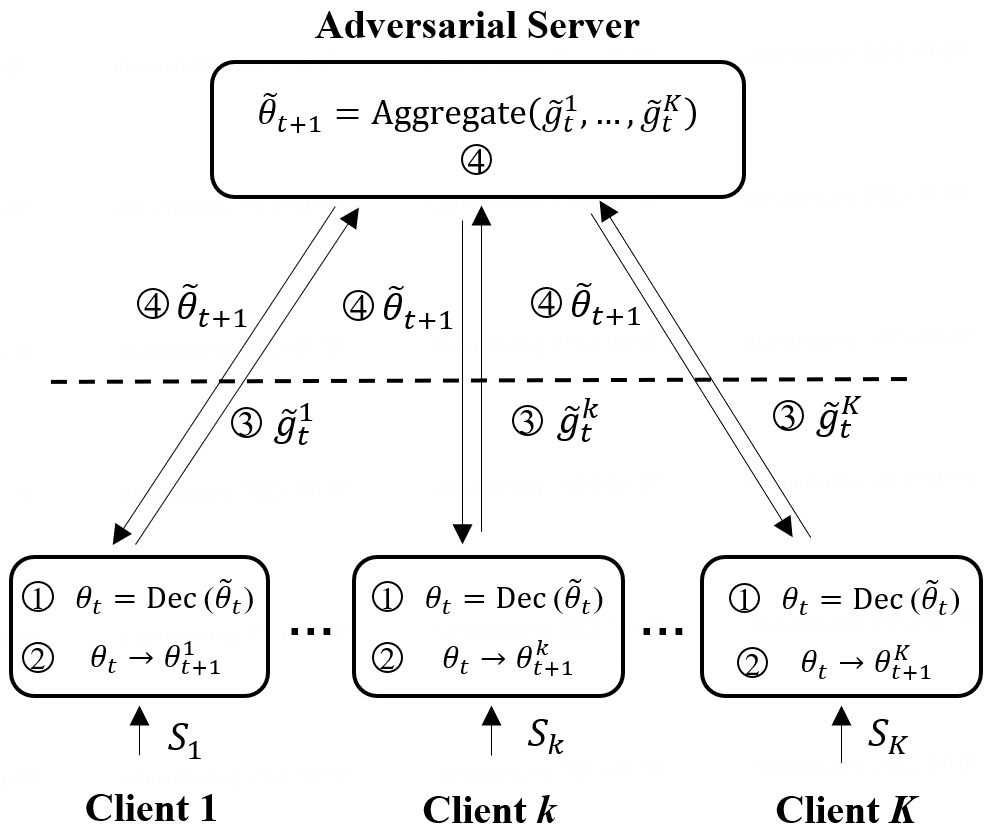}    \caption{Learning scenario of our work.}
    \label{fig:hfl}
\end{figure}

We assume there are a total of $K$ clients collaboratively to learn a shared model, the server side is an adversary attacker, and each client acts as a protector to prevent privacy leakage. For convenience, we first provide the notation used throughout this article, as shown in table \ref{table: notation}.

We follow the FL training procedure of \textit{federated averaging} (FedAvg) (\cite{mcmahan2017communication}), plus adding protection mechanisms for the sake of security concerns. Without loss of generality, suppose the training process is running on the $t$-th round iteration, let $\theta_t$ be the aggregated global model at round $t$, $\eta_t$ represent the learning rate, and $\delta_{t}^{(k)}$ be the distortion applied to protect the local information of client $k$ at iteration $t$. We denote $\calD_{\protector}^{(k)} = \{\textbf{X}^{(k)}, \textbf{Y}^{(k)}\} = \{(X_1^{(k)},Y_1^{(k)}),\cdots, (X_{m_k}^{(k)},Y_{m_k}^{(k)})\}$ as the \textit{local training set} of client $k$, and $\calL$ represents the loss function. The training process for privacy-preserving HFL contains the following four steps:

\begin{itemize}
    \item Upon receiving the global model $\wtilde{\theta}_{t}$ at round $t$, each protector decodes the global model $\theta_t\leftarrow \text{Dec}(\wtilde{\theta}_t)$.

    Please note that, some protection mechanisms, e.g., \textit{Randomization} (\cite{geyer2017differentially,truex2020ldp,abadi2016deep}), do not have the decoding procedure. However, for consistency, we keep this step and set $\text{Dec}(\wtilde \theta_t) = \wtilde \theta_t$ for those who without decoding operation.

    \item Take client $k$ as an example, with the decoded global model, the protector $k$ updates its local model parameters using SGD as: 
    \begin{align}\label{eq: hfl_step2}
       \theta_{t+1}^{(k)} \leftarrow \theta_{t} -\eta_t\cdot\frac{1}{|\calD_{\protector}^{(k)}|}\sum_{i = 1}^{|\calD_{\protector}^{(k)}|} \nabla\calL(\theta_t, X_i^{(k)}, Y_{i}^{(k)})
    \end{align}
    
    \item The protector $k$ uploads the distorted gradient to the server, which is defined as:
    \begin{align}\label{eq: distortion_approach}
       \wtilde g_{t}^{(k)}\leftarrow \frac{1}{|\calD_{\protector}^{(k)}|}\sum_{i = 1}^{|\calD_{\protector}^{(k)}|} \nabla\calL(\theta_t, X_i^{(k)}, Y_{i}^{(k)}) + \delta_{t}^{(k)},
    \end{align}
    where $\delta_{t}^{(k)}$ represents the distortion added by client $k$ at round $t$;
    \item Upon receiving distorted gradients from all clients, the server aggregates these information and updates global model as:
\begin{align}\label{eq: hfl_step4}
    \wtilde\theta_{t+1}\leftarrow \wtilde\theta_t - \eta_t\cdot\sum_{k = 1}^K \frac{|\calD_{\protector}^{(k)}|}{\sum_{k = 1}^K |\calD_{\protector}^{(k)}|}\wtilde g_{t}^{(k)}
    \end{align}
    After that, the server dispatches $\wtilde\theta_{t+1}$ to all clients and follows the above steps for the next round.
\end{itemize}

}

\subsection{Threat model}
\label{subsec:thread_model}
{
We consider the server is a \textit{semi-honest} attacker, he/she follows the FL protocol, yet may execute privacy attacks on exposed data to deduce the private information of other participants, e.g., gradient inversion attacks proposed by \cite{zhu2019dlg}. Specifically, the attacker has two main tasks, as shown in Figure \ref{fig:thread_model}. 
\begin{figure}[!ht]
    \centering
    \includegraphics[width=1\linewidth]{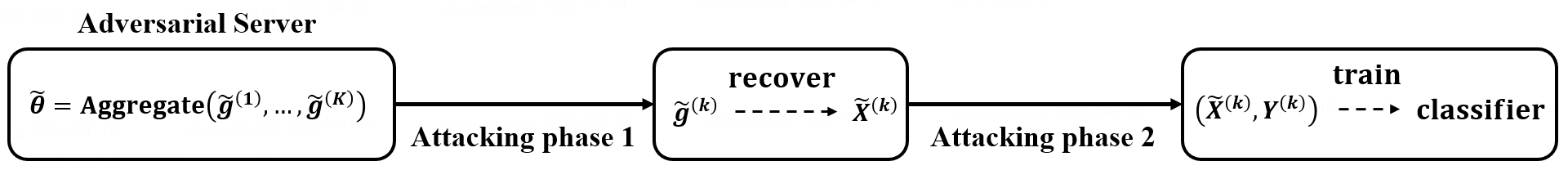}
    \caption{Threat Model.}
    \label{fig:thread_model}
\end{figure}

\begin{itemize}
        \item Attacking Phase 1: Out of curiosity, the attacker takes the distorted gradient $\widetilde{g}^{(k)}$ of client $k$, and utilizes different attack approaches to recover the dataset (denoted as $\widetilde{X}^{(k)}$). The goal is to restore the original dataset as much as possible.

        \item Attacking Phase 2: The attacker wants to know and quantify the cost of the attack approach, so they simulate an experiment for this purpose. Specifically, they utilized the recovered dataset ( $\widetilde{X}^{(k)}$), with their corresponding true label $Y^{(k)}$ to train a classifier. The goal is to deduce how many samples are needed to make the classifier satisfies $(\epsilon,\delta)$-PAC learnability.
        
    \end{itemize}
}

\subsection{Main Results}
\label{subsec:main_results}

{Under the aforementioned setting, our goal is to build a unified framework in which privacy, utility, and efficiency can be measured from the PAC learning perspective. Based on that, we further formalize the key conclusion of this paper: \textbf{How to quantify the trade-off between privacy, utility, and efficiency from the perspective of PAC learning ?} To better understand the workflow of our main results, we illustrate the roadmap in Figure \ref{fig:roadmap} for convenience. 

\begin{figure}[!ht]
   \centering
\includegraphics[width=1\linewidth]{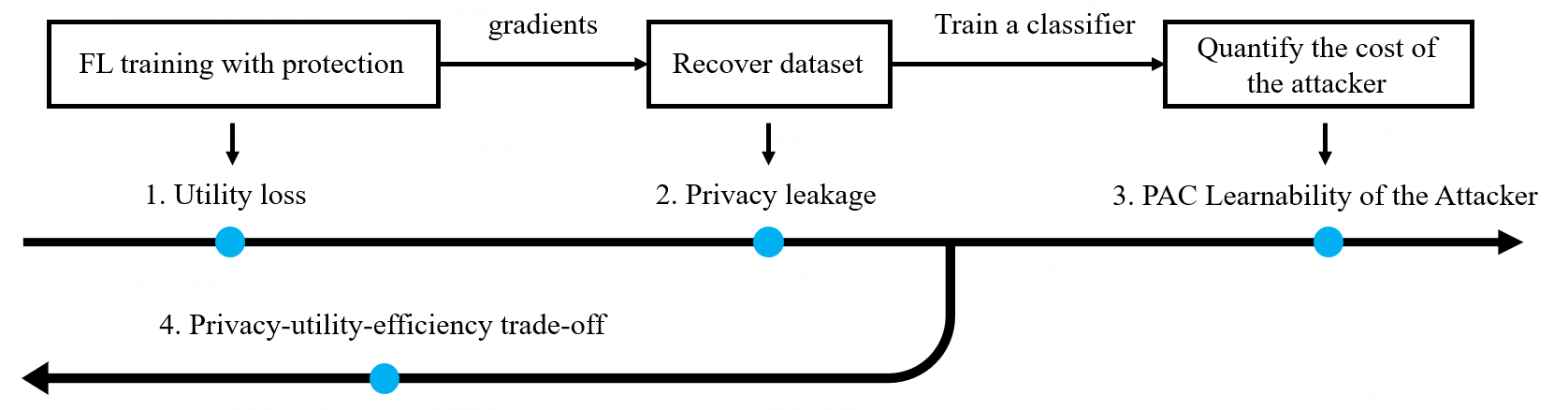}
    \caption{{Roadmap of our main results.}}
   \label{fig:roadmap}
\end{figure}

{

\begin{enumerate}
    \item The FL system follows the training procedure described in section \ref{subsec:learning_setting} to execute FL training. Because protection mechanisms are utilized to protect the information, which might potentially lead to the utility loss.

    \item Following Phase 1 described in section \ref{subsec:thread_model}, whenever the server side (attacker) receives the distorted gradients from clients, the attacker trys to recover the dataset, and inevitably leads to the privacy leakage.

    \item Following Phase 2 described in section \ref{subsec:thread_model}, the attacker further wants to evaluate the cost of the attack approach, so as to decide whether to launch an attack.

    \item Based on privacy leakage and utility loss, we expect to formalize the privacy-utility-efficiency trade-off.
\end{enumerate}
}

 Specifically, we will discuss the following questions and provide the results, as well as detailed analyses correspondingly (see Figure \ref{fig:workflow} for the illustration).
\begin{figure}[!ht]
    \centering
    \includegraphics[width=1\linewidth]{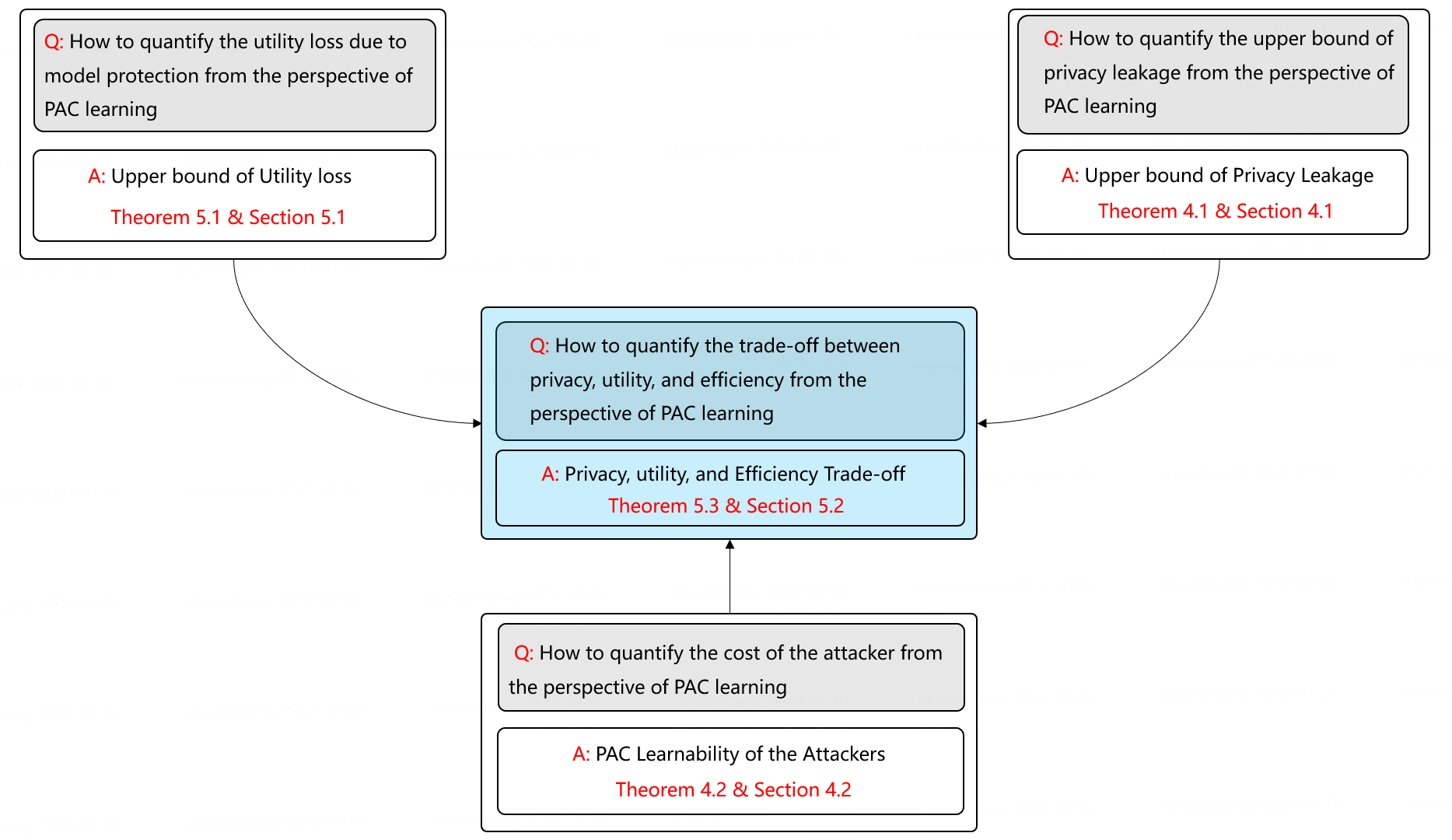}
    \caption{Illustration of our main results.}
    \label{fig:workflow}
\end{figure}

\begin{enumerate}

    \item \textbf{Q:} How to quantify the upper bound of privacy leakage (Definition \ref{defi:Privacy_Leakage}) from the perspective of PAC learning?

    \textbf{A:} Privacy
leakage measures the gap between the recovered dataset by the attacker and the original dataset. We argue that the upper bound of privacy leakage relies on the size of the training samples. Intuitively, the more training samples, the smaller the upper bound of the privacy leakage of the model, which means that, With the same probability, it is more challenging to attack the model.

        \textbf{Please refer to Theorem \ref{thm: privacy_distortion_and_datasize} for formal definition, and we provide detailed analyses in Section \ref{subsec:upper_bound_privacy_leakage}.}

    \item \textbf{Q:} How to quantify the cost of the attacker from the perspective of PAC learning?

    \textbf{A:} To quantify the cost of the attack approach, we simulate an experiment to train a classifier with the recovered dataset, we provide the lower bound of sample sizes for this purpose, which means how many samples are needed to make the classifier satisfy $(\epsilon,\delta)$-PAC learnability. 

    \textbf{Please refer to Theorem \ref{thm: exponential_lower_bound_mt} for formal definition, and we provide detailed analyses in Section \ref{sec:PAC_Learnability_of_the_Attackers}.}
    
    \item \textbf{Q:} How to quantify the utility loss (Definition \ref{defi:generalization_error_pd}) due to model protection from the perspective of PAC learning?

        \textbf{A:} Utility loss measures the gap between the performance of the model before and after adding protection. We argue that the upper bound of utility loss relies on the size of the training samples. Intuitively, the more training samples, the smaller the upper bound of the utility loss of the model, which means that the model can achieve better model performance with the same probability. 

        \textbf{Please refer to Theorem \ref{thm: relation_utility_loss} for formal definition, and we provide detailed analyses in Section \ref{sec:Generalized_error_Bound}.}

    \item \textbf{Q:} How to quantify the trade-off between privacy, utility, and efficiency from the perspective of PAC learning?

    \textbf{A:} Combining the upper bound of utility loss and the upper bound of privacy leakage, by sample complexity, we can unify these three dimensions and establish the trade-off expression between utility, privacy and efficiency. 

    \textbf{Please refer to Theorem \ref{thm: trade-off analysis for general protection mechanism} for formal definition, and we provide detailed analyses in Section \ref{subsec:Trade_off_for_General_Protection_Mechanisms}.}
\end{enumerate}
}

\section{Related Work}
\label{sec:related_work}

The concept of Federated Learning was first proposed in 2016 (\cite{mcmahan2017communication,yang2019federated}). There are many surveys and books that cover all aspects of federated learning research (\cite{yang2023federated,kairouz2021advances,li2020federated,DBLP:series/synthesis/2019YangLCKCY}). In summary, existing research can be categorized into three folds, i.e., privacy, utility, and efficiency. However, most of the research is interest in addressing each of these challenges separately, while there is relatively less focus on how to make trade-offs between these three dimensions.

In this section, we provide an overview of the trade-off problem, multi-objective optimization, and also review the basic knowledge of PAC learning, as well as how PAC learning can be applied to theoretical analysis of machine learning algorithms.

\subsection{Privacy-utility-efficiency trade-off}

The privacy-utility trade-off problem has been extensively studied in many areas, such as machine learning, database queries, etc. A well-known theory in this area is differential privacy (DP), where the goal is to prevent information leakage by adding an amount of randomness (\cite{10.1007/11787006_1}). However, the original DP definition can be overly cautious by focusing on worst-case scenarios, following this work, more flexible and generalized versions have been proposed to remedy this shortcoming, such as $(\epsilon, \delta)$-differential privacy, Rényi differential privacy (\cite{renyi_dp}), etc.

In addition to DP theory, Another research line is to cast the trade-off problem as an optimization problem. \cite{makhdoumi2013privacy} cast the privacy-utility trade-off as a convex optimization problem, where privacy leakage is modeled under the log-loss by the mutual information between private data and released data, and utility constraint is regarded as the average distortion between original and distorted data. \cite{DBLP:journals/tifs/SankarRP13} utilized rate-distortion theory to develop privacy-utility trade-off region for i.i.d. data sources with known distribution. \cite{DBLP:conf/nips/ChenKO20} develop novel encoding and decoding
mechanisms that simultaneously achieve optimal privacy and communication
efficiency in various canonical settings under distributed learning scenarios. \cite{Zhong2022PrivacyUtilityT} investigates the privacy-utility trade-off problem
for seven popular privacy measures in two different scenarios:
global and local privacy.

Under FL scenarios, many algorithms take into account the balance of privacy, utility and efficiency. For example, \cite{zhang2020batchcrypt} present BatchCrypt that substantially reduces the encryption and communication overhead caused by HE. \cite{huang2020rpn} introduce RPN to keep the model performance without loss while reducing the communication overhead. However, such researches are case by case and have no unified theoretical analysis. To fill this gap, {\cite{zhang2022trading} proposed No-Free-Lunch (NFL) theorem with a similar motivation to this paper, where privacy, utility, and efficiency are measured by distortion extent, a metric to measure the data distribution difference before and after data protection. However, in most cases, the distortion extent can only be used in the protection mechanisms, which makes it not suitable for analyzing attack algorithms. Besides, the distortion extent is determined by the data distribution, which is usually unknown in advance in real-world applications.}


\subsection{Multi-objective Optimization}

Multi-objective optimization (MOO), also known as Pareto optimization (\cite{miettinen1999nonlinear,hwang2012multiple,hassanzadeh2010multi}), is an area of mathematical optimization problems involving more than one objective function to be optimized simultaneously. MOO has been widely studied, and most MOO solutions can fall into either one of the following two classes:
\begin{itemize}
    \item Priori methods (\cite{hwang2012multiple}), which require sufficient preference information before the solution process, including lexicographic optimization (\cite{sherali1983preemptive,isermann1982linear,ogryczak2005telecommunications,cococcioni2018lexicographic}), Scalarizing (\cite{6791727,Miettinen1998NonlinearMO,WIERZBICKI1982391,Chandra1983,DBLP:journals/corr/abs-2006-04655}), goal programming (\cite{charnes1955optimal,tamiz1998goal,jones2010practical,tafakkori2022sustainable}), etc.

    \item Posteriori methods, which aim at producing all the Pareto optimal solutions, including mathematical optimization (\cite{jeter1986mathematical,martins2021engineering}), evolutionary algorithms (\cite{deb2002fast,deb2013evolutionary,jain2013evolutionary,kim2004spea2+,suman2006survey,vargas2015general,lehman2011abandoning}), deep learning-based algorithms (\cite{navon2020learning,liu2021profiling}), etc.
\end{itemize}

FL is a new application area for MOO. \cite{DBLP:journals/corr/abs-2102-08288} proposes a federated data-driven evolutionary optimization framework that is able to perform data-driven optimization when the data is distributed on multiple devices. \cite{DBLP:journals/corr/abs-2106-12086} extends this work and focuses on MOO; Some other research works utilize the MOO to deal with the trade-off among utility, efficiency, robustness, and privacy under FL setting,  \cite{liu2021multi} proposes MORAS, a multi-objective robust architecture search algorithm to balance both accuracy and robustness in the presence of multiple adversarial attacks. \cite{yin2022predictive} applies multi-objective optimization to federated split learning, the goal is to obtain the trade-off solution between the training time and energy consumption. \cite{zhu2019multi} aims to optimize the structure of the neural network models in FL using a multi-objective evolutionary algorithm to simultaneously minimize efficiency and utility. \cite{morell2022optimising} optimizes communication overhead in FL by modeling it as a multi-objective problem and applying NSGA-II (\cite{deb2002fast}) to solve it.

\subsection{Probably Approximately Correct learning}

Probably Approximately Correct (PAC) learning was first proposed in 1984 by Leslie Valiant\ (\cite{valiant1984theory}). Briefly, PAC learning is a framework for the mathematical analysis of machine learning, under which the learner selects a hypothesis function. The goal is that, with high probability, the selected function will have low generalization error.

PAC learning has been extensively studied and applied to machine learning algorithm analysis. For example, \cite{zhang2010risk} present the risk bounds for Levy processes without Gaussian components in the PAC-learning framework. \cite{balcan2012distributed} consider the problem of PAC-learning from distributed data and analyze fundamental communication complexity questions involve, as well as provide general upper and lower bounds on the amount of communication needed to learn well. \cite{10.5555/1577069.1577076} studied the properties of the agnostic learning framework, which is a natural generalization of PAC learning. \cite{561497cd439948c4ab67ad629dbe7b99} extend the PAC learning framework to account for the presence of an adversary, and seek to understand the limits of what can be learned in the presence of an evasion adversary.

There are also some studies that apply PAC learning to distributed learning scenarios.  \cite{NIPS2017_186a157b} consider a collaborative learning scenario, in which k players attempt to learn the same underlying concept, they present lower bounds and upper bounds on the sample complexity of collaborative PAC learning. However, they only consider realizable setting. Further, \cite{nguyen2018improved} extend the results to the realizable and non-realizable settings. \cite{blum2021one} analyzed how the sample complexity of federated learning may be affected by the agents’ desire to keep their individual sample complexities low in the PAC learning setting.





\section{On the Learnability of the Attacker}
\label{sec:on_the_learnability_of_the_attacker}
We formulate two phases of learning process for the semi-honest attackers (see Figure \ref{fig:thread_model}). During Learning Phase 1, the attacker infers the dataset of the protector upon observing the released model information. During Learning Phase 2, with the inferred dataset, the attacker trains a classifier. To provide a unified analysis framework for trade-off between utility, privacy and efficiency for protection mechanisms including \textit{Randomization Mechanism} and \textit{Homomorphic Encryption Mechanism}, we introduce the following two distortion extents during the training process of federated learning, called \textit{two-way distortion extent} and \textit{uplink distortion extent} (see \pref{defi: uplink_distortion_extent} and \pref{defi: two_way_distortion_extent}).

\subsection{Upper Bound for Privacy Leakage} 

\label{subsec:upper_bound_privacy_leakage}


In this section, we provide theoretical analysis for Learning Phase 1. To illustrate the Learning Phase 1 of the attacker, we elaborate on the goal of the attacker, the evaluation metric of the learning separately. Let $\wtilde g^{(k)}$ represent the distorted gradient, and $g^{(k)}$ represent the true gradient of client $k$. We denote $\calD_{\protector}^{(k)} = \{\textbf{X}^{(k)}, \textbf{Y}^{(k)}\} = \{(X_1^{(k)},Y_1^{(k)}),\cdots, (X_{m_k}^{(k)},Y_{m_k}^{(k)})\}$ as the dataset of client $k$. Assume the server is a semi-honest attacker, and is aware of the label information $\{Y_1^{(k)}, \cdots, Y_{m_k}^{(k)}\}$.

\paragraph{Learning Phase $1$ of the Attacker}
\begin{itemize}
    \item With the private dataset $\calD_{\protector}^{(k)}$ sampled from the dataset of protector $k$, the protector generates the true gradient $g^{(k)} = \frac{1}{|\calD_{\protector}^{(k)}|}\sum_{i = 1}^{|\calD_{\protector}^{(k)}|} \nabla\calL(\theta, X_{i}^{(k)}, Y_i^{(k)})$;
    \item To protect the private information, protector $k$ uploads the distorted gradient $\wtilde g^{(k)}$ to the server; 
    \item Upon observing the distorted information $\wtilde g^{(k)}$, the semi-honest attacker infers the private information.
\end{itemize}


The recovered dataset of the attacker is denoted as $\calD_{\attacker}^{(k)} = \{(\wtilde X_1^{(k)}, Y_1^{(k)}),\cdots, (\wtilde X_{m_k}^{(k)}, Y_{m_k}^{(k)})\}$, and $|\calD_{\attacker}^{(k)}| = |D_{\protector}^{(k)}|$. Let $X_{t,i}^{(k)}$ represent the $i$-th data recovered by the attacker at the $t$-th round of the optimization algorithm, and $X_{i}^{(k)}$ represent the $i$-th original data. For Phase 1, the goal of the attacker is to minimize the gap between the estimated data and the true data, i.e.,
\begin{align}
    \frac{1}{|\calD_{\attacker}^{(k)}|}\sum_{ i = 1}^{|\calD_{\attacker}^{(k)}|}\frac{1}{T}\sum_{t = 1}^T \frac{||X_{t,i}^{(k)} - X_{i}^{(k)}||}{D}.
\end{align}


For semi-honest attackers, the privacy leakage is measured using the gap between the estimated dataset and the original dataset.
\begin{defi}[Privacy Leakage]
\label{defi:Privacy_Leakage}
Let $X_{t,i}^{(k)}$ represent the $i$-th data recovered by the attacker at the $t$-th round of the optimization algorithm, and $X_{i}^{(k)}$ represent the $i$-th original data. The privacy leakage is measured using  
\begin{align}
    \epsilon_p^{(k)} = 1 - \E\left[\frac{1}{|\calD_{\attacker}^{(k)}|}\sum_{ i = 1}^{|\calD_{\attacker}^{(k)}|}\frac{1}{T}\sum_{t = 1}^T \frac{||X_{t,i}^{(k)} - X_{i}^{(k)}||}{D}\right],
\end{align}
where the expectation is taken with respect to the randomness of the mini-batch $\calD_{\attacker}^{(k)}$.\\
\textbf{Remark:} We assume that $||X_{t,i}^{(k)} - X_{i}^{(k)}||\le D$. Therefore, $\epsilon_p^{(k)}\in [0,1]$.
\end{defi}

\begin{defi}[Distorted Feature and Distorted Information]
Let $\calL$ be the loss function, and $Y$ be the feature information. We denote $\wtilde g$ as the distorted information, and ${\wtilde X}$ as the distorted feature satisfying that $\nabla \calL(\theta, \wtilde X, Y) = \wtilde g$.
\end{defi}
Now we introduce an important measurements related to privacy leakage called the \textit{\textit{uplink distortion extent}}.
   \begin{defi}[Uplink Distortion Extent]\label{defi: uplink_distortion_extent}
      The process of sending updated parameter from clients to the server is referred to as uplink. We denote $\theta^{(k)}$ as the original information of client $k$, and $\wtilde{\theta}^{(k)}$ as the distorted information uploaded from client $k$ to the server. The \textit{uplink distortion extent} of client $k$ is $\Delta_{\text{up}}^{(k)} = \| \theta^{(k)} - \wtilde{\theta}^{(k)}\|$, which measures the gap between the original information and the distorted information, it is related to the amount of privacy leakage and is referred to as the \textit{uplink distortion extent}.

      \begin{figure}[!ht]
    \centering
    \includegraphics[width=0.9\linewidth]{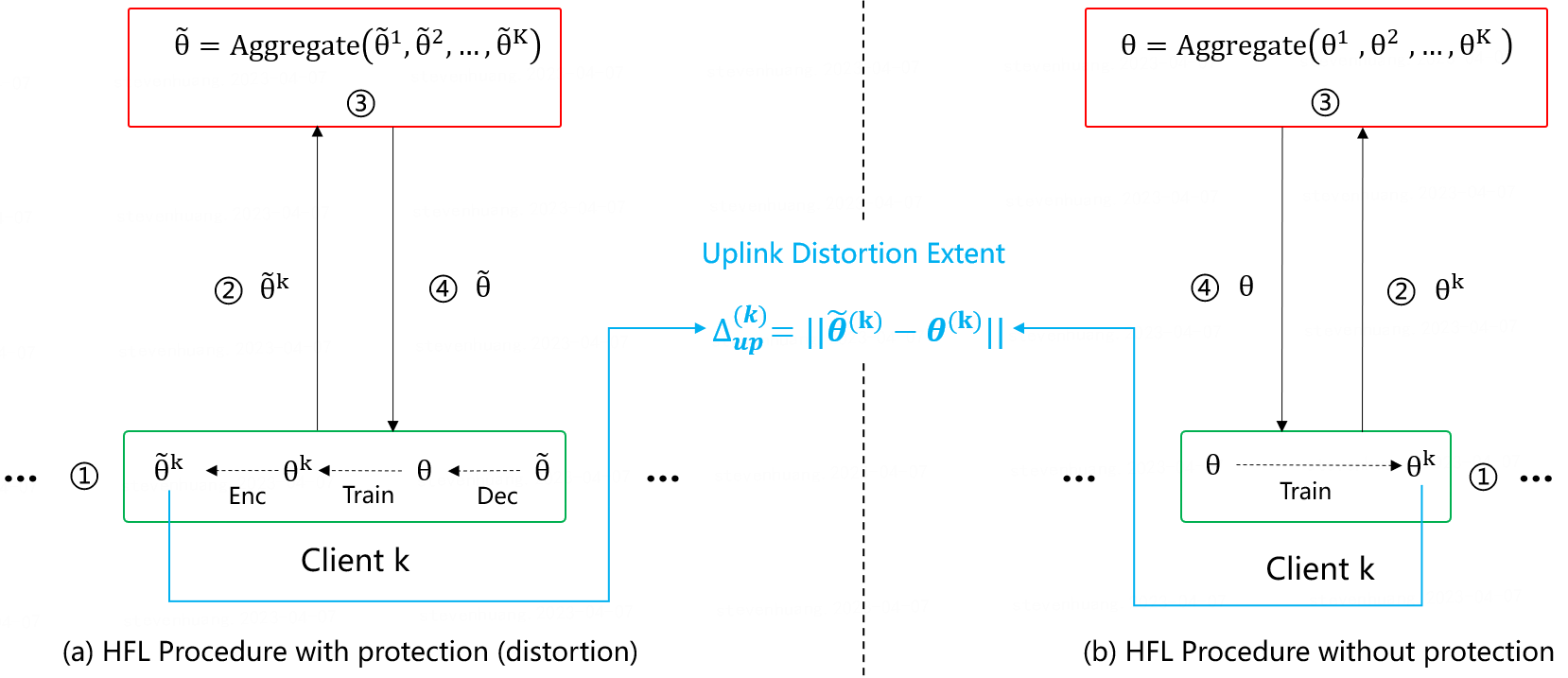}
    \caption{Illustration of \textit{uplink distortion extent}. The left side is HFL training procedure with protection, and the right side is HFL training procedure without protection.}
    \label{fig:uplink_distortion}
\end{figure}
   \end{defi}

\begin{assumption}\label{assump: two-sided Lipschitz}
   Assume that $||X_1 - X_2||\le D$ for any two data. Assume that $c_a ||\nabla \calL(\theta, X_1, Y) - \nabla \calL(\theta, X_2, Y)||\le ||X_1 - X_2||\le c_b ||\nabla \calL(\theta, X_1, Y) - \nabla \calL(\theta, X_2, Y)||$.
\end{assumption}

Let $d$ represent the dimension of the parameter, and $T$ represent the total number of rounds.  The optimization algorithms that guarantee asymptotically optimal regret are all applicable for our analysis, the examples include:
\begin{itemize}
    \item AdaGrad algorithm proposed by \cite{duchi2011adaptive}, achieves an optimal $\Theta(\sqrt{T})$ regret bound. Specifically, the regret bound is $O(\max\{\log d, d^{1-\alpha/2}\sqrt{T}\})$, where $\alpha\in (1,2)$.
    \item Adam algorithm introduced by \cite{kingma2014adam}, achieves an optimal $\Theta(\sqrt{T})$ regret bound. Specifically, the regret bound is $O(\log d\sqrt{T})$ with an improved constant.
\end{itemize}

The two motivating examples lead to the following assumption.

\begin{assumption}[Upper and Lower Bounds for Optimization Algorithms]\label{assump: bounds_for_optimization_alg}
   Let $T$ represent the total number of the learning rounds of the semi-honest attacker. Suppose that $c_0\cdot T^{1/2} \le \sum_{t = 1}^T ||\nabla \calL(\theta, X_t, Y_t) - \nabla \calL(\theta, \wtilde X_t, Y_t)|| = \Theta(T^{1/2}) \le c_2\cdot T^{1/2}$, where $X_t$ represents the dataset reconstructed by the attacker at round $t$, and ${\wtilde X}$ represents the dataset satisfying that $\nabla \calL(\theta, \wtilde X, Y) = \wtilde g$.
\end{assumption}

The reconstruction attack merits in-depth research (\cite{geng2021towards}). However, there is a basic lack of theoretical knowledge regarding how and when gradients can result in the unique recovery of original data (\cite{zhu2020r}). The following theorem illustrates a quantitative relationship between privacy leakage $\epsilon_p^{(k)}$, the size of the training set $|\calD_{\protector}^{(k)}|$ and the \textit{uplink distortion extent} $\Delta_{\text{up}}^{(k)}$. See Appendix \ref{appendix: analysis_for_privacy_distortion_and_datasize}
for more detailed analysis and proof.

\begin{thm}[Upper Bound for Privacy Leakage]\label{thm: privacy_distortion_and_datasize}
   Let $T$ represent the total number of the learning rounds of the semi-honest attacker. Let \pref{assump: two-sided Lipschitz} and \pref{assump: bounds_for_optimization_alg} hold. Assume that $\Delta_{\text{up}}^{(k)}\ge \frac{2 c_2\cdot c_b}{c_a\sqrt{T}}$, where $c_a, c_2$ and $c_b$ are introduced in \pref{assump: two-sided Lipschitz} and \pref{assump: bounds_for_optimization_alg}. With probability at least $1 - \gamma^{(k)}$, 
  \begin{align}
    \epsilon_p^{(k)}\le 1 + \sqrt{\frac{\ln(2/\gamma^{(k)})}{2|\calD_{\protector}^{(k)}|}} - \frac{c_a}{2D}\cdot\Delta_{\text{up}}^{(k)}.
  \end{align}
\end{thm}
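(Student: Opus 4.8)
The plan is to rewrite the claim as a lower bound on the expected reconstruction error and then split the work into one deterministic geometric estimate and one concentration step. Writing $A = \frac{1}{|\calD_{\attacker}^{(k)}|}\sum_i \frac{1}{T}\sum_t \frac{\|X_{t,i}^{(k)} - X_i^{(k)}\|}{D}$, the definition of privacy leakage gives $\epsilon_p^{(k)} = 1 - \E[A]$, so the target inequality is equivalent to $\E[A] \ge \frac{c_a}{2D}\Delta_{\text{up}}^{(k)} - \sqrt{\ln(2/\gamma^{(k)})/(2|\calD_{\protector}^{(k)}|)}$. I would prove this by first establishing the \emph{deterministic} bound $A \ge \frac{c_a}{2D}\Delta_{\text{up}}^{(k)}$ on the realized mini-batch average, and then using Hoeffding's inequality to pass from $A$ to the population expectation $\E[A]$.

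For the deterministic step, fix a recovered sample $i$ and a round $t$, and decompose via the triangle inequality: $\|X_{t,i}^{(k)} - X_i^{(k)}\| \ge \|\wtilde X_i^{(k)} - X_i^{(k)}\| - \|X_{t,i}^{(k)} - \wtilde X_i^{(k)}\|$, where $\wtilde X_i^{(k)}$ is the distorted pre-image satisfying $\nabla\calL(\theta, \wtilde X_i^{(k)}, Y_i^{(k)}) = \wtilde g_i^{(k)}$. The first term is the genuine distortion: by the left inequality in \pref{assump: two-sided Lipschitz} it is at least $c_a \|\nabla\calL(\theta, \wtilde X_i^{(k)}, Y) - \nabla\calL(\theta, X_i^{(k)}, Y)\| = c_a \Delta_{\text{up}}^{(k)}$. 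The second term is the attacker's optimization error: by the right inequality of \pref{assump: two-sided Lipschitz} it is at most $c_b \|\nabla\calL(\theta, X_{t,i}^{(k)}, Y) - \wtilde g_i^{(k)}\|$, and summing over $t$ and invoking \pref{assump: bounds_for_optimization_alg} bounds it by $c_b c_2 T^{1/2}$. Averaging over the $T$ rounds gives $\frac{1}{T}\sum_t \|X_{t,i}^{(k)} - X_i^{(k)}\| \ge c_a \Delta_{\text{up}}^{(k)} - c_b c_2 / T^{1/2}$; the hypothesis $\Delta_{\text{up}}^{(k)} \ge 2 c_2 c_b/(c_a\sqrt{T})$ lets the optimization term be absorbed into half of the distortion term, leaving $\frac{c_a}{2}\Delta_{\text{up}}^{(k)}$. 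Dividing by $D$ and averaging over $i$ yields $A \ge \frac{c_a}{2D}\Delta_{\text{up}}^{(k)}$.

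For the concentration step, note that each summand $\frac{1}{T}\sum_t \frac{\|X_{t,i}^{(k)} - X_i^{(k)}\|}{D}$ lies in $[0,1]$ because $\|X_{t,i}^{(k)} - X_i^{(k)}\| \le D$ (the remark after \pref{defi:Privacy_Leakage}). Treating the privacy-leakage expectation as a population expectation and $A$ as the empirical average over the $|\calD_{\protector}^{(k)}| = |\calD_{\attacker}^{(k)}|$ recovered samples, the two-sided Hoeffding inequality gives $\Pr(|A - \E[A]| \ge s) \le 2\exp(-2|\calD_{\protector}^{(k)}| s^2)$; choosing $s = \sqrt{\ln(2/\gamma^{(k)})/(2|\calD_{\protector}^{(k)}|)}$ yields $\E[A] \ge A - s$ with probability at least $1 - \gamma^{(k)}$. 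Combining with $A \ge \frac{c_a}{2D}\Delta_{\text{up}}^{(k)}$ and $\epsilon_p^{(k)} = 1 - \E[A]$ produces the stated bound.

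\textbf{Main obstacle.} The delicate point is aligning the single scalar $\Delta_{\text{up}}^{(k)}$ (defined as the norm of the uploaded-information distortion) with the per-sample gradient differences generated by the triangle-inequality split, and making sure the two directions of \pref{assump: two-sided Lipschitz} are applied to the correct pairs — the $c_a$ (lower) direction to the true-versus-distorted pre-images, and the $c_b$ (upper) direction to the iterate-versus-target pre-images. A secondary care point is that \pref{assump: bounds_for_optimization_alg} must be read as controlling the attacker's cumulative gradient residual against the fixed distorted target $\wtilde g_i^{(k)}$ per recovered sample, so that the $c_2 T^{1/2}$ regret cap can legitimately be invoked inside the per-$i$ estimate before averaging.
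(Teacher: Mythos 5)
Your proposal follows essentially the same route as the paper's proof: the same triangle-inequality split of $\|X_{t,i}^{(k)} - X_i^{(k)}\|$ into a distortion term and an attacker-optimization term, the same use of the two directions of \pref{assump: two-sided Lipschitz} ($c_a$ on the true-versus-distorted pair, $c_b$ on the iterate-versus-target pair), the same absorption of the $c_b c_2 T^{-1/2}$ residual via the hypothesis $\Delta_{\text{up}}^{(k)} \ge 2c_2 c_b/(c_a\sqrt{T})$, and the same Hoeffding concentration step over the $|\calD_{\protector}^{(k)}|$ samples.

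One correction to your deterministic step: the per-sample identity $\|\nabla\calL(\theta,\wtilde X_i^{(k)}, Y_i^{(k)}) - \nabla\calL(\theta, X_i^{(k)}, Y_i^{(k)})\| = \Delta_{\text{up}}^{(k)}$ you assert is not true for each $i$; the uplink distortion $\Delta_{\text{up}}^{(k)} = \|\delta^{(k)}\|$ is the norm of the \emph{average} of these per-sample gradient differences, and individual differences may have larger or smaller norms. This is precisely the obstacle you flag at the end, and the paper resolves it by first averaging the $c_a$-scaled gradient-difference norms over $i$ and then applying the triangle inequality (average of norms at least the norm of the average), which yields $c_a\Delta_{\text{up}}^{(k)}$ in the aggregate. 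Since that inequality points in the direction your argument needs, your conclusion $A \ge \frac{c_a}{2D}\Delta_{\text{up}}^{(k)}$ survives after this one-line repair; with it inserted, your proof coincides with the paper's.
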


\subsection{PAC Sample Complexity Lower Bound of the Attacker}
\label{sec:PAC_Learnability_of_the_Attackers}

 In this section, we focus on Learning Phase 2 of the attacker, and analyze PAC learnability for the semi-honest attackers. We measure privacy using the distance between the inferred data and the private data, which is referred to as data privacy. Aside from the physical distance, the utility of the inferred data is also a crucial component. To evaluate the utility, we propose Learning Phase 2 for the attacker.

We are curious about the least amount of samples required for the attacker to achieve an error of $\epsilon$ with probability at least $1 - \delta$. We provide a lower bound for the number of samples for achieving PAC learnability in terms of the amount of distortion, and show that an exponential amount of samples is necessary to achieve PAC learnability.


   

\paragraph{Learning Phase $2$ of the Attacker}


We denote the dataset of client $k$ recovered by the attacker as ${\calD_{\attacker}^{(k)}} = \{(\wtilde X_1^{(k)}, Y_1^{(k)}),\cdots, (\wtilde X_{m_k}^{(k)}, Y_{m_k}^{(k)})\}$, and $|\calD_{\attacker}^{(k)}| = |\calD_{\protector}^{(k)}|$.

Let $h\in\calH$ represent the classifier of the learner, and $c$ represent the true classifier.
\begin{itemize}
    \item The inferred dataset of the attacker is denoted as ${\calD_{\attacker}^{(k)}} = \{(\wtilde X_1^{(k)}, Y_1^{(k)}),\cdots, (\wtilde X_{m_k}^{(k)}, Y_{m_k}^{(k)})\}$;
    \item With the inferred dataset ${\calD_{\attacker}^{(k)}}$, the attacker trains a classifier $h\in\calH$;
    \item Sample a true labeled example $(\truextest, \trueytest)$. 
\end{itemize} 

The learning problem of the attacker is specified as $\calF = (\calX, \calY, \calC, \calP, \calH)$, where $\calX$ represents the instance set, $\calY$ represents the label set, $\calC$ represents the concept class, and $\calH$ represents the hypothesis class. Let $\calP$ represent a class of distributions over instances $\calX$. Let the probability space be $(\calX, \distribution, \distance)$, where $\distribution\in\calP$ represents a distribution, and $\distance$ represents a metric for measuring the distance. Let $c\in\calC$ represent the true classifier. The goal of the attacker is to train a classifier based on the inferred dataset, and minimize the risk of the classifier on the true dataset. The question we are concerned about is: \textit{when is it possible for the attacker to achieve the $(\epsilon, \delta)$-PAC learning?} $(\epsilon, \delta)$-PAC learnability requires the error of the classifier $h$ is at most $\epsilon$ on the original dataset under bounded distortion of the model parameter, with probability at least $1-\delta$.


\paragraph{Risk of the Attacker}   
    The risk of the attacker is taken with respect to the distribution $\wtilde P$ and evaluated on the inferred dataset $\wtilde \calD$,
       \begin{align}
        \risk(h, c) = \Pr_{\distpara\sim\distdistribution}[h(\distpara)\neq c(\distpara)]. 
    \end{align}
    


$\risk$ depicts the classification loss with the true information, and we also need a quantity that measures the classification loss with the distorted information. The attacker could perturb the victim's training data but does not know the model initialization, training routine, or architecture of the victim. The victim trains a new model with the poisoned dataset. The performance of the attacker is evaluated according to the accuracy of the victim model on the original (uncorrupted) dataset (\cite{diochnos2019lower, fowl2021adversarial}). 
\paragraph{Adversarial Risk of the Attacker}
   The adversarial risk of the attacker is defined as
\begin{align}
    \advrisk(h,c,\Delta) = \Pr_{\distpara\sim\distdistribution}[\exists \truepara \in\calX: d(\truepara, \distpara)\le \Delta, h(\truepara)\neq c(\truepara)],
\end{align}
where the budget $\Delta$ measures the amount of distortion that the protector could introduce considering the constraint on the utility loss. The limitation is measured using some metric $d$ defined over the input space $\calX$ and a budget $\Delta$ on the amount of perturbations that the protector can introduce.

\paragraph{The Goals of the Attacker and the Protector}
The goal of the attacker is to learn a robust $h$ to minimize the adversarial risk from the inferred dataset. The goal of the protector is to increase the adversarial risk of the attacker, with a small amount of distortion on the transmitted model parameter. The protector aims at finding the label of the original untampered point $\truepara$ by only having its corrupted version $\distpara$. Therefore, the success criterion of the protector is to reach $$d(\truepara, \distpara)\le \Delta, h(\truepara)\neq c(\truepara).$$

\begin{assumption}\label{assump: about_close_functions}
For all $1\ge\epsilon\ge 2^{-\Theta(d)}$, there exist two classifiers $c_1, c_2$ (from $\calX$ to $\calY$) satisfying that $\Pr_{\distpara\sim P}[c_1(\distpara)\neq c_2(\distpara)] = \Theta(\epsilon)$, where $P$ represents a distribution over $\calX$.
\end{assumption}

\begin{assumption}\label{assump: normal_levy_faimily}
Assume that the domain of the protected information is a Normal Levy Family. 
\end{assumption}








The classifier is trained using the original dataset and evaluated on the distorted dataset. The following theorem provides a lower bound for sample complexity for PAC learnability of an attacker. It measures the relationship between the number of samples and the distortion extent, and also provides a relationship between distortion extent and model privacy. The high-level idea is: If the size of the inferred dataset $\wtilde \calD$ is not exponentially large, then the risk of any learner on the inferred dataset is significantly larger than $0$, and then the risk of any learner on the original dataset is at least $2^{\Omega(\Delta)}$.






\begin{thm}[Lower bound for PAC Learnability of the Attackers]\label{thm: exponential_lower_bound_mt}
    Let \pref{assump: two-sided Lipschitz}, \pref{assump: about_close_functions} and \pref{assump: normal_levy_faimily} hold. Assume $1 > \delta > 0.5$. For any $(\epsilon, \delta)$-PAC learning attacker, the number of samples associated with protector $k$ should be at least
   \begin{align}
       |\calD_{\attacker}^{(k)}|\ge \min\{2\delta - 1, 1-\epsilon\}\cdot 2^{c_a^2{\Delta_{\text{up}}^{(k)}}^2},
   \end{align}
   where $\Delta_{\text{up}}^{(k)}$ represents the \textit{uplink distortion extent} of client $k$, and $c_a$ is introduced in \pref{assump: two-sided Lipschitz}.
\end{thm}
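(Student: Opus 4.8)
The plan is to prove this lower bound by a reduction showing that an attacker equipped with subexponentially many distorted samples cannot drive its adversarial risk below $\epsilon$ with confidence $1-\delta$, so that $(\epsilon,\delta)$-PAC learnability forces the sample budget to be exponential. Each of the three hypotheses plays a distinct role: \pref{assump: about_close_functions} supplies the hard instance, \pref{assump: two-sided Lipschitz} converts the uplink distortion budget into a feature-space budget, and \pref{assump: normal_levy_faimily} provides the concentration of measure that produces the exponential factor. Concretely, I would first use \pref{assump: about_close_functions} to fix two ground-truth classifiers $c_1,c_2:\calX\to\calY$ whose disagreement region $A=\{x:c_1(x)\neq c_2(x)\}$ has measure $\Pr_{x\sim\distribution}[x\in A]=\Theta(\epsilon)$, and let nature choose the true concept $c\in\{c_1,c_2\}$; the attacker observes only distorted samples and must output a hypothesis $h\in\calH$ with small adversarial risk on the clean distribution.

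Second, I would translate the distortion from the uploaded-information space into the input space. Since the attacker only ever sees the distorted gradient and reconstructs $\distpara$ from it, the left inequality of \pref{assump: two-sided Lipschitz} turns a perturbation of magnitude $\Delta_{\text{up}}^{(k)}$ in the uploaded information into a feature perturbation $\|\truepara-\distpara\|\ge c_a\Delta_{\text{up}}^{(k)}$. Hence the effective adversarial budget available in $\calX$ is $\Delta=c_a\Delta_{\text{up}}^{(k)}$, and it is this quantity that will occupy the exponent.

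Third --- and this is the heart of the argument --- I would invoke the concentration inequality guaranteed by the normal L\'evy family. In such a space any fixed set of non-negligible measure has a $\Delta$-neighbourhood of measure at least $1-2^{-\Omega(\Delta^2)}$; applied to the error set of a candidate $h$ (a $1$-Lipschitz distance functional), this shows that the collar of feature points from which a single distorted sample could distinguish $c_1$ from $c_2$ --- equivalently, from which the adversary fails to flip the label --- has measure at most $2^{-c_a^2(\Delta_{\text{up}}^{(k)})^2}$. Thus each of the $|\calD_{\attacker}^{(k)}|$ distorted samples is ``informative'' with probability at most $2^{-c_a^2(\Delta_{\text{up}}^{(k)})^2}$, so by a union/tail bound the learner sees no informative sample --- and therefore cannot separate $c_1$ from $c_2$, forcing $\advrisk(h,c,\Delta)>\epsilon$ with probability exceeding $\delta$ on whichever concept was realized --- unless $|\calD_{\attacker}^{(k)}|\ge \min\{2\delta-1,\,1-\epsilon\}\cdot 2^{c_a^2(\Delta_{\text{up}}^{(k)})^2}$. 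The factors $2\delta-1$ and $1-\epsilon$ are exactly what the probabilistic conversion of ``no informative sample'' into ``fails $(\epsilon,\delta)$-PAC'' produces, and the regime $1>\delta>0.5$ is used precisely to keep $2\delta-1$ positive.

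I expect the main obstacle to be the concentration step: one must check that the constants of the normal L\'evy family yield the clean exponent $c_a^2(\Delta_{\text{up}}^{(k)})^2$ after the Lipschitz factor $c_a$ is folded in, and, more delicately, that the event ``this sample is informative'' is genuinely contained in an exponentially thin collar around the decision boundary rather than merely correlated with it. By contrast, the instance construction from \pref{assump: about_close_functions}, the Lipschitz translation, and the final union/tail bookkeeping that extracts the $\min\{2\delta-1,1-\epsilon\}$ prefactor should be comparatively routine.
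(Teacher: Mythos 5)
The central step of your plan---``each distorted sample is informative with probability at most $2^{-c_a^2(\Delta_{\text{up}}^{(k)})^2}$''---is a genuine gap, and it is exactly the point you flagged as delicate; unfortunately it is not repairable as stated. Under your own construction the informative event is contained in $\{x\in A\}$ with $\mu(A)=\Theta(\epsilon)$, which is not exponentially small, and L\'evy concentration cannot make it so: in a normal L\'evy family the exponentially thin sets are complements of $\Delta$-neighborhoods of sets of measure at least $1/2$, a hypothesis your per-sample event does not satisfy. You could try to shrink ``informative'' to samples lying \emph{deep} inside $A$ (at distance more than $\Delta$ from $A^c$); that set is indeed exponentially thin, but you would then have to prove that boundary samples are uninformative, which is false in general---a distorted pair $(\wtilde X, y)$ all of whose plausible preimages in $A^c$ carry the label $1-y$ under both classifiers still identifies the true concept. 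Worse, with $\mu(A)=\Theta(\epsilon)$ held fixed, once $|\calD_{\attacker}^{(k)}|\gtrsim 1/\epsilon$ the attacker sees samples from $A$ with high probability, so the indistinguishability you need collapses at a polynomial, not exponential, sample size.

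The paper extracts the exponential from a different place. It calibrates the pair $c_1,c_2$ from \pref{assump: about_close_functions} so that the disagreement region has measure $\Theta\bigl(\gamma/|\calD_{\attacker}^{(k)}|\bigr)$, \emph{inversely proportional to the sample size}; then with constant probability (this is where the factor $2\delta-1$ enters) no training sample lands in it, and the plain risk of any learner is at least $\Omega\bigl((2\delta-1)/|\calD_{\attacker}^{(k)}|\bigr)$ (\pref{lem: lower_bound_for_loss_app}). Concentration is then applied not to training samples but to the error set of the \emph{output} hypothesis, via a two-stage argument that you also skip: since concentration only applies to sets of measure at least $1/2$, the paper first boosts the adversarial risk above $1/2$ using part of the budget by a contradiction argument, and only then above $1-\gamma$ with the rest (\pref{lem: prepare_loss_to_AdvLoss_app}). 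The upshot (\pref{lem: delta_and_lower_bound_for_attacking}) is that plain risk $\eta$ amplifies to adversarial risk $>\epsilon$ whenever the feature-space budget exceeds roughly $\sqrt{\ln(k_1/\eta)+\ln(k_1/(1-\epsilon))}/\sqrt{k_2 d}$, so the amplification is available precisely when $1/|\calD_{\attacker}^{(k)}|\gtrsim e^{-\Omega((\Delta^{(k)})^2)}$; contradiction with $(\epsilon,\delta)$-PAC learnability then forces $|\calD_{\attacker}^{(k)}|\ge\min\{2\delta-1,1-\epsilon\}\cdot 2^{(\Delta^{(k)})^2}$, and the Lipschitz conversion $\Delta^{(k)}\ge c_a\Delta_{\text{up}}^{(k)}$ (\pref{thm: relationship_between_model_distortion_and_data_distortion})---the one step of your outline that is correct and matches the paper---yields the stated bound. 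In short, the exponential comes from comparing the $\Omega(1/m)$ risk of the output against the concentration function $e^{-\Omega(\Delta^2)}$, not from a union bound over exponentially rare informative samples; any correct proof along your lines would have to be rebuilt around that structure.
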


\textbf{Remark:} 
Distinct corruption settings and risk definitions correspond to distinct levels of lower bounds. For example, linear, polynomial and exponential with respect to the distortion extent.




\section{On the Utility Loss of the Protector in Privacy-Preserving Federated Learning}
\label{sec:utility_loss_analysis}

 

\cite{zhang2022no, zhang2022trading} measured utility via the model performance on the given dataset of the client. In contrast, we propose a more general measurement for the utility of the client, assuming the dataset of each client is sampled i.i.d. from an unknown distribution. Before elaborating on the measurement for utility loss, we introduce the \textit{generalized protected loss} and the \textit{empirical original loss} first. For parameter distortion, the \textit{generalized protected loss} is defined as
\begin{align}
  \calL^{(k)}_{\text{exp}}(\theta + \delta) = \E_{Z\sim P}[\calL^{(k)}(\theta + \delta , Z)],  
\end{align}
where $\delta$ is the distortion applied to the model parameter $\theta$ for protecting privacy.  $\delta$ can take various forms, including noise and compression, and it is typically bounded by private budget, communication cost, and other constraints. The bounds of $\delta$ are generally determined according to the specific domain. We denote the \textit{empirical original loss} as 
 \begin{align}
     \calL^{(k)}_{\text{emp}}(\theta, Z) = \frac{1}{m_k}\sum_{i = 1}^{m_k} \calL^{(k)}(\theta, Z_i^{(k)}).
 \end{align}
 

With the aforementioned loss functions, we are now ready to introduce the definition of utility loss. 
\begin{defi}[Utility Loss]
\label{defi:generalization_error_pd}
   Let $\calD_{\protector}^{(k)} = \{Z_1^{(k)}, \cdots, Z_{m_k}^{(k)}\}$. We denote the \textit{empirical original loss} as $\calL^{(k)}_{\text{emp}}(\theta, \calD_{\protector}^{(k)}) = \frac{1}{m_k}\sum_{i = 1}^{m_k} \calL^{(k)}(\theta, Z_i^{(k)})$, and denote the \textit{generalized protected loss} as $\calL^{(k)}_{\text{exp}}(\theta + \delta) = \E_{Z\sim P}[\calL^{(k)}(\theta + \delta , Z)]$. The utility loss of client $k$ is measured using 
   \begin{align}
       \epsilon_u^{(k)} &= |\calL^{(k)}_{\text{exp}}(\theta + \delta) - \calL^{(k)}_{\text{emp}}(\theta)|.
   \end{align}
\end{defi}


Another important metric we consider in the PAC framework is the training efficiency, which is defined as follows. 
\begin{defi}[Training Efficiency]
\label{defi:Training_Efficiency}
   The training efficiency of client $k$ is measured using the size of the training set of client $k$, denoted as $\epsilon_e^{(k)}$. 
\end{defi}

\subsection{On the Price of Preserving Privacy}\label{sec:Generalized_error_Bound}


In this section, we provide an upper bound for the utility loss of the protector with i.i.d. samples. The utility loss measures the gap between the performance of the model on a shifted distribution and that on the training dataset.
We focus on the generalization bound of learning algorithms by investigating the utility loss on the distorted information from the training set. First we introduce an important measurement related to the utility loss called \textit{two-way distortion extent}.
\begin{defi}[Two-way Distortion Extent]\label{defi: two_way_distortion_extent}
      The process of sending updated parameter from clients to the server is referred to as uplink. The process of sending parameters from the server to clients is referred to as downlink. We denote $\theta$ as the original information, and $\wtilde{\theta}$ as the distorted information downloaded from the server and initialized for local optimization. The \textit{two-way distortion extent} 
      \begin{align}
         \Delta_{\text{two}} = \| \text{Dec}(\wtilde\theta) - \theta\|
      \end{align}
      measures the gap between the original information and the distorted information during the whole process of uplink and downlink, and is related to utility loss.
\end{defi}

\begin{figure}[!ht]
    \centering
    \includegraphics[width=1\linewidth]{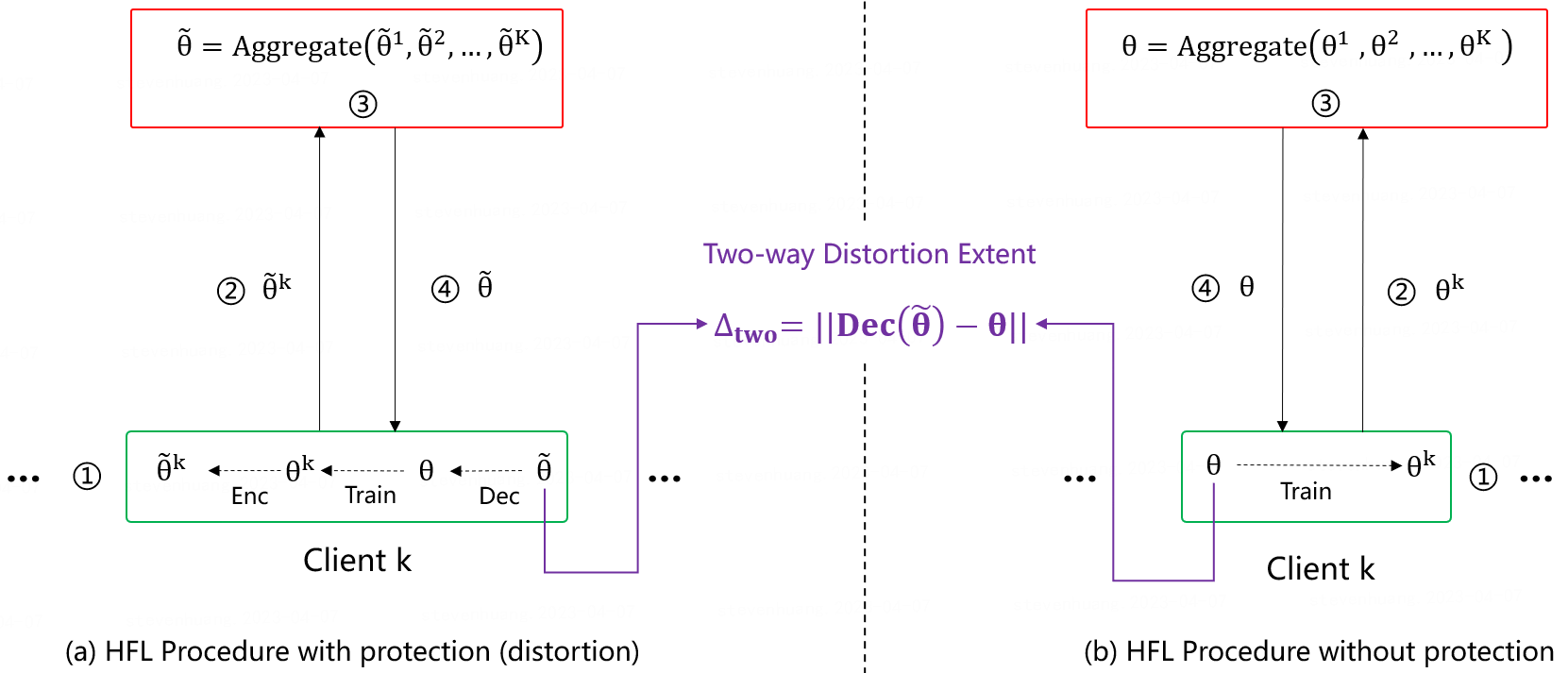}
    \caption{Illustration of \textit{two-way distortion extent}. The left side is HFL training procedure with protection, and the right side is HFL training procedure without protection.}
    \label{fig:two_way_distortion}
\end{figure}

\begin{defi}[$\lambda^{(k)}$-cover]\label{defi: r_cover}
A $\lambda^{(k)}$-cover of $(\calX_{\protector}^{(k)}, \|\cdot\|_2)$ is a point set $\{\mathbf u\}\subset\calX_{\protector}^{(k)}$ such that for any $\mathbf v\in\calD_{\protector}^{(k)}$, there exists $\mathbf u$ satisfies $\|\mathbf u - \mathbf v\|_2\le \lambda^{(k)}$.
\end{defi}


Now we analyze the generalization bounds against semi-honest attackers, with \textit{Randomization Mechanism} be the protection mechanism. The following theorem measures the price of preserving privacy in terms of utility loss using the \textit{two-way distortion extent}, assuming the training set of client $k$ consists of $|\calD_{\protector}^{(k)}|$ independent and identically distributed samples that follows distribution $P$.



\begin{thm}[Utility Loss against Semi-honest Attackers]\label{thm: relation_utility_loss}
    Let $\theta\in\R^d$ represent the model parameter. Assume that $\max_{Z\in\calD_{\protector}^{(k)}}|\calL^{(k)}(\theta + \delta , Z)|\le M^{(k)}$. We denote the empirical loss as $\calL^{(k)}_{\text{emp}}(\theta, Z) = \frac{1}{m_k}\sum_{i = 1}^{m_k} \calL^{(k)}(\theta, Z_i^{(k)})$, and denote the expected loss as $\calL^{(k)}_{\text{exp}}(\theta + \delta) = \E_{Z\sim P}[\calL^{(k)}(\theta + \delta , Z)]$. With probability at least $1 - \eta^{(k)}$,
   \begin{align}
       \epsilon_u^{(k)} &= |\calL^{(k)}_{\text{exp}}(\theta + \delta) - \calL^{(k)}_{\text{emp}}(\theta)|\nonumber\\
       &\le C\cdot\lambda^{(k)} + C\cdot\Delta_{\text{two}} + M^{(k)}\sqrt{\frac{2N^{(k)}\ln 2 + 2\ln(1/\eta^{(k)})}{|\calD_{\protector}^{(k)}|}},
   \end{align}
   where $\calD_{\protector}^{(k)} = \{Z_1^{(k)}, \cdots, Z_{m_k}^{(k)}\}$ represents the training set of client $k$, $\lambda^{(k)} > \Delta_{\text{two}}$ represents a constant, $M^{(k)}$ is a constant, $N^{(k)} = (2d)^{2D/(\lambda^{(k)})^2 + 1}$ represents the covering number of $\lambda^{(k)}$-cover (Definition \ref{defi: r_cover}), $\Delta_{\text{two}}$ represents the \textit{two-way distortion extent} (Definition \ref{defi: two_way_distortion_extent}), and $\|\delta\| = \Delta_{\text{two}}$.
\end{thm}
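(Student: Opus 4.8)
The plan is to split the utility loss by inserting the \emph{empirical protected loss} $\calL^{(k)}_{\text{emp}}(\theta+\delta) = \frac{1}{m_k}\sum_{i=1}^{m_k}\calL^{(k)}(\theta+\delta, Z_i^{(k)})$ as an intermediate quantity and applying the triangle inequality:
\begin{align*}
|\calL^{(k)}_{\text{exp}}(\theta+\delta) - \calL^{(k)}_{\text{emp}}(\theta)| \le |\calL^{(k)}_{\text{exp}}(\theta+\delta) - \calL^{(k)}_{\text{emp}}(\theta+\delta)| + |\calL^{(k)}_{\text{emp}}(\theta+\delta) - \calL^{(k)}_{\text{emp}}(\theta)|.
\end{align*}
The second term isolates the effect of the distortion $\delta$ on the empirical average, while the first is a genuine generalization gap evaluated at the \emph{protected} parameter $\theta+\delta$, for which the uniform bound $\max_Z|\calL^{(k)}(\theta+\delta,Z)|\le M^{(k)}$ is exactly the quantity available. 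For the distortion term I would assume the loss is Lipschitz in the parameter and bound it termwise by $\frac{1}{m_k}\sum_i |\calL^{(k)}(\theta+\delta, Z_i^{(k)})-\calL^{(k)}(\theta,Z_i^{(k)})| \le C\|\delta\| = C\Delta_{\text{two}}$, using $\|\delta\|=\Delta_{\text{two}}$; this produces the $C\cdot\Delta_{\text{two}}$ summand.

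The generalization gap is the heart of the argument, and I would handle it with a discretization step followed by a concentration inequality. Fix the $\lambda^{(k)}$-cover $\{u_1,\dots,u_{N^{(k)}}\}$ of the data space (Definition \ref{defi: r_cover}) and map each sample to its nearest covering point, thereby partitioning $\calX_{\protector}^{(k)}$ into $N^{(k)}$ cells. Replacing $\calL^{(k)}(\theta+\delta,Z)$ by its value at the cell representative costs at most $C\lambda^{(k)}$ in \emph{both} the population and the empirical averages, by Lipschitzness in the data argument together with the fact that each sample lies within $\lambda^{(k)}$ of its representative; this yields the $C\cdot\lambda^{(k)}$ summand. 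After discretization, the difference between the expected and empirical discretized losses equals $\sum_{j}(p_j-\hat p_j)\,\calL^{(k)}(\theta+\delta,u_j)$, where $p_j$ and $\hat p_j$ are the true and empirical masses of cell $j$; since each representative value is bounded by $M^{(k)}$, this is at most $M^{(k)}\|p-\hat p\|_1$, reducing everything to controlling the $\ell_1$ fluctuation of the empirical distribution over $N^{(k)}$ cells.

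The decisive tool is then the Bretagnolle--Huber--Carol inequality for a multinomial, $\Pr[\|p-\hat p\|_1\ge t]\le 2^{N^{(k)}}\exp(-|\calD_{\protector}^{(k)}|\,t^2/2)$. Setting the right-hand side equal to $\eta^{(k)}$ and solving for $t$ gives exactly $t=\sqrt{(2N^{(k)}\ln 2 + 2\ln(1/\eta^{(k)}))/|\calD_{\protector}^{(k)}|}$, so the cell-mass term contributes $M^{(k)}\sqrt{(2N^{(k)}\ln 2 + 2\ln(1/\eta^{(k)}))/|\calD_{\protector}^{(k)}|}$. Summing the three contributions on the single high-probability event then yields the claimed bound. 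The main obstacle is precisely this concentration step: a naive union bound over the $N^{(k)}$ covering points would only produce a $\ln N^{(k)}$ dependence, whereas the stated $N^{(k)}\ln 2$ term forces the use of the sharper $\ell_1$/multinomial deviation inequality, and one must verify carefully that the two discretization errors (in the population and empirical losses) can be absorbed into a single $C\lambda^{(k)}$ without disturbing the concentration constants.
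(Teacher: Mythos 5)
Your proposal is correct and follows essentially the same route as the paper's proof: partition the data space via the $\lambda^{(k)}$-cover, use Lipschitzness in the data argument and in the parameter to extract the $C\cdot\lambda^{(k)}$ and $C\cdot\Delta_{\text{two}}$ terms, and control the cell-mass fluctuations (bounded by $M^{(k)}$) with the Bretagnolle--Huber--Carol multinomial inequality, which is exactly the paper's Lemma~\ref{lem: concentratioS_inequality}. The only differences are cosmetic: you split off the distortion term first via the intermediate quantity $\calL^{(k)}_{\text{emp}}(\theta+\delta)$ whereas the paper uses cell-conditional expectations weighted by empirical masses, and your double discretization yields $2C\lambda^{(k)}$ where the paper's within-cell bound (which is itself loose about the cell diameter) writes $C\lambda^{(k)}$ --- a constant-factor discrepancy absorbed by the unspecified Lipschitz constant $C$ and the free choice of $\lambda^{(k)}$.
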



\subsection{Trade-off Between Utility, Privacy and Efficiency for General Protection Mechanisms}
\label{subsec:Trade_off_for_General_Protection_Mechanisms}

Before elaborating on the theoretical trade-off for the protection mechanism, we first derive the relationship between the \textit{two-way distortion extent} and the \textit{uplink distortion extent}. The following theorem illustrates the relationship between $\delta$ and $\Delta_{\text{two}}$, shows that $\Delta_{\text{up}}^{(k)}$ is equal to the norm of $\delta^{(k)}$, and further highlights the relationship between the \textit{uplink distortion extent} and the \textit{two-way distortion extent}. 


\begin{thm}\label{thm: relation_between_two_link_distortion_and_delta_mt}
   Let $\Delta_{\text{two}}$ denote the \textit{two-way distortion extent} (\pref{defi: two_way_distortion_extent}), and $\Delta_{\text{up}}^{(k)}$ denote the \textit{uplink distortion extent} (\pref{defi: uplink_distortion_extent}). The protector $k$ distorts the gradient according to \pref{eq: distortion_approach}, $\delta^{(k)}$ is introduced therein, and $\delta = \wtilde\theta - \theta$. Assume that $\text{Dec}(\wtilde\theta) = \wtilde\theta$, then we have
   \begin{align}
          \Delta_{\text{two}} = \|\delta\|,\quad \text{and} \quad\Delta_{\text{up}}^{(k)} = \|\delta^{(k)}\|.
      \end{align}
Furthermore, assume that $\delta^{(1)} = \cdots = \delta^{(K)}$. Then we have
\begin{align}
    \Delta_{\text{two}} = \Delta_{\text{up}}^{(k)}.
\end{align}
\end{thm}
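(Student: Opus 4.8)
The plan is to establish the three asserted identities in order, each by unwinding the relevant definition and substituting the theorem's hypotheses. I would begin with $\Delta_{\text{two}} = \|\delta\|$: \pref{defi: two_way_distortion_extent} gives $\Delta_{\text{two}} = \|\text{Dec}(\wtilde\theta) - \theta\|$, and the standing assumption $\text{Dec}(\wtilde\theta) = \wtilde\theta$ reduces the right-hand side to $\|\wtilde\theta - \theta\|$. Since the theorem introduces $\delta = \wtilde\theta - \theta$, this is exactly $\|\delta\|$, completing the first claim with no further work.

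Next I would treat $\Delta_{\text{up}}^{(k)} = \|\delta^{(k)}\|$ by identifying the uplink message of client $k$ with its (distorted) gradient, since in the protocol of \pref{subsec:learning_setting} the quantity a client transmits is a gradient. The unprotected uplink quantity is $g^{(k)} = \frac{1}{|\calD_{\protector}^{(k)}|}\sum_{i} \nabla\calL(\theta_t, X_i^{(k)}, Y_i^{(k)})$, while the transmitted quantity is $\wtilde g^{(k)}$ of \pref{eq: distortion_approach}; subtracting, $\wtilde g^{(k)} - g^{(k)} = \delta^{(k)}$. Feeding this into \pref{defi: uplink_distortion_extent} gives $\Delta_{\text{up}}^{(k)} = \|\wtilde g^{(k)} - g^{(k)}\| = \|\delta^{(k)}\|$, the second claim.

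The substantive step is the final identity $\Delta_{\text{two}} = \Delta_{\text{up}}^{(k)}$ under $\delta^{(1)} = \cdots = \delta^{(K)}$. Here I would compare the protected and unprotected server updates of \pref{eq: hfl_step4} starting from a common state, so that the only surviving discrepancy in the aggregated (downlink) model is the weighted perturbation $\sum_{k} w_k \delta^{(k)}$, where $w_k = |\calD_{\protector}^{(k)}| / \sum_{j} |\calD_{\protector}^{(j)}|$. The point is that these aggregation weights form a convex combination, $\sum_k w_k = 1$, so when all per-client distortions coincide the combination collapses to the common value: $\sum_k w_k \delta^{(k)} = \delta^{(k)}$. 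Identifying this aggregated perturbation with $\delta$ then chains the first two parts into $\Delta_{\text{two}} = \|\delta\| = \|\delta^{(k)}\| = \Delta_{\text{up}}^{(k)}$. I expect the main obstacle to be precisely this last identification --- reconciling the parameter-level distortion $\delta = \wtilde\theta - \theta$ with the gradient-level distortions $\delta^{(k)}$, since the two underlying definitions speak of ``original versus distorted information'' but on different objects (parameters versus gradients). I would handle this by tracing the two trajectories of \pref{eq: hfl_step4} carefully and noting that any residual learning-rate factor $\eta_t$ is normalized away (or folded into the definition of $\delta$) and that the sign of the perturbation is immaterial inside the norm, so that the convexity of the aggregation weights delivers the exact equality.
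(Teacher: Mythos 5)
Your proposal is correct and follows essentially the same route as the paper's own proof, which is split across Lemmas \ref{lem: relation_between_two_link_distortion_and_delta}, \ref{lem: relation_between_Delta_and_delta} and \ref{lem: delta_two_and_delta_up_for_randomization}: unwind Definition \ref{defi: two_way_distortion_extent} with $\text{Dec}(\wtilde\theta)=\wtilde\theta$ to get $\Delta_{\text{two}}=\|\delta\|$, identify the uplink message with the gradient so that $\Delta_{\text{up}}^{(k)}=\|\wtilde g^{(k)}-g^{(k)}\|=\|\delta^{(k)}\|$, and compare the protected and unprotected server updates of \pref{eq: hfl_step4} so that the aggregation weights, which sum to one, collapse the common distortion $\delta^{(1)}=\cdots=\delta^{(K)}$ to a single $\delta^{(k)}$. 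The learning-rate obstacle you flagged is genuine, and the paper resolves it exactly as you anticipated by ``folding it in'': in its final lemma both extents are evaluated at the parameter level, giving $\Delta_{\text{two}}=\bigl\|\eta\sum_{k=1}^{K}\tfrac{|\calD^{(k)}|}{\sum_{j=1}^{K}|\calD^{(j)}|}\,\delta^{(k)}\bigr\|$ and $\Delta_{\text{up}}^{(k)}=\|\eta\,\delta^{(k)}\|$, so the factor $\eta$ appears on both sides and cancels in the equality $\Delta_{\text{two}}=\Delta_{\text{up}}^{(k)}$.
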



Combining \pref{thm: privacy_distortion_and_datasize}, \pref{thm: relation_utility_loss} and \pref{thm: relation_between_two_link_distortion_and_delta_mt}, our main result is illustrated in the following theorem, which establishes a trade-off between utility, privacy and efficiency that acts as a paradigm for the design of protection mechanism against learning-based and semi-honest attacks. Please refer to \pref{sec: trade-off analysis for general protection mechanism} for the full proof. 


\begin{thm}[Utility, Privacy and Efficiency Trade-off for General Protection Mechanisms]\label{thm: trade-off analysis for general protection mechanism}
   Let $M^{(k)}$ be a constant satisfying that $\max_{Z\in\calD_{\protector}^{(k)}}|\calL(\theta + \delta, Z)|\le M^{(k)}$. Assume that $\Delta_{\text{two}} = \Theta(\Delta_{\text{up}}^{(k)})$, then there exists a constant $L$, satisfying that $\Delta_{\text{two}}\le L\cdot\Delta_{\text{up}}^{(k)}$. With probability at least $1 - \eta^{(k)} - \sum_{k =1}^K \gamma^{(k)}$, we have that
   \begin{align}
      \epsilon_u^{(k)} 
       &\le C\cdot L\cdot\frac{1}{K}\sum_{k = 1}^K \left(\frac{2D}{c_a}\cdot (1 - \epsilon_p^{(k)}) + \frac{2D}{c_a}\cdot\sqrt{\frac{\ln(1/\gamma^{(k)})}{\epsilon_e^{(k)}}}\right)\nonumber\\
       & + M^{(k)}\sqrt{\frac{2 N^{(k)}\ln 2 + 2\ln(1/\eta^{(k)})}{\epsilon_e^{(k)}}}.
   \end{align}
\end{thm}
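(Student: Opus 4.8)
The plan is to combine the three preceding theorems---the utility-loss bound (\pref{thm: relation_utility_loss}), the privacy-leakage bound (\pref{thm: privacy_distortion_and_datasize}), and the distortion identity (\pref{thm: relation_between_two_link_distortion_and_delta_mt})---using the \textit{uplink distortion extent} $\Delta_{\text{up}}^{(k)}$ as the common bridge. The utility bound controls $\epsilon_u^{(k)}$ in terms of $\Delta_{\text{two}}$; the distortion relation converts $\Delta_{\text{two}}$ into the family $\{\Delta_{\text{up}}^{(k)}\}$; and the rearranged privacy bound replaces each $\Delta_{\text{up}}^{(k)}$ by a quantity depending on the privacy leakage $\epsilon_p^{(k)}$ and the sample size $|\calD_{\protector}^{(k)}|$. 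Finally, identifying $\epsilon_e^{(k)} = |\calD_{\protector}^{(k)}|$ via \pref{defi:Training_Efficiency} yields the stated trade-off.

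Concretely, I would first invoke \pref{thm: relation_utility_loss}, which holds with probability at least $1-\eta^{(k)}$ and gives $\epsilon_u^{(k)} \le C\lambda^{(k)} + C\Delta_{\text{two}} + M^{(k)}\sqrt{(2N^{(k)}\ln 2 + 2\ln(1/\eta^{(k)}))/|\calD_{\protector}^{(k)}|}$. Since $\lambda^{(k)}$ is free subject only to $\lambda^{(k)} > \Delta_{\text{two}}$, I would take $\lambda^{(k)} = \Theta(\Delta_{\text{two}})$, so that $C\lambda^{(k)} + C\Delta_{\text{two}}$ collapses into a single term of the form $C\Delta_{\text{two}}$ after relabelling the constant. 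Next, under the assumption $\Delta_{\text{two}} = \Theta(\Delta_{\text{up}}^{(k)})$ there is a constant $L$ with $\Delta_{\text{two}} \le L\,\Delta_{\text{up}}^{(k)}$ for every $k$ (this is \pref{thm: relation_between_two_link_distortion_and_delta_mt} in the homogeneous case, relaxed to a $\Theta$-relation). The key manipulation is the trivial average over clients,
\[
\Delta_{\text{two}} = \frac{1}{K}\sum_{k=1}^K \Delta_{\text{two}} \le \frac{L}{K}\sum_{k=1}^K \Delta_{\text{up}}^{(k)},
\]
which introduces the $\frac{1}{K}\sum_k$ structure and, crucially, lets each summand be bounded by its own per-client privacy inequality.

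I would then rearrange \pref{thm: privacy_distortion_and_datasize}: for each $k$, with probability at least $1-\gamma^{(k)}$,
\[
\Delta_{\text{up}}^{(k)} \le \frac{2D}{c_a}\bigl(1-\epsilon_p^{(k)}\bigr) + \frac{2D}{c_a}\sqrt{\frac{\ln(2/\gamma^{(k)})}{2|\calD_{\protector}^{(k)}|}},
\]
and substitute this into the averaged bound termwise. Absorbing the factor $\tfrac12$ and replacing $\ln(2/\gamma^{(k)}) = \ln 2 + \ln(1/\gamma^{(k)})$ by $\ln(1/\gamma^{(k)})$ (valid once $\gamma^{(k)} \le \tfrac12$) produces the first two summands of the target, while substituting $\epsilon_e^{(k)} = |\calD_{\protector}^{(k)}|$ in both the distortion term and the residual $M^{(k)}\sqrt{\cdots}$ term fixes the algebraic form.

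The last bookkeeping step is the probability accounting: the utility inequality fails with probability at most $\eta^{(k)}$, while each of the $K$ privacy inequalities fails with probability at most $\gamma^{(k)}$. Since the averaging step uses all $K$ privacy bounds at once, a union bound over these events yields success probability at least $1 - \eta^{(k)} - \sum_{k=1}^K \gamma^{(k)}$, matching the statement. I expect the main obstacle to be not any single deep inequality---all the analytic work lives in the three cited theorems---but rather the careful combination: justifying the $\lambda^{(k)} = \Theta(\Delta_{\text{two}})$ choice so the $\lambda^{(k)}$ term cleanly vanishes, ordering the averaging-and-substitution so each $\Delta_{\text{up}}^{(k)}$ is matched to its own triple $(\epsilon_p^{(k)}, \gamma^{(k)}, |\calD_{\protector}^{(k)}|)$, and tracking the constants and the union bound so that the final probability and the $\Theta$-hidden constants ($L$, $C$, and the factor $2$) remain consistent.
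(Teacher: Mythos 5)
Your proposal is correct and follows essentially the same route as the paper's own proof: invoke \pref{thm: relation_utility_loss} for the utility bound, rearrange \pref{thm: privacy_distortion_and_datasize} to bound each $\Delta_{\text{up}}^{(k)}$, and substitute via $\Delta_{\text{two}}\le L\cdot\Delta_{\text{up}}^{(k)}$ with $\epsilon_e^{(k)} = |\calD_{\protector}^{(k)}|$. In fact you are somewhat more careful than the paper, which silently drops the $C\lambda^{(k)}$ term, silently replaces $\sqrt{\ln(2/\gamma^{(k)})/(2|\calD_{\protector}^{(k)}|)}$ by $\sqrt{\ln(1/\gamma^{(k)})/\epsilon_e^{(k)}}$, and leaves the union bound implicit, whereas you handle all three explicitly (absorbing the $\lambda^{(k)}$ choice into the constant, requiring $\gamma^{(k)}\le\tfrac12$, and accounting for the failure probabilities).
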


\subsection{Trade-off Analysis for Randomization Mechanism and Homomorphic Encryption Mechanism}
\label{subsec:trade_off_random_he}
In this section, we quantify the trade-off between utility, privacy and efficiency for two commonly-used privacy-preserving mechanisms: \textit{Randomization Mechanism} (\cite{geyer2017differentially,truex2020ldp,abadi2016deep}) and \textit{Homomorphic Encryption Mechanism} (\cite{gentry2009fully,batchCryp}).



\subsubsection{Utility, Privacy and Efficiency Trade-off for Randomization Mechanism}

Now we apply our theoretical framework to analyze the trade-off for \textit{Randomization Mechanism}. Please refer to Appendix \pref{appendix: analysis_for_utility_privacy_efficiency} for the full proof.

\begin{thm}[Utility, Privacy and Efficiency Trade-off for \textit{Randomization Mechanism}]\label{thm: utility_privacy_efficiency}
   Assume that $\max_{Z\in\calD_{\protector}^{(k)}}|\calL(\theta + \delta, Z)|\le M^{(k)}$. For \textit{Randomization Mechanism}, with probability at least $1 - \eta^{(k)} - \sum_{k =1}^K \gamma^{(k)}$, we have that
   \begin{align}\label{eq: utility_privacy_efficiency}
       \epsilon_u^{(k)} 
       &\le (2 + \rho) C\cdot\frac{1}{K}\sum_{k = 1}^K \frac{2D}{c_a}\cdot (1 - \epsilon_p^{(k)}) + (2 + \rho) C\cdot\frac{1}{K}\sum_{k = 1}^K \frac{2D}{c_a}\cdot\sqrt{\frac{\ln(1/\gamma^{(k)})}{\epsilon_e^{(k)}}}\nonumber\\
       & + M^{(k)}\sqrt{\frac{2 N^{(k)}\ln 2 + 2\ln(1/\eta^{(k)})}{\epsilon_e^{(k)}}},
   \end{align}
   where $\rho > 0$ represents a constant.
\end{thm}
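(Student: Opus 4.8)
The plan is to obtain \pref{thm: utility_privacy_efficiency} by specializing the general trade-off bound of \pref{thm: trade-off analysis for general protection mechanism} to the \textit{Randomization Mechanism}, where the abstract constant $L$ can be pinned down to the explicit value $2+\rho$. The starting point is the utility-loss bound of \pref{thm: relation_utility_loss}, which reads $\epsilon_u^{(k)} \le C\lambda^{(k)} + C\Delta_{\text{two}} + M^{(k)}\sqrt{(2N^{(k)}\ln 2 + 2\ln(1/\eta^{(k)}))/|\calD_{\protector}^{(k)}|}$ with probability at least $1-\eta^{(k)}$. The final $M^{(k)}$ term is already in the desired form once I substitute $\epsilon_e^{(k)} = |\calD_{\protector}^{(k)}|$ from \pref{defi:Training_Efficiency}; the work is to re-express the two distortion-dependent terms $C\lambda^{(k)} + C\Delta_{\text{two}}$ in terms of privacy leakage and efficiency.

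First I would collapse $C\lambda^{(k)} + C\Delta_{\text{two}}$ into a single multiple of $\Delta_{\text{two}}$. Since the $\lambda^{(k)}$-cover only requires $\lambda^{(k)} > \Delta_{\text{two}}$ (\pref{defi: r_cover}), I can fix $\lambda^{(k)} = (1+\rho)\Delta_{\text{two}}$ for an arbitrary constant $\rho>0$, giving $C\lambda^{(k)} + C\Delta_{\text{two}} \le (2+\rho)C\,\Delta_{\text{two}}$. The \textit{Randomization Mechanism} carries no decoding step, so \pref{thm: relation_between_two_link_distortion_and_delta_mt} applies with $\text{Dec}(\wtilde\theta)=\wtilde\theta$; under the equal-distortion assumption $\delta^{(1)} = \cdots = \delta^{(K)}$ it yields $\Delta_{\text{two}} = \Delta_{\text{up}}^{(k)} = \frac{1}{K}\sum_{k=1}^K \Delta_{\text{up}}^{(k)}$. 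Hence the distortion contribution becomes $(2+\rho)C\cdot\frac{1}{K}\sum_{k=1}^K \Delta_{\text{up}}^{(k)}$, which identifies $L = 2+\rho$ and exhibits the round-trip factor of two together with the cover slack $\rho$.

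Next I would trade $\Delta_{\text{up}}^{(k)}$ for the privacy leakage. Rearranging \pref{thm: privacy_distortion_and_datasize} gives, with probability at least $1-\gamma^{(k)}$, the inequality $\Delta_{\text{up}}^{(k)} \le \frac{2D}{c_a}\big((1-\epsilon_p^{(k)}) + \sqrt{\ln(2/\gamma^{(k)})/(2\,\epsilon_e^{(k)})}\big)$, after substituting $\epsilon_e^{(k)} = |\calD_{\protector}^{(k)}|$ and absorbing the additive $\ln 2$ inside the logarithm into the constants. Inserting this into the averaged distortion term reproduces exactly the two summands $\frac{2D}{c_a}(1-\epsilon_p^{(k)})$ and $\frac{2D}{c_a}\sqrt{\ln(1/\gamma^{(k)})/\epsilon_e^{(k)}}$ of the claim, each multiplied by $(2+\rho)C$ and averaged over $k$. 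Because the averaged bound uses every client's privacy inequality simultaneously, I would take a union bound over the $K$ privacy events and the single utility event, producing the stated confidence $1-\eta^{(k)}-\sum_{k=1}^K\gamma^{(k)}$; reassembling the three pieces gives the displayed inequality.

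The step I expect to be the main obstacle is keeping the choice of $\rho$ (equivalently $\lambda^{(k)}$) consistent across the whole bound. Shrinking $\rho$ tightens the leading factor $(2+\rho)C$, but it forces a smaller cover radius $\lambda^{(k)}$, which inflates the covering number $N^{(k)} = (2d)^{2D/(\lambda^{(k)})^2 + 1}$ inside the last term; so $\rho$, $\lambda^{(k)}$, and $N^{(k)}$ must be fixed jointly at the outset rather than optimized independently. I would also have to check that this choice stays compatible with the hypothesis $\Delta_{\text{up}}^{(k)} \ge 2c_2 c_b/(c_a\sqrt{T})$ required by \pref{thm: privacy_distortion_and_datasize}; once these constants are frozen, the remainder is routine rearrangement and the union bound.
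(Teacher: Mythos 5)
Your proposal is correct and follows essentially the same route as the paper's own proof: start from \pref{thm: relation_utility_loss}, fix $\lambda^{(k)} = (1+\rho)\Delta_{\text{two}}$ to merge the two distortion terms into $(2+\rho)C\Delta_{\text{two}}$, invoke $\Delta_{\text{two}} = \Delta_{\text{up}}^{(k)}$ for the Randomization Mechanism, and then substitute the rearranged bound of \pref{thm: privacy_distortion_and_datasize}. If anything, your explicit union bound over the $K$ privacy events and your handling of the $\ln(2/\gamma^{(k)})/2$ versus $\ln(1/\gamma^{(k)})$ discrepancy are more careful than the paper's write-up, which silently glosses over both.
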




\subsubsection{Utility-Efficiency Trade-off for Homomorphic Encryption Mechanism}
Now we introduce the generalization bounds for \textit{Homomorphic Encryption} (\cite{gentry2009fully,batchCryp}). For \textit{Homomorphic Encryption Mechanism}, the \textit{two-way distortion extent} $\Delta_{\text{two}} = \| \text{Dec}(\wtilde\theta) - \theta\|= 0$. The following lemma illustrates the value of $\Delta_{\text{two}}$ for \textit{Homomorphic Encryption Mechanism}. Please refer to Appendix \pref{appendix: two_way_distortion_HE_app} for the full proof.

\begin{lem}\label{lem: two_way_distortion_HE_mt}
   Let $\Delta_{\text{two}}$ denote the \textit{two-way distortion extent} (\pref{defi: two_way_distortion_extent}). For \textit{Homomorphic Encryption Mechanism}, we have that
   \begin{align}
          \Delta_{\text{two}} = 0. 
      \end{align}
\end{lem}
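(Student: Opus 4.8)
The plan is to prove the lemma directly from the definition of the \textit{two-way distortion extent} together with the defining correctness property of homomorphic encryption, namely that decryption exactly inverts encryption. Recall from \pref{defi: two_way_distortion_extent} that $\Delta_{\text{two}} = \norm{\text{Dec}(\wtilde\theta) - \theta}$, where $\theta$ is the original parameter and $\wtilde\theta$ is the distorted parameter that is transmitted on the downlink and then decoded for local optimization. The entire argument reduces to showing that $\text{Dec}(\wtilde\theta) = \theta$ exactly, so that the argument of the norm vanishes.

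First I would make explicit what the distorted information $\wtilde\theta$ is under the \textit{Homomorphic Encryption Mechanism}: here the protection applied to $\theta$ is encryption rather than additive noise, so $\wtilde\theta = \text{Enc}(\theta)$ for an encryption/decryption pair $(\text{Enc}, \text{Dec})$. Second, I would invoke the correctness guarantee of HE, which states that $\text{Dec}(\text{Enc}(\theta)) = \theta$ for every admissible plaintext $\theta$; this is precisely the property that distinguishes encryption from lossy distortion. Substituting into the definition gives $\Delta_{\text{two}} = \norm{\text{Dec}(\text{Enc}(\theta)) - \theta} = \norm{\theta - \theta} = 0$, which is the claim.

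The only point requiring care — and the closest thing to an obstacle — is to confirm that the decode step appearing in the federated training pipeline (the step $\theta_t \leftarrow \text{Dec}(\wtilde\theta_t)$ of \pref{subsec:learning_setting}) indeed returns the exact plaintext, so that no quantization or approximation error is silently introduced along the uplink/downlink path. Under the idealized (exact) HE assumption this is immediate, and it is exactly the contrast with the \textit{Randomization Mechanism}, where the injected $\delta \neq 0$ forces $\Delta_{\text{two}} > 0$. I would therefore state the correctness of $(\text{Enc}, \text{Dec})$ as the operative assumption and conclude that $\Delta_{\text{two}} = 0$, emphasizing that this vanishing two-way distortion is what later removes the distortion-dependent terms from the utility-loss bound of \pref{thm: relation_utility_loss} in the HE case.
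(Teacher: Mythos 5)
Your proof is correct and takes essentially the same route as the paper's: both reduce the claim to the exact-decryption property $\text{Dec}(\wtilde\theta) = \theta$ of the \textit{Homomorphic Encryption Mechanism} and then compute $\Delta_{\text{two}} = \|\theta - \theta\| = 0$. If anything, you are more explicit than the paper, which substitutes $\text{Dec}(\wtilde\theta) = \theta$ silently, whereas you name the HE correctness guarantee $\text{Dec}(\text{Enc}(\theta)) = \theta$ as the operative assumption justifying that step.
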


Note that $\Delta_{\text{up}}^{(k)} > 0$ for \textit{Homomorphic Encryption Mechanism}. Therefore, the assumption that $\Delta_{\text{two}} = \Theta(\Delta_{\text{up}}^{(k)})$ in \pref{thm: trade-off analysis for general protection mechanism} does not hold. Therefore, we do not provide trade-off analysis for the three factors via applying \pref{thm: trade-off analysis for general protection mechanism} for \textit{Homomorphic Encryption Mechanism}. Instead, we provide a theoretical trade-off analysis for utility and efficiency, as is illustrated in the following theorem.  

\begin{thm}[Utility-Efficiency Trade-off for Homomorphic Encryption Mechanism]\label{thm: relation_utility_loss_and_he}
    Let $\theta\in\R^d$ represent the model parameter. Assume that $\max_{Z\in\calD_{\protector}^{(k)}}|\calL^{(k)}(\theta + \delta , Z)|\le M^{(k)}$. We denote the empirical loss as $\calL^{(k)}_{\text{emp}}(\theta, Z) = \frac{1}{m_k}\sum_{i = 1}^{m_k} \calL^{(k)}(\theta, Z_i^{(k)})$, and denote the expected loss as $\calL^{(k)}_{\text{exp}}(\theta + \delta) = \E_{Z\sim P}[\calL^{(k)}(\theta + \delta , Z)]$. With probability at least $1 - \eta^{(k)}$,
   \begin{align}
       \epsilon_u^{(k)} &= |\calL^{(k)}_{\text{exp}}(\theta + \delta) - \calL^{(k)}_{\text{emp}}(\theta)|\nonumber\\
       &\le C\cdot\lambda^{(k)} + M^{(k)}\sqrt{\frac{2N^{(k)}\ln 2 + 2\ln(1/\eta^{(k)})}{|\calD_{\protector}^{(k)}|}},
   \end{align}
   where $\calD_{\protector}^{(k)} = \{Z_1^{(k)}, \cdots, Z_{m_k}^{(k)}\}$ represents the training set of client $k$, $\lambda^{(k)} > \Delta_{\text{two}}$ represents a constant, $M^{(k)}$ is a constant, $N^{(k)} = (2d)^{2D/(\lambda^{(k)})^2 + 1}$ represents the covering number of $\lambda^{(k)}$-cover (Definition \ref{defi: r_cover}), $\Delta_{\text{two}}$ represents the \textit{two-way distortion extent} (Definition \ref{defi: two_way_distortion_extent}), and $\|\delta\| = \Delta_{\text{two}} = 0$.
\end{thm}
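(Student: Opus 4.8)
The plan is to recognize that this statement is exactly the specialization of \pref{thm: relation_utility_loss} to the \textit{Homomorphic Encryption Mechanism}, for which decryption is lossless. The only structural difference between the two bounds is the middle summand $C\cdot\Delta_{\text{two}}$, which appears in \pref{thm: relation_utility_loss} but is absent here. So the first step is to invoke \pref{lem: two_way_distortion_HE_mt}, which establishes $\Delta_{\text{two}} = \|\text{Dec}(\wtilde\theta) - \theta\| = 0$ for HE. Since $\|\delta\| = \Delta_{\text{two}}$, this forces $\delta = 0$, so that $\calL^{(k)}_{\text{exp}}(\theta + \delta) = \calL^{(k)}_{\text{exp}}(\theta)$, and the hypothesis $\lambda^{(k)} > \Delta_{\text{two}}$ of \pref{thm: relation_utility_loss} holds for every $\lambda^{(k)} > 0$. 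I would then apply \pref{thm: relation_utility_loss} verbatim and substitute $\Delta_{\text{two}} = 0$, which removes the $C\cdot\Delta_{\text{two}}$ term and yields precisely the claimed bound with probability at least $1 - \eta^{(k)}$.

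For a self-contained derivation I would re-run the argument underlying \pref{thm: relation_utility_loss}. First I would split the utility loss by the triangle inequality into a \emph{distortion term} $|\calL^{(k)}_{\text{exp}}(\theta + \delta) - \calL^{(k)}_{\text{exp}}(\theta)|$ and a \emph{generalization term} $|\calL^{(k)}_{\text{exp}}(\theta) - \calL^{(k)}_{\text{emp}}(\theta)|$. The distortion term is controlled by Lipschitz continuity of the loss in the parameter and is at most $C\cdot\Delta_{\text{two}}$, which is $0$ here; this is the only place where the HE structure enters. For the generalization term I would cover the data domain $\calX_{\protector}^{(k)}$ by a $\lambda^{(k)}$-cover (\pref{defi: r_cover}) of cardinality $N^{(k)} = (2d)^{2D/(\lambda^{(k)})^2 + 1}$, paying an approximation error $C\cdot\lambda^{(k)}$ for replacing each sample by its nearest cover center (again by Lipschitzness in the data argument), and then bound the empirical-versus-expected deviation on the finitely many cover cells by a Hoeffding-type concentration inequality with range $M^{(k)}$, combined with a union bound producing the factor $\sqrt{(2N^{(k)}\ln 2 + 2\ln(1/\eta^{(k)}))/|\calD_{\protector}^{(k)}|}$.

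The argument is essentially routine; the only genuinely HE-specific content, and hence the crux, is the vanishing of the distortion term, i.e.\ the exactness of decryption asserted by \pref{lem: two_way_distortion_HE_mt}. The hard part will really be the bookkeeping of the covering-number constant: one must verify that the union bound over the cover cells contributes the stated $N^{(k)}\ln 2$ inside the square root rather than $\ln N^{(k)}$, reflecting a count of dichotomies over the $N^{(k)}$ cells rather than of the cells themselves. Everything else follows directly by substituting $\Delta_{\text{two}} = 0$ into the already-established bound of \pref{thm: relation_utility_loss}.
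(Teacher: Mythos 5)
Your proposal is correct and takes exactly the paper's route: the paper gives no standalone proof of this theorem, obtaining it precisely as you do by combining \pref{lem: two_way_distortion_HE_mt} (which gives $\Delta_{\text{two}} = 0$, hence $\delta = 0$, for the \textit{Homomorphic Encryption Mechanism}) with \pref{thm: relation_utility_loss} and dropping the $C\cdot\Delta_{\text{two}}$ term. Your closing remark is also on target: the $2N^{(k)}\ln 2$ inside the square root comes from the Bretagnolle--Huber--Carol multinomial inequality (\pref{lem: concentratioS_inequality}), whose $2^{N^{(k)}}$ factor is a union over dichotomies of the cover cells rather than over the cells themselves, which is exactly how the paper's proof of \pref{thm: relation_utility_loss} produces that constant.
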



\section{Discussions}
\label{sec:discussions}
The challenge of increasing the effectiveness and efficiency of federated learning has prompted the development of a number of strategies in this area, which are classified into the following categories:
\begin{itemize}
    \item Improving Model Utility [C1]: Personalization (\cite{hard2018federated, wang2019federated}), Fine-tuning and Domain Adaptation (\cite{mansour2009domain, ben2010theory, mansour2021theory, cortes2014domain});
    \item Improving Privacy Guarantee [C2]: \textit{Randomization} (\cite{geyer2017differentially,truex2020ldp,abadi2016deep}), \textit{Sparsity} (\cite{shokri2015privacy, gupta2018distributed, thapa2020splitfed}) and \textit{Homomorphic Encryption (HE)} (\cite{gentry2009fully,batchCryp});
    \item Improving Training Efficiency [C3]: Accelerated Mini-batch SGD (\cite{cotter2011better, dekel2012optimal, lan2012optimal, stich2018local}), SCAFFOLD (\cite{karimireddy2020scaffold}).
\end{itemize}

We propose a unifying framework for analyzing the trade-off between privacy, utility and efficiency for various protection mechanisms including \textit{Randomization} (\cite{geyer2017differentially,truex2020ldp,abadi2016deep}) and \textit{Homomorphic Encryption (HE)} (\cite{gentry2009fully,batchCryp}). Within this framework, we provide a unified metric so that different protection and attacking mechanisms are comparable via sample complexity. We derive theoretical bounds for a myriad of federated learning approaches against semi-honest attackers. Specifically, \pref{thm: trade-off analysis for general protection mechanism} and \pref{thm: exponential_lower_bound_mt} shed light on algorithmic designs of protection mechanisms against semi-honest attackers with Learning Phase 1 and 2 separately. \pref{thm: trade-off analysis for general protection mechanism} shows that the preservation of privacy [C2] (efficiency [C3]) may be fundamentally at odds with the goal of improving utility [C1], and provides an explanation for the decline in model utility [C1] when privacy [C2] (efficiency [C3])-preserving strategies are implemented in real-world settings. \pref{thm: exponential_lower_bound_mt} could serve as a basis for the algorithmic proposal of a novel protection mechanism, which assures PAC-learnability for any semi-honest attackers with Learning Phase 2 could not be achieved.

\paragraph{Implications on Designing a Protection Mechanism Against Phase-1 Attacker}

We use \textit{Randomization Mechanism} as an example, and focus on the theoretical result illustrated in \pref{thm: utility_privacy_efficiency} (an application of \pref{thm: trade-off analysis for general protection mechanism} on \textit{Randomization Mechanism}). \pref{thm: utility_privacy_efficiency} provides an avenue to evaluate the number of samples required for the protector to achieve a utility loss of at most $\alpha$ with high probability, given the requirement on privacy leakage. Please refer to Appendix \ref{appendix: analysis_for_corollary_utility_privacy_efficiency} for the full proof. 
\begin{corollary}[Private PAC Learning]\label{corr: utility_privacy_efficiency}
Let $\calX$ represent the feature set, and $\calY$ represent the label set. Let $\epsilon_p$ represent the amount of privacy leaked to the learning-based and semi-honest attackers.  
    For any $\alpha > c_2 (1 - \epsilon_p)$, given $n = \Theta\left(\frac{\ln(\frac{1}{\eta})}{(\alpha - c_2 (1 - \epsilon_p))^2}\right)$ i.i.d. samples from any distribution $P$ on $\calX\times\calY$, with probability at least $1 - \eta$, the utility loss of the protector is at most $\alpha$, as is shown in the following equation:
    \begin{equation}\label{eq:private_pac_learning}
        {\rm Pr}(\epsilon_u \leq \alpha) \geq 1-\eta.
    \end{equation}
\end{corollary}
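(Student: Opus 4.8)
The plan is to obtain the corollary as a direct consequence of the utility--privacy--efficiency trade-off bound, reading off the required sample size by inverting that inequality. Concretely, I would start from \pref{thm: utility_privacy_efficiency} (the instantiation of \pref{thm: trade-off analysis for general protection mechanism} for the \textit{Randomization Mechanism}), in which the upper bound on $\epsilon_u^{(k)}$ decomposes into a privacy-dependent floor proportional to $\frac{1}{K}\sum_k (1-\epsilon_p^{(k)})$ and two terms that decay in the training-set size $\epsilon_e^{(k)} = |\calD_{\protector}^{(k)}|$. The key observation is that only the decaying terms depend on the number of samples $n$, while the floor is governed entirely by the privacy level; absorbing the coefficient $(2+\rho)C\cdot\frac{2D}{c_a}$ into the constant $c_2$ lets me write the bound schematically as $\epsilon_u \le c_2(1-\epsilon_p) + (\text{terms of order } n^{-1/2})$.

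The key steps, in order, are as follows. First I would collapse the several confidence parameters into one: allocate $\eta^{(k)}$ and $\gamma^{(k)}$ so that $\eta^{(k)} + \sum_{k}\gamma^{(k)} = \eta$ (e.g. by splitting $\eta$ equally), so that a union bound makes the whole event hold with probability at least $1-\eta$ and each $\ln(1/\gamma^{(k)})$ and $\ln(1/\eta^{(k)})$ becomes $\Theta(\ln(1/\eta))$ up to a $\ln K$ factor folded into the constant. Second, treating the covering number $N^{(k)} = (2d)^{2D/(\lambda^{(k)})^2+1}$ together with $C, M^{(k)}, D, c_a$ and $\rho$ as fixed (independent of $n$), both surviving sample-dependent terms take the form $\text{const}\cdot\sqrt{\ln(1/\eta)/n}$. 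Third, I would impose that this aggregate decaying term be at most $\alpha - c_2(1-\epsilon_p)$; the hypothesis $\alpha > c_2(1-\epsilon_p)$ guarantees that the right-hand side is strictly positive, so the requirement is feasible. Solving $\text{const}\cdot\sqrt{\ln(1/\eta)/n} \le \alpha - c_2(1-\epsilon_p)$ for $n$ yields $n \ge \text{const}^2\cdot\frac{\ln(1/\eta)}{(\alpha - c_2(1-\epsilon_p))^2}$, i.e. exactly $n = \Theta\!\left(\frac{\ln(1/\eta)}{(\alpha - c_2(1-\epsilon_p))^2}\right)$, and on this event $\epsilon_u \le \alpha$, which is the claim.

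The main obstacle I anticipate is bookkeeping rather than conceptual: justifying that the $\sqrt{\ln(1/\eta)/n}$ term dictates the $\Theta$-scaling requires treating the covering-number contribution $M^{(k)}\sqrt{2N^{(k)}\ln 2/n}$ as a constant-times-$n^{-1/2}$ term and absorbing $N^{(k)}$ into the hidden constant, so one must be explicit that the stated sample complexity captures the dependence on $\eta$ and on the gap $\alpha - c_2(1-\epsilon_p)$, while the $d$- and $D$-dependent covering-number factors enter only through the constant. A secondary subtlety is making the identification of $c_2$ precise --- it is the effective coefficient of the $(1-\epsilon_p)$ floor after averaging over the $K$ clients --- and verifying that the positivity condition $\alpha > c_2(1-\epsilon_p)$ is exactly what makes the inversion well-posed. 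Once these constants are pinned down, the remaining algebra is the elementary inversion above.
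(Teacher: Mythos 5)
Your proposal is correct and follows essentially the same route as the paper: the paper's proof likewise starts from \pref{thm: utility_privacy_efficiency} (specialized to $K=1$), treats the $(1-\epsilon_p)$ term as the privacy floor with effective coefficient $c_2 = (2+\rho)C\cdot\frac{2D}{c_a}$, absorbs the covering-number and other constants into the $\Theta$, and chooses $|\calD_{\protector}| = \Theta\bigl(\ln(1/\eta)/(\alpha - c_2(1-\epsilon_p))^2\bigr)$ so the decaying terms fill the gap $\alpha - c_2(1-\epsilon_p)$. If anything, your confidence bookkeeping is slightly more careful than the paper's, which sets both confidence parameters to $\eta$ and thus only obtains probability $1-2\eta$ while the corollary claims $1-\eta$; your explicit split $\eta^{(k)} + \sum_k \gamma^{(k)} = \eta$ fixes that discrepancy.
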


\paragraph{Implications on Designing a Protection Mechanism Against Phase-2 Attacker}

\pref{thm: exponential_lower_bound_mt} implies that the protector could guarantee not PAC learnable for any semi-honest attacker with Learning Phase 2 by adjusting the distortion extent and the size of the training set, as is illustrated in the following corollary.
\begin{corollary}[Not PAC Learnable]\label{cor: lower_bound_attacker}
   If the \textit{uplink distortion extent} of protector $k$ satisfies that $\Delta_{\text{up}}^{(k)} > \ln \left(\frac{|\calD_{\protector}^{(k)}|}{1-\epsilon}\right)$, then the classification problem associated with protector $k$ is not $(\epsilon, \delta)$-PAC learnable for any semi-honest attacker.  
\end{corollary}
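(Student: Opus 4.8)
The plan is to obtain the corollary as a direct contrapositive of \pref{thm: exponential_lower_bound_mt}, using the structural identity that the attacker's available sample budget equals the protector's training-set size. First I would recall that, as stated in Learning Phase~2, the inferred dataset satisfies $|\calD_{\attacker}^{(k)}| = |\calD_{\protector}^{(k)}|$, so the number of samples any attacker can possibly use to train its classifier is fixed at $|\calD_{\protector}^{(k)}|$. \pref{thm: exponential_lower_bound_mt} asserts that every $(\epsilon,\delta)$-PAC learning attacker needs at least $\min\{2\delta-1,\,1-\epsilon\}\cdot 2^{c_a^2({\Delta_{\text{up}}^{(k)}})^2}$ samples; read contrapositively, if $|\calD_{\protector}^{(k)}|$ lies strictly below this lower bound, then no attacker can attain $(\epsilon,\delta)$-PAC learnability, i.e. the classification problem associated with protector $k$ is not $(\epsilon,\delta)$-PAC learnable.

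The remaining work is purely algebraic: I must show that the hypothesis $\Delta_{\text{up}}^{(k)} > \ln\!\left(\frac{|\calD_{\protector}^{(k)}|}{1-\epsilon}\right)$ forces $|\calD_{\protector}^{(k)}|$ below the threshold of \pref{thm: exponential_lower_bound_mt}. Exponentiating the hypothesis yields $|\calD_{\protector}^{(k)}| < (1-\epsilon)\,e^{\Delta_{\text{up}}^{(k)}}$, so it suffices to establish the sandwich $(1-\epsilon)\,e^{\Delta_{\text{up}}^{(k)}} \le \min\{2\delta-1,\,1-\epsilon\}\cdot 2^{c_a^2({\Delta_{\text{up}}^{(k)}})^2}$, because then $|\calD_{\protector}^{(k)}|$ is strictly smaller than the theorem's lower bound and PAC learning is impossible. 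Taking base-$2$ logarithms, this reduces to the inequality $c_a^2({\Delta_{\text{up}}^{(k)}})^2 \ge \Delta_{\text{up}}^{(k)}/\ln 2 + \log_2\!\big((1-\epsilon)/\min\{2\delta-1,\,1-\epsilon\}\big)$, which is quadratic-versus-linear in $\Delta_{\text{up}}^{(k)}$ and therefore holds once the distortion is large enough; note the factor $2\delta-1$ is positive precisely because the theorem assumes $\delta>0.5$.

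The main obstacle I anticipate is reconciling the two different functional forms: the corollary states its condition with a natural logarithm and a single exponential $e^{\Delta}$, whereas \pref{thm: exponential_lower_bound_mt} produces a base-$2$, squared exponential $2^{c_a^2\Delta^2}$ together with the extra prefactor $\min\{2\delta-1,\,1-\epsilon\}\le 1-\epsilon$, which works against the desired inequality. Making the chain rigorous amounts to absorbing the constants $c_a$ and $\ln 2$ and controlling the prefactor; the cleanest route is to restrict attention to the large-distortion regime where $2^{c_a^2\Delta^2}$ dominates $e^{\Delta}$ by the required multiplicative slack, or equivalently to tighten the threshold constant so the implication holds for every admissible $\Delta_{\text{up}}^{(k)}$. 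Once this comparison is verified, the conclusion is a one-line substitution into the contrapositive of the theorem.
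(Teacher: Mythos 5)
Your proposal follows exactly the route the paper itself intends: the paper gives no separate proof of this corollary, merely asserting in Section~\ref{sec:discussions} that \pref{thm: exponential_lower_bound_mt} ``implies'' it, and the intended argument is precisely your contrapositive --- substitute $|\calD_{\attacker}^{(k)}| = |\calD_{\protector}^{(k)}|$ into the sample-complexity lower bound and conclude that when the protector's training set is smaller than the threshold, no attacker can be $(\epsilon,\delta)$-PAC.

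The obstacle you flag in your final paragraph is genuine, and it is not resolved by the paper either. The hypothesis $\Delta_{\text{up}}^{(k)} > \ln\!\left(\frac{|\calD_{\protector}^{(k)}|}{1-\epsilon}\right)$ is equivalent to $|\calD_{\protector}^{(k)}| < (1-\epsilon)\,e^{\Delta_{\text{up}}^{(k)}}$, whereas the theorem's threshold is $\min\{2\delta-1,\,1-\epsilon\}\cdot 2^{c_a^2(\Delta_{\text{up}}^{(k)})^2}$; the comparison $(1-\epsilon)\,e^{\Delta} \le \min\{2\delta-1,\,1-\epsilon\}\cdot 2^{c_a^2\Delta^2}$ that your argument (and the paper's implicit one) needs can fail --- for instance it fails whenever $\Delta_{\text{up}}^{(k)} < \frac{1}{c_a^2 \ln 2}$ even in the favorable case $2\delta-1 \ge 1-\epsilon$, and requires still larger $\Delta_{\text{up}}^{(k)}$ when $2\delta - 1 < 1-\epsilon$. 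So the corollary as literally stated holds only in a large-distortion regime, with the constants $c_a$, $\ln 2$, and the prefactor silently absorbed; this is an imprecision of the source statement rather than of your reconstruction, and the repairs you propose (restricting to large $\Delta_{\text{up}}^{(k)}$, or re-stating the threshold so the constants appear explicitly) are exactly what would be needed to make the implication airtight.
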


It is worth noting that $|\calD_{\protector}^{(k)}|$ and $\Delta_{\text{up}}^{(k)}$ are both determined by the protector, which facilitates the proposal of a novel protection mechanism, and at the same time guarantees that any learning-based and semi-honest attackers could not achieve PAC-learnability.


\section{Conclusion and Future Work}
In this work, we model the performance of the protector and the attacker from the perspective of PAC learning, and evaluate the cost of both the protector and the attacker via sample complexity in a unified manner. Within the proposed unified framework, we measure multiple objectives including privacy, utility and efficiency along the unified evaluation metric. For the learning-based and semi-honest attacker, we analyze how many samples are needed to achieve PAC learnability (see exponential lower bound). For the defender, we analyze how many samples are needed to achieve required utility loss with high probability in the framework of PAC learning (see upper bound for utility loss). The unifying framework along with the unified measurements can further inspire the design of smart privacy-preserving federated learning algorithms that deal with the trade-offs in the solution space. We also provide an upper bound for the privacy leakage of the semi-honest attacker. The bounds for utility loss and privacy leakage in the PAC framework further establishes a trade-off between utility, privacy and efficiency. Our analysis is applicable to distinct protection mechanisms including \textit{Randomization Mechanism} and \textit{Homomorphic Encryption Mechanism} against learning-based and semi-honest attackers. Our theoretical analysis serves as a basis for the algorithmic proposal of a novel protection mechanism. It is an intriguing direction to see if our techniques could be used to quantify theoretical trade-offs against malicious attackers.


\bibliography{references}

\begin{thebibliography}{94}
\providecommand{\natexlab}[1]{#1}
\providecommand{\url}[1]{\texttt{#1}}
\expandafter\ifx\csname urlstyle\endcsname\relax
  \providecommand{\doi}[1]{doi: #1}\else
  \providecommand{\doi}{doi: \begingroup \urlstyle{rm}\Url}\fi

\bibitem[Abadi et~al.(2016)Abadi, Chu, Goodfellow, McMahan, Mironov, Talwar,
  and Zhang]{abadi2016deep}
Martin Abadi, Andy Chu, Ian Goodfellow, H~Brendan McMahan, Ilya Mironov, Kunal
  Talwar, and Li~Zhang.
\newblock Deep learning with differential privacy.
\newblock In \emph{Proceedings of the 2016 ACM SIGSAC Conference on Computer
  and Communications Security}, pages 308--318, 2016.

\bibitem[Balcan et~al.(2012)Balcan, Blum, Fine, and
  Mansour]{balcan2012distributed}
Maria~Florina Balcan, Avrim Blum, Shai Fine, and Yishay Mansour.
\newblock Distributed learning, communication complexity and privacy.
\newblock In \emph{Conference on Learning Theory}, pages 26--1. JMLR Workshop
  and Conference Proceedings, 2012.

\bibitem[Ben-David et~al.(2010)Ben-David, Blitzer, Crammer, Kulesza, Pereira,
  and Vaughan]{ben2010theory}
Shai Ben-David, John Blitzer, Koby Crammer, Alex Kulesza, Fernando Pereira, and
  Jennifer~Wortman Vaughan.
\newblock A theory of learning from different domains.
\newblock \emph{Machine learning}, 79:\penalty0 151--175, 2010.

\bibitem[Blum et~al.(2017)Blum, Haghtalab, Procaccia, and
  Qiao]{NIPS2017_186a157b}
Avrim Blum, Nika Haghtalab, Ariel~D Procaccia, and Mingda Qiao.
\newblock Collaborative pac learning.
\newblock In I.~Guyon, U.~Von Luxburg, S.~Bengio, H.~Wallach, R.~Fergus,
  S.~Vishwanathan, and R.~Garnett, editors, \emph{Advances in Neural
  Information Processing Systems}, volume~30. Curran Associates, Inc., 2017.
\newblock URL
  \url{https://proceedings.neurips.cc/paper/2017/file/186a157b2992e7daed3677ce8e9fe40f-Paper.pdf}.

\bibitem[Blum et~al.(2021)Blum, Haghtalab, Phillips, and Shao]{blum2021one}
Avrim Blum, Nika Haghtalab, Richard~Lanas Phillips, and Han Shao.
\newblock One for one, or all for all: Equilibria and optimality of
  collaboration in federated learning.
\newblock In \emph{International Conference on Machine Learning}, pages
  1005--1014. PMLR, 2021.

\bibitem[Charnes et~al.(1955)Charnes, Cooper, and Ferguson]{charnes1955optimal}
Abraham Charnes, William~W Cooper, and Robert~O Ferguson.
\newblock Optimal estimation of executive compensation by linear programming.
\newblock \emph{Management science}, 1\penalty0 (2):\penalty0 138--151, 1955.

\bibitem[Chen et~al.(2020)Chen, Kairouz, and
  {\"{O}}zg{\"{u}}r]{DBLP:conf/nips/ChenKO20}
Wei{-}Ning Chen, Peter Kairouz, and Ayfer {\"{O}}zg{\"{u}}r.
\newblock Breaking the communication-privacy-accuracy trilemma.
\newblock In Hugo Larochelle, Marc'Aurelio Ranzato, Raia Hadsell,
  Maria{-}Florina Balcan, and Hsuan{-}Tien Lin, editors, \emph{Advances in
  Neural Information Processing Systems 33: Annual Conference on Neural
  Information Processing Systems 2020, NeurIPS 2020, December 6-12, 2020,
  virtual}, 2020.
\newblock URL
  \url{https://proceedings.neurips.cc/paper/2020/hash/222afbe0d68c61de60374b96f1d86715-Abstract.html}.

\bibitem[Cheng et~al.(2022)Cheng, Ding, Du, and
  Jin]{DBLP:journals/memetic/ChengDDJ22}
Ran Cheng, Jinliang Ding, Wenli Du, and Yaochu Jin.
\newblock Thematic issue on knowledge and data driven evolutionary
  multi-objective optimization.
\newblock \emph{Memetic Comput.}, 14\penalty0 (2):\penalty0 133--134, 2022.
\newblock \doi{10.1007/s12293-022-00369-6}.
\newblock URL \url{https://doi.org/10.1007/s12293-022-00369-6}.

\bibitem[Cococcioni et~al.(2018)Cococcioni, Pappalardo, and
  Sergeyev]{cococcioni2018lexicographic}
Marco Cococcioni, Massimo Pappalardo, and Yaroslav~D Sergeyev.
\newblock Lexicographic multi-objective linear programming using grossone
  methodology: Theory and algorithm.
\newblock \emph{Applied Mathematics and Computation}, 318:\penalty0 298--311,
  2018.

\bibitem[Cortes and Mohri(2014)]{cortes2014domain}
Corinna Cortes and Mehryar Mohri.
\newblock Domain adaptation and sample bias correction theory and algorithm for
  regression.
\newblock \emph{Theoretical Computer Science}, 519:\penalty0 103--126, 2014.

\bibitem[Cotter et~al.(2011)Cotter, Shamir, Srebro, and
  Sridharan]{cotter2011better}
Andrew Cotter, Ohad Shamir, Nati Srebro, and Karthik Sridharan.
\newblock Better mini-batch algorithms via accelerated gradient methods.
\newblock \emph{Advances in neural information processing systems}, 24, 2011.

\bibitem[Cullina et~al.(2018)Cullina, Bhagoji, and
  Mittal]{561497cd439948c4ab67ad629dbe7b99}
Daniel Cullina, {Arjun Nitin} Bhagoji, and Prateek Mittal.
\newblock Pac-learning in the presence of evasion adversaries.
\newblock \emph{Advances in Neural Information Processing Systems},
  2018-December:\penalty0 230--241, 2018.
\newblock ISSN 1049-5258.
\newblock Funding Information: This work was supported by the National Science
  Foundation under grants CNS-1553437, CIF-1617286 and CNS-1409415, by Intel
  through the Intel Faculty Research Award and by the Office of Naval Research
  through the Young Investigator Program (YIP) Award. Publisher Copyright:
  {\textcopyright} 2018 Curran Associates Inc..All rights reserved.; 32nd
  Conference on Neural Information Processing Systems, NeurIPS 2018 ;
  Conference date: 02-12-2018 Through 08-12-2018.

\bibitem[Deb and Jain(2013)]{deb2013evolutionary}
Kalyanmoy Deb and Himanshu Jain.
\newblock An evolutionary many-objective optimization algorithm using
  reference-point-based nondominated sorting approach, part i: solving problems
  with box constraints.
\newblock \emph{IEEE transactions on evolutionary computation}, 18\penalty0
  (4):\penalty0 577--601, 2013.

\bibitem[Deb et~al.(2002)Deb, Pratap, Agarwal, and Meyarivan]{deb2002fast}
Kalyanmoy Deb, Amrit Pratap, Sameer Agarwal, and TAMT Meyarivan.
\newblock A fast and elitist multiobjective genetic algorithm: Nsga-ii.
\newblock \emph{IEEE transactions on evolutionary computation}, 6\penalty0
  (2):\penalty0 182--197, 2002.

\bibitem[Dekel et~al.(2012)Dekel, Gilad-Bachrach, Shamir, and
  Xiao]{dekel2012optimal}
Ofer Dekel, Ran Gilad-Bachrach, Ohad Shamir, and Lin Xiao.
\newblock Optimal distributed online prediction using mini-batches.
\newblock \emph{Journal of Machine Learning Research}, 13\penalty0 (1), 2012.

\bibitem[Diochnos et~al.(2019)Diochnos, Mahloujifar, and
  Mahmoody]{diochnos2019lower}
Dimitrios~I Diochnos, Saeed Mahloujifar, and Mohammad Mahmoody.
\newblock Lower bounds for adversarially robust pac learning.
\newblock \emph{arXiv preprint arXiv:1906.05815}, 2019.

\bibitem[du~Pin~Calmon and Fawaz(2012)]{du2012privacy}
Fl{\'a}vio du~Pin~Calmon and Nadia Fawaz.
\newblock Privacy against statistical inference.
\newblock In \emph{2012 50th annual Allerton conference on communication,
  control, and computing (Allerton)}, pages 1401--1408. IEEE, 2012.

\bibitem[Duchi et~al.(2011)Duchi, Hazan, and Singer]{duchi2011adaptive}
John Duchi, Elad Hazan, and Yoram Singer.
\newblock Adaptive subgradient methods for online learning and stochastic
  optimization.
\newblock \emph{Journal of machine learning research}, 12\penalty0 (7), 2011.

\bibitem[Dwork(2006)]{10.1007/11787006_1}
Cynthia Dwork.
\newblock Differential privacy.
\newblock In Michele Bugliesi, Bart Preneel, Vladimiro Sassone, and Ingo
  Wegener, editors, \emph{Automata, Languages and Programming}, pages 1--12,
  Berlin, Heidelberg, 2006. Springer Berlin Heidelberg.
\newblock ISBN 978-3-540-35908-1.

\bibitem[Feldman(2009)]{10.5555/1577069.1577076}
Vitaly Feldman.
\newblock On the power of membership queries in agnostic learning.
\newblock \emph{J. Mach. Learn. Res.}, 10:\penalty0 163–182, jun 2009.
\newblock ISSN 1532-4435.

\bibitem[Fowl et~al.(2021)Fowl, Goldblum, Chiang, Geiping, Czaja, and
  Goldstein]{fowl2021adversarial}
Liam Fowl, Micah Goldblum, Ping-yeh Chiang, Jonas Geiping, Wojtek Czaja, and
  Tom Goldstein.
\newblock Adversarial examples make strong poisons.
\newblock \emph{arXiv preprint arXiv:2106.10807}, 2021.

\bibitem[Geng et~al.(2021)Geng, Mou, Li, Li, Beyan, Decker, and
  Rong]{geng2021towards}
Jiahui Geng, Yongli Mou, Feifei Li, Qing Li, Oya Beyan, Stefan Decker, and
  Chunming Rong.
\newblock Towards general deep leakage in federated learning.
\newblock \emph{arXiv preprint arXiv:2110.09074}, 2021.

\bibitem[Gentry and Boneh(2009)]{gentry2009fully}
Craig Gentry and Dan Boneh.
\newblock \emph{A fully homomorphic encryption scheme}, volume~20.
\newblock Stanford university Stanford, 2009.

\bibitem[Geyer et~al.(2017)Geyer, Klein, and Nabi]{geyer2017differentially}
Robin~C Geyer, Tassilo Klein, and Moin Nabi.
\newblock Differentially private federated learning: A client level
  perspective.
\newblock \emph{arXiv preprint arXiv:1712.07557}, 2017.

\bibitem[Golovin and Zhang(2020)]{DBLP:journals/corr/abs-2006-04655}
Daniel Golovin and Qiuyi~(Richard) Zhang.
\newblock Random hypervolume scalarizations for provable multi-objective black
  box optimization.
\newblock \emph{CoRR}, abs/2006.04655, 2020.
\newblock URL \url{https://arxiv.org/abs/2006.04655}.

\bibitem[Gupta and Raskar(2018)]{gupta2018distributed}
Otkrist Gupta and Ramesh Raskar.
\newblock Distributed learning of deep neural network over multiple agents.
\newblock \emph{Journal of Network and Computer Applications}, 116:\penalty0
  1--8, 2018.

\bibitem[Hard et~al.(2018)Hard, Rao, Mathews, Ramaswamy, Beaufays, Augenstein,
  Eichner, Kiddon, and Ramage]{hard2018federated}
Andrew Hard, Kanishka Rao, Rajiv Mathews, Swaroop Ramaswamy, Fran{\c{c}}oise
  Beaufays, Sean Augenstein, Hubert Eichner, Chlo{\'e} Kiddon, and Daniel
  Ramage.
\newblock Federated learning for mobile keyboard prediction.
\newblock \emph{arXiv preprint arXiv:1811.03604}, 2018.

\bibitem[Hassanzadeh and Rouhani(2010)]{hassanzadeh2010multi}
Hamid~Reza Hassanzadeh and Modjtaba Rouhani.
\newblock A multi-objective gravitational search algorithm.
\newblock In \emph{2010 2nd international conference on computational
  intelligence, communication systems and networks}, pages 7--12. IEEE, 2010.

\bibitem[Hua et~al.(2021)Hua, Liu, Hao, and
  Jin]{DBLP:journals/ieeejas/HuaLHJ21}
Yicun Hua, Qiqi Liu, Kuangrong Hao, and Yaochu Jin.
\newblock A survey of evolutionary algorithms for multi-objective optimization
  problems with irregular pareto fronts.
\newblock \emph{{IEEE} {CAA} J. Autom. Sinica}, 8\penalty0 (2):\penalty0
  303--318, 2021.
\newblock \doi{10.1109/JAS.2021.1003817}.
\newblock URL \url{https://doi.org/10.1109/JAS.2021.1003817}.

\bibitem[Huang et~al.(2020)Huang, Chen, Liu, Chen, and Yang]{huang2020rpn}
Anbu Huang, Yuanyuan Chen, Yang Liu, Tianjian Chen, and Qiang Yang.
\newblock Rpn: A residual pooling network for efficient federated learning.
\newblock \emph{arXiv preprint arXiv:2001.08600}, 2020.

\bibitem[Hwang and Masud(2012)]{hwang2012multiple}
C-L Hwang and Abu Syed~Md Masud.
\newblock \emph{Multiple objective decision making—methods and applications:
  a state-of-the-art survey}, volume 164.
\newblock Springer Science \& Business Media, 2012.

\bibitem[Isermann(1982)]{isermann1982linear}
H~Isermann.
\newblock Linear lexicographic optimization.
\newblock \emph{Operations-Research-Spektrum}, 4\penalty0 (4):\penalty0
  223--228, 1982.

\bibitem[Jain and Deb(2013)]{jain2013evolutionary}
Himanshu Jain and Kalyanmoy Deb.
\newblock An evolutionary many-objective optimization algorithm using
  reference-point based nondominated sorting approach, part ii: Handling
  constraints and extending to an adaptive approach.
\newblock \emph{IEEE Transactions on evolutionary computation}, 18\penalty0
  (4):\penalty0 602--622, 2013.

\bibitem[Jeter(1986)]{jeter1986mathematical}
Melvyn Jeter.
\newblock \emph{Mathematical programming: an introduction to optimization},
  volume 102.
\newblock CRC press, 1986.

\bibitem[Jones et~al.(2010)Jones, Tamiz, et~al.]{jones2010practical}
Dylan Jones, Mehrdad Tamiz, et~al.
\newblock \emph{Practical goal programming}, volume 141.
\newblock Springer, 2010.

\bibitem[Kairouz et~al.(2021)Kairouz, McMahan, Avent, Bellet, Bennis, Bhagoji,
  Bonawitz, Charles, Cormode, Cummings, et~al.]{kairouz2021advances}
Peter Kairouz, H~Brendan McMahan, Brendan Avent, Aur{\'e}lien Bellet, Mehdi
  Bennis, Arjun~Nitin Bhagoji, Kallista Bonawitz, Zachary Charles, Graham
  Cormode, Rachel Cummings, et~al.
\newblock Advances and open problems in federated learning.
\newblock \emph{Foundations and Trends{\textregistered} in Machine Learning},
  14\penalty0 (1--2):\penalty0 1--210, 2021.

\bibitem[Karimireddy et~al.(2020)Karimireddy, Kale, Mohri, Reddi, Stich, and
  Suresh]{karimireddy2020scaffold}
Sai~Praneeth Karimireddy, Satyen Kale, Mehryar Mohri, Sashank Reddi, Sebastian
  Stich, and Ananda~Theertha Suresh.
\newblock Scaffold: Stochastic controlled averaging for federated learning.
\newblock In \emph{International Conference on Machine Learning}, pages
  5132--5143. PMLR, 2020.

\bibitem[Kim et~al.(2004)Kim, Hiroyasu, Miki, and Watanabe]{kim2004spea2+}
Mifa Kim, Tomoyuki Hiroyasu, Mitsunori Miki, and Shinya Watanabe.
\newblock Spea2+: Improving the performance of the strength pareto evolutionary
  algorithm 2.
\newblock In \emph{Parallel Problem Solving from Nature-PPSN VIII: 8th
  International Conference, Birmingham, UK, September 18-22, 2004. Proceedings
  8}, pages 742--751. Springer, 2004.

\bibitem[Kingma and Ba(2014)]{kingma2014adam}
Diederik~P Kingma and Jimmy Ba.
\newblock Adam: A method for stochastic optimization.
\newblock \emph{arXiv preprint arXiv:1412.6980}, 2014.

\bibitem[Lan(2012)]{lan2012optimal}
Guanghui Lan.
\newblock An optimal method for stochastic composite optimization.
\newblock \emph{Mathematical Programming}, 133\penalty0 (1-2):\penalty0
  365--397, 2012.

\bibitem[Lehman and Stanley(2011)]{lehman2011abandoning}
Joel Lehman and Kenneth~O Stanley.
\newblock Abandoning objectives: Evolution through the search for novelty
  alone.
\newblock \emph{Evolutionary computation}, 19\penalty0 (2):\penalty0 189--223,
  2011.

\bibitem[Li et~al.(2020)Li, Sahu, Talwalkar, and Smith]{li2020federated}
Tian Li, Anit~Kumar Sahu, Ameet Talwalkar, and Virginia Smith.
\newblock Federated learning: Challenges, methods, and future directions.
\newblock \emph{IEEE Signal Processing Magazine}, 37\penalty0 (3):\penalty0
  50--60, 2020.

\bibitem[Liu and Jin(2021)]{liu2021multi}
Jia Liu and Yaochu Jin.
\newblock Multi-objective search of robust neural architectures against
  multiple types of adversarial attacks.
\newblock \emph{Neurocomputing}, 453:\penalty0 73--84, 2021.

\bibitem[Liu et~al.(2022)Liu, Yan, Ligeti, and
  Jin]{DBLP:journals/corr/abs-2210-08295}
Qiqi Liu, Yuping Yan, P{\'{e}}ter Ligeti, and Yaochu Jin.
\newblock A secure federated data-driven evolutionary multi-objective
  optimization algorithm.
\newblock \emph{CoRR}, abs/2210.08295, 2022.
\newblock \doi{10.48550/arXiv.2210.08295}.
\newblock URL \url{https://doi.org/10.48550/arXiv.2210.08295}.

\bibitem[Liu et~al.(2021)Liu, Tong, and Liu]{liu2021profiling}
Xingchao Liu, Xin Tong, and Qiang Liu.
\newblock Profiling pareto front with multi-objective stein variational
  gradient descent.
\newblock \emph{Advances in Neural Information Processing Systems},
  34:\penalty0 14721--14733, 2021.

\bibitem[Makhdoumi and Fawaz(2013)]{makhdoumi2013privacy}
Ali Makhdoumi and Nadia Fawaz.
\newblock Privacy-utility tradeoff under statistical uncertainty.
\newblock In \emph{2013 51st Annual Allerton Conference on Communication,
  Control, and Computing (Allerton)}, pages 1627--1634. IEEE, 2013.

\bibitem[Mansour et~al.(2009)Mansour, Mohri, and
  Rostamizadeh]{mansour2009domain}
Yishay Mansour, Mehryar Mohri, and Afshin Rostamizadeh.
\newblock Domain adaptation: Learning bounds and algorithms.
\newblock \emph{arXiv preprint arXiv:0902.3430}, 2009.

\bibitem[Mansour et~al.(2021)Mansour, Mohri, Ro, Suresh, and
  Wu]{mansour2021theory}
Yishay Mansour, Mehryar Mohri, Jae Ro, Ananda~Theertha Suresh, and Ke~Wu.
\newblock A theory of multiple-source adaptation with limited target labeled
  data.
\newblock In \emph{International Conference on Artificial Intelligence and
  Statistics}, pages 2332--2340. PMLR, 2021.

\bibitem[Martins and Ning(2021)]{martins2021engineering}
Joaquim~RRA Martins and Andrew Ning.
\newblock \emph{Engineering design optimization}.
\newblock Cambridge University Press, 2021.

\bibitem[McMahan et~al.(2017)McMahan, Moore, Ramage, Hampson, and
  y~Arcas]{mcmahan2017communication}
Brendan McMahan, Eider Moore, Daniel Ramage, Seth Hampson, and Blaise~Aguera
  y~Arcas.
\newblock Communication-efficient learning of deep networks from decentralized
  data.
\newblock In \emph{Artificial Intelligence and Statistics}, pages 1273--1282.
  PMLR, 2017.

\bibitem[Miettinen(1998)]{Miettinen1998NonlinearMO}
Kaisa Miettinen.
\newblock Nonlinear multiobjective optimization.
\newblock In \emph{International Series in Operations Research and Management
  Science}, 1998.

\bibitem[Miettinen(1999)]{miettinen1999nonlinear}
Kaisa Miettinen.
\newblock \emph{Nonlinear multiobjective optimization}, volume~12.
\newblock Springer Science \& Business Media, 1999.

\bibitem[Mironov(2017)]{renyi_dp}
Ilya Mironov.
\newblock Rényi differential privacy.
\newblock pages 263--275, 08 2017.
\newblock \doi{10.1109/CSF.2017.11}.

\bibitem[Morell et~al.(2022)Morell, Dahi, Chicano, Luque, and
  Alba]{morell2022optimising}
Jos{\'e}~{\'A}ngel Morell, Zakaria~Abdelmoiz Dahi, Francisco Chicano, Gabriel
  Luque, and Enrique Alba.
\newblock Optimising communication overhead in federated learning using
  nsga-ii.
\newblock In \emph{Applications of Evolutionary Computation: 25th European
  Conference, EvoApplications 2022, Held as Part of EvoStar 2022, Madrid,
  Spain, April 20--22, 2022, Proceedings}, pages 317--333. Springer, 2022.

\bibitem[Navon et~al.(2020)Navon, Shamsian, Chechik, and
  Fetaya]{navon2020learning}
Aviv Navon, Aviv Shamsian, Gal Chechik, and Ethan Fetaya.
\newblock Learning the pareto front with hypernetworks.
\newblock \emph{arXiv preprint arXiv:2010.04104}, 2020.

\bibitem[Nguyen and Zakynthinou(2018)]{nguyen2018improved}
Huy Nguyen and Lydia Zakynthinou.
\newblock Improved algorithms for collaborative pac learning.
\newblock \emph{Advances in Neural Information Processing Systems}, 31, 2018.

\bibitem[Ogryczak et~al.(2005)Ogryczak, Pi{\'o}ro, and
  Tomaszewski]{ogryczak2005telecommunications}
Wlodzimierz Ogryczak, Michal Pi{\'o}ro, and Artur Tomaszewski.
\newblock Telecommunications network design and max-min optimization problem.
\newblock \emph{Journal of telecommunications and information technology},
  pages 43--56, 2005.

\bibitem[Papaspiliopoulos(2020)]{papaspiliopoulos2020high}
Omiros Papaspiliopoulos.
\newblock High-dimensional probability: An introduction with applications in
  data science, 2020.

\bibitem[Rassouli and G{\"u}nd{\"u}z(2019)]{rassouli2019optimal}
Borzoo Rassouli and Deniz G{\"u}nd{\"u}z.
\newblock Optimal utility-privacy trade-off with total variation distance as a
  privacy measure.
\newblock \emph{IEEE Transactions on Information Forensics and Security},
  15:\penalty0 594--603, 2019.

\bibitem[Sankar et~al.(2013)Sankar, Rajagopalan, and
  Poor]{DBLP:journals/tifs/SankarRP13}
Lalitha Sankar, S.~Raj Rajagopalan, and H.~Vincent Poor.
\newblock Utility-privacy tradeoffs in databases: An information-theoretic
  approach.
\newblock \emph{{IEEE} Trans. Inf. Forensics Secur.}, 8\penalty0 (6):\penalty0
  838--852, 2013.
\newblock \doi{10.1109/TIFS.2013.2253320}.
\newblock URL \url{https://doi.org/10.1109/TIFS.2013.2253320}.

\bibitem[Sen(1983)]{Chandra1983}
Chandra Sen.
\newblock A new approach for multiobjective rural development planning.
\newblock 30:\penalty0 91--96, 01 1983.

\bibitem[Sherali and Soyster(1983)]{sherali1983preemptive}
Hanif~D Sherali and Allen~L Soyster.
\newblock Preemptive and nonpreemptive multi-objective programming:
  Relationship and counterexamples.
\newblock \emph{Journal of Optimization Theory and Applications}, 39:\penalty0
  173--186, 1983.

\bibitem[Shokri and Shmatikov(2015)]{shokri2015privacy}
Reza Shokri and Vitaly Shmatikov.
\newblock Privacy-preserving deep learning.
\newblock In \emph{Proceedings of the 22nd ACM SIGSAC conference on computer
  and communications security}, pages 1310--1321, 2015.

\bibitem[Srinivas and Deb(1994)]{6791727}
N.~Srinivas and Kalyanmoy Deb.
\newblock Muiltiobjective optimization using nondominated sorting in genetic
  algorithms.
\newblock \emph{Evolutionary Computation}, 2\penalty0 (3):\penalty0 221--248,
  1994.
\newblock \doi{10.1162/evco.1994.2.3.221}.

\bibitem[Stich(2018)]{stich2018local}
Sebastian~U Stich.
\newblock Local sgd converges fast and communicates little.
\newblock \emph{arXiv preprint arXiv:1805.09767}, 2018.

\bibitem[Suman and Kumar(2006)]{suman2006survey}
Balram Suman and Prabhat Kumar.
\newblock A survey of simulated annealing as a tool for single and
  multiobjective optimization.
\newblock \emph{Journal of the operational research society}, 57:\penalty0
  1143--1160, 2006.

\bibitem[Tafakkori et~al.(2022)Tafakkori, Tavakkoli-Moghaddam, and
  Siadat]{tafakkori2022sustainable}
Keivan Tafakkori, Reza Tavakkoli-Moghaddam, and Ali Siadat.
\newblock Sustainable negotiation-based nesting and scheduling in additive
  manufacturing systems: A case study and multi-objective meta-heuristic
  algorithms.
\newblock \emph{Engineering Applications of Artificial Intelligence},
  112:\penalty0 104836, 2022.

\bibitem[Tamiz et~al.(1998)Tamiz, Jones, and Romero]{tamiz1998goal}
Mehrdad Tamiz, Dylan Jones, and Carlos Romero.
\newblock Goal programming for decision making: An overview of the current
  state-of-the-art.
\newblock \emph{European Journal of operational research}, 111\penalty0
  (3):\penalty0 569--581, 1998.

\bibitem[Thapa et~al.(2020)Thapa, Chamikara, and Camtepe]{thapa2020splitfed}
Chandra Thapa, Mahawaga Arachchige~Pathum Chamikara, and Seyit Camtepe.
\newblock Splitfed: When federated learning meets split learning.
\newblock \emph{arXiv preprint arXiv:2004.12088}, 2020.

\bibitem[Tian et~al.(2022{\natexlab{a}})Tian, Chen, Ma, Zhang, Tan, and
  Jin]{DBLP:journals/ieeejas/TianCMZTJ22}
Ye~Tian, Haowen Chen, Haiping Ma, Xingyi Zhang, Kay~Chen Tan, and Yaochu Jin.
\newblock Integrating conjugate gradients into evolutionary algorithms for
  large-scale continuous multi-objective optimization.
\newblock \emph{{IEEE} {CAA} J. Autom. Sinica}, 9\penalty0 (10):\penalty0
  1801--1817, 2022{\natexlab{a}}.
\newblock \doi{10.1109/JAS.2022.105875}.
\newblock URL \url{https://doi.org/10.1109/JAS.2022.105875}.

\bibitem[Tian et~al.(2022{\natexlab{b}})Tian, Si, Zhang, Cheng, He, Tan, and
  Jin]{DBLP:journals/csur/TianSZCHTJ22}
Ye~Tian, Langchun Si, Xingyi Zhang, Ran Cheng, Cheng He, Kay~Chen Tan, and
  Yaochu Jin.
\newblock Evolutionary large-scale multi-objective optimization: {A} survey.
\newblock \emph{{ACM} Comput. Surv.}, 54\penalty0 (8):\penalty0 174:1--174:34,
  2022{\natexlab{b}}.
\newblock \doi{10.1145/3470971}.
\newblock URL \url{https://doi.org/10.1145/3470971}.

\bibitem[Truex et~al.(2020)Truex, Liu, Chow, Gursoy, and Wei]{truex2020ldp}
Stacey Truex, Ling Liu, Ka-Ho Chow, Mehmet~Emre Gursoy, and Wenqi Wei.
\newblock Ldp-fed: Federated learning with local differential privacy.
\newblock In \emph{Proceedings of the Third ACM International Workshop on Edge
  Systems, Analytics and Networking}, pages 61--66, 2020.

\bibitem[Tsipras et~al.(2018)Tsipras, Santurkar, Engstrom, Turner, and
  Madry]{tsipras2018robustness}
Dimitris Tsipras, Shibani Santurkar, Logan Engstrom, Alexander Turner, and
  Aleksander Madry.
\newblock Robustness may be at odds with accuracy.
\newblock \emph{arXiv preprint arXiv:1805.12152}, 2018.

\bibitem[Valiant(1984)]{valiant1984theory}
Leslie~G Valiant.
\newblock A theory of the learnable.
\newblock \emph{Communications of the ACM}, 27\penalty0 (11):\penalty0
  1134--1142, 1984.

\bibitem[Vargas et~al.(2015)Vargas, Murata, Takano, and
  Delbem]{vargas2015general}
Danilo~Vasconcellos Vargas, Junichi Murata, Hirotaka Takano, and Alexandre
  Cl{\'a}udio~Botazzo Delbem.
\newblock General subpopulation framework and taming the conflict inside
  populations.
\newblock \emph{Evolutionary computation}, 23\penalty0 (1):\penalty0 1--36,
  2015.

\bibitem[Wang et~al.(2019)Wang, Mathews, Kiddon, Eichner, Beaufays, and
  Ramage]{wang2019federated}
Kangkang Wang, Rajiv Mathews, Chlo{\'e} Kiddon, Hubert Eichner, Fran{\c{c}}oise
  Beaufays, and Daniel Ramage.
\newblock Federated evaluation of on-device personalization.
\newblock \emph{arXiv preprint arXiv:1910.10252}, 2019.

\bibitem[Wellner et~al.(2013)]{wellner2013weak}
Jon Wellner et~al.
\newblock \emph{Weak convergence and empirical processes: with applications to
  statistics}.
\newblock Springer Science \& Business Media, 2013.

\bibitem[Wierzbicki(1982)]{WIERZBICKI1982391}
Andrzej~P. Wierzbicki.
\newblock A mathematical basis for satisficing decision making.
\newblock \emph{Mathematical Modelling}, 3\penalty0 (5):\penalty0 391--405,
  1982.
\newblock ISSN 0270-0255.
\newblock \doi{https://doi.org/10.1016/0270-0255(82)90038-0}.
\newblock URL
  \url{https://www.sciencedirect.com/science/article/pii/0270025582900380}.
\newblock Special IIASA Issue.

\bibitem[Xu et~al.(2021{\natexlab{a}})Xu, Jin, and
  Du]{DBLP:journals/corr/abs-2106-12086}
Jinjin Xu, Yaochu Jin, and Wenli Du.
\newblock A federated data-driven evolutionary algorithm for expensive
  multi/many-objective optimization.
\newblock \emph{CoRR}, abs/2106.12086, 2021{\natexlab{a}}.
\newblock URL \url{https://arxiv.org/abs/2106.12086}.

\bibitem[Xu et~al.(2021{\natexlab{b}})Xu, Jin, Du, and
  Gu]{DBLP:journals/corr/abs-2102-08288}
Jinjin Xu, Yaochu Jin, Wenli Du, and Sai Gu.
\newblock A federated data-driven evolutionary algorithm.
\newblock \emph{CoRR}, abs/2102.08288, 2021{\natexlab{b}}.
\newblock URL \url{https://arxiv.org/abs/2102.08288}.

\bibitem[Yang et~al.(2019{\natexlab{a}})Yang, Liu, Chen, and
  Tong]{yang2019federated}
Qiang Yang, Yang Liu, Tianjian Chen, and Yongxin Tong.
\newblock Federated machine learning: Concept and applications.
\newblock \emph{ACM Transactions on Intelligent Systems and Technology (TIST)},
  10\penalty0 (2):\penalty0 1--19, 2019{\natexlab{a}}.

\bibitem[Yang et~al.(2019{\natexlab{b}})Yang, Liu, Cheng, Kang, Chen, and
  Yu]{DBLP:series/synthesis/2019YangLCKCY}
Qiang Yang, Yang Liu, Yong Cheng, Yan Kang, Tianjian Chen, and Han Yu.
\newblock \emph{Federated Learning}.
\newblock Synthesis Lectures on Artificial Intelligence and Machine Learning.
  Morgan {\&} Claypool Publishers, 2019{\natexlab{b}}.
\newblock \doi{10.2200/S00960ED2V01Y201910AIM043}.
\newblock URL \url{https://doi.org/10.2200/S00960ED2V01Y201910AIM043}.

\bibitem[Yang et~al.(2023)Yang, Huang, Fan, Chan, Lim, Ng, Ong, and
  Li]{yang2023federated}
Qiang Yang, Anbu Huang, Lixin Fan, Chee~Seng Chan, Jian~Han Lim, Kam~Woh Ng,
  Ding~Sheng Ong, and Bowen Li.
\newblock Federated learning with privacy-preserving and model
  ip-right-protection.
\newblock \emph{Machine Intelligence Research}, 20\penalty0 (1):\penalty0
  19--37, 2023.

\bibitem[Yin et~al.(2022)Yin, Chen, and Tao]{yin2022predictive}
Benshun Yin, Zhiyong Chen, and Meixia Tao.
\newblock Predictive gan-powered multi-objective optimization for hybrid
  federated split learning.
\newblock \emph{arXiv preprint arXiv:2209.02428}, 2022.

\bibitem[Zhang and Tao(2010)]{zhang2010risk}
Chao Zhang and Dacheng Tao.
\newblock Risk bounds for l{\'e}vy processes in the pac-learning framework.
\newblock In \emph{Proceedings of the Thirteenth International Conference on
  Artificial Intelligence and Statistics}, pages 948--955. JMLR Workshop and
  Conference Proceedings, 2010.

\bibitem[Zhang et~al.(2020{\natexlab{a}})Zhang, Li, Xia, Wang, Yan, and
  Liu]{batchCryp}
Chengliang Zhang, Suyi Li, Junzhe Xia, Wei Wang, Feng Yan, and Yang Liu.
\newblock Batchcrypt: Efficient homomorphic encryption for cross-silo federated
  learning.
\newblock In \emph{2020 {USENIX} Annual Technical Conference ({USENIX} {ATC}
  20)}, pages 493--506. {USENIX} Association, July 2020{\natexlab{a}}.
\newblock ISBN 978-1-939133-14-4.
\newblock URL
  \url{https://www.usenix.org/conference/atc20/presentation/zhang-chengliang}.

\bibitem[Zhang et~al.(2020{\natexlab{b}})Zhang, Li, Xia, Wang, Yan, and
  Liu]{zhang2020batchcrypt}
Chengliang Zhang, Suyi Li, Junzhe Xia, Wei Wang, Feng Yan, and Yang Liu.
\newblock Batchcrypt: Efficient homomorphic encryption for cross-silo federated
  learning.
\newblock In \emph{Proceedings of the 2020 USENIX Annual Technical Conference
  (USENIX ATC 2020)}, 2020{\natexlab{b}}.

\bibitem[Zhang et~al.(2019)Zhang, Yu, Jiao, Xing, El~Ghaoui, and
  Jordan]{zhang2019theoretically}
Hongyang Zhang, Yaodong Yu, Jiantao Jiao, Eric Xing, Laurent El~Ghaoui, and
  Michael Jordan.
\newblock Theoretically principled trade-off between robustness and accuracy.
\newblock In \emph{International conference on machine learning}, pages
  7472--7482. PMLR, 2019.

\bibitem[Zhang et~al.(2022{\natexlab{a}})Zhang, Gu, Fan, Chen, and
  Yang]{zhang2022no}
Xiaojin Zhang, Hanlin Gu, Lixin Fan, Kai Chen, and Qiang Yang.
\newblock No free lunch theorem for security and utility in federated learning.
\newblock \emph{ACM Transactions on Intelligent Systems and Technology},
  14\penalty0 (1):\penalty0 1--35, 2022{\natexlab{a}}.

\bibitem[Zhang et~al.(2022{\natexlab{b}})Zhang, Kang, Chen, Fan, and
  Yang]{zhang2022trading}
Xiaojin Zhang, Yan Kang, Kai Chen, Lixin Fan, and Qiang Yang.
\newblock Trading off privacy, utility and efficiency in federated learning.
\newblock \emph{arXiv preprint arXiv:2209.00230}, 2022{\natexlab{b}}.

\bibitem[Zhong and Bu(2022)]{Zhong2022PrivacyUtilityT}
Hao Zhong and Kaifeng Bu.
\newblock Privacy-utility trade-off.
\newblock \emph{ArXiv}, abs/2204.12057, 2022.

\bibitem[Zhu and Jin(2019)]{zhu2019multi}
Hangyu Zhu and Yaochu Jin.
\newblock Multi-objective evolutionary federated learning.
\newblock \emph{IEEE transactions on neural networks and learning systems},
  31\penalty0 (4):\penalty0 1310--1322, 2019.

\bibitem[Zhu and Blaschko(2020)]{zhu2020r}
Junyi Zhu and Matthew Blaschko.
\newblock R-gap: Recursive gradient attack on privacy.
\newblock \emph{arXiv preprint arXiv:2010.07733}, 2020.

\bibitem[Zhu et~al.(2019)Zhu, Liu, , and Han]{zhu2019dlg}
Ligeng Zhu, Zhijian Liu, , and Song Han.
\newblock Deep leakage from gradients.
\newblock In \emph{Annual Conference on Neural Information Processing Systems
  (NeurIPS)}, 2019.

\end{thebibliography}

\clearpage

\appendix
\section{Notations}
The notations used throughout this article is shown in the following table.

\begin{table*}[!ht]
\footnotesize
  \centering
  \setlength{\belowcaptionskip}{15pt}
  \caption{Table of Notation}
  \label{table: notation}
    \begin{tabular}{cc}
    \toprule
    Notation & Meaning\cr
    \midrule\
    $\epsilon_p^{(k)}$ & privacy leakage of client $k$\cr
    $\epsilon_u^{(k)}$ & utility loss of client $k$\cr
    $\calL$ & loss function\cr
    $g$ & the original gradient\cr
    $\wtilde g$ & the distorted gradient\cr
    $\Delta$ & distortion extent\cr
    $\theta$ & the original model parameter\cr
    $\wtilde \theta$ & the distorted model parameter\cr
    $\calX$ & a set of instances\cr
    $\calY$ & a set of labels\cr
    $P$ & distribution over instances $\calX$\cr
    \bottomrule
    \end{tabular}
\end{table*}

\section{Analysis for \pref{thm: privacy_distortion_and_datasize}}
\label{appendix: analysis_for_privacy_distortion_and_datasize}

\begin{lem}[Chernoff-Hoeffding Inequality]
\label{lem:hoeffdingbound}
Let $X_1, X_2, \dots, X_t$ be \textit{i.i.d.}\ random variables supported on $[0, 1]$. For any positive number $\epsilon$, we have
    \begin{align} 
    \Pr\left( \left|\frac{1}{t} \sum_{i = 1}^t X_i - 
    E\left[\frac{1}{t} \sum_{i = 1}^t X_i\right]\right| \ge \epsilon \right) \le 2 \exp \left( -2t\epsilon^2 \right).
    \end{align}
\end{lem}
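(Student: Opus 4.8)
The plan is to prove this two-sided concentration bound via the classical Cram\'er--Chernoff exponential moment method. First I would reduce the two-sided statement to a one-sided tail bound. Writing $S_t = \sum_{i=1}^t (X_i - E[X_i])$ for the centered sum, the event whose probability we must control equals $\{|S_t| \ge t\epsilon\}$, and a union bound gives $\Pr(|S_t|\ge t\epsilon) \le \Pr(S_t \ge t\epsilon) + \Pr(-S_t \ge t\epsilon)$. Since each $-X_i$ is again supported on an interval of length one, the lower tail is handled by the identical argument applied to $-X_i$, so it suffices to bound the upper tail and collect the factor of $2$ at the very end.

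For the upper tail I would apply the exponential Markov inequality: for every $s>0$, $\Pr(S_t \ge t\epsilon) = \Pr(e^{sS_t}\ge e^{st\epsilon}) \le e^{-st\epsilon}\,E[e^{sS_t}]$. Independence of the $X_i$ factorizes the moment generating function, $E[e^{sS_t}] = \prod_{i=1}^t E[e^{s(X_i - E[X_i])}]$, which reduces the entire problem to controlling the MGF of a single centered, bounded random variable.

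The crux of the argument --- and the step I expect to be the main obstacle --- is Hoeffding's lemma: for a random variable $Y$ with $E[Y]=0$ and $Y \in [a,b]$, one has $E[e^{sY}] \le e^{s^2(b-a)^2/8}$. Here each $Y_i = X_i - E[X_i]$ lies in an interval of length $b-a = 1$, so $E[e^{sY_i}]\le e^{s^2/8}$. I would establish Hoeffding's lemma by analyzing the cumulant generating function $\psi(s) = \ln E[e^{sY}]$: a direct computation gives $\psi(0)=0$, $\psi'(0)=E[Y]=0$, and $\psi''(s) = \Var_s(Y) \le (b-a)^2/4$, where $\Var_s$ denotes the variance under the exponentially tilted measure $\der\Q_s \propto e^{sY}\der\Q$. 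The final inequality is Popoviciu's bound, valid because tilting preserves the support $[a,b]$, so $Y$ still lives in an interval of length $b-a$. A second-order Taylor expansion of $\psi$ then yields $\psi(s)\le s^2(b-a)^2/8$. (An equivalent, more elementary route replaces the tilting computation by the chord bound $e^{sy}\le \frac{b-y}{b-a}e^{sa}+\frac{y-a}{b-a}e^{sb}$ for $y\in[a,b]$ and differentiates the resulting auxiliary function.)

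Combining the pieces gives $\Pr(S_t \ge t\epsilon) \le e^{-st\epsilon + ts^2/8}$ for all $s>0$; minimizing the exponent over $s$ yields the optimal choice $s = 4\epsilon$ and the bound $e^{-2t\epsilon^2}$. Adding the symmetric lower-tail contribution produces the factor of $2$, establishing $\Pr(|S_t|\ge t\epsilon)\le 2e^{-2t\epsilon^2}$, which is precisely the claimed inequality.
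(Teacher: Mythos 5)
Your proof is correct and complete: the union-bound reduction, the Cram\'er--Chernoff bound with Hoeffding's lemma (proved via the tilted-measure/Popoviciu argument), and the optimization $s=4\epsilon$ giving the exponent $-2t\epsilon^2$ are all carried out properly. Note that the paper itself offers no proof of this lemma---it is stated and used as a classical, off-the-shelf result---so your argument simply supplies the standard textbook derivation that the paper implicitly relies on, and it matches that canonical approach exactly.
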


\begin{thm}[Upper Bound for Privacy Leakage]\label{thm: privacy_distortion_and_datasize_app}
    Let \pref{assump: two-sided Lipschitz} and \pref{assump: bounds_for_optimization_alg} hold. Assume that $\Delta_{\text{up}}^{(k)}\ge \frac{2 c_2\cdot c_b}{c_a\sqrt{T}}$, where $T$ represents the total number of the learning rounds of the semi-honest attacker, $c_a, c_2$ and $c_b$ are introduced in \pref{assump: two-sided Lipschitz} and \pref{assump: bounds_for_optimization_alg}. With probability at least $1 - \gamma^{(k)}$, we have 
  \begin{align}
    \epsilon_p^{(k)}\le 1 + \sqrt{\frac{\ln(2/\gamma^{(k)})}{2|\calD_{\protector}^{(k)}|}} - \frac{c_a}{2D}\cdot\Delta_{\text{up}}^{(k)}.
  \end{align}
\end{thm}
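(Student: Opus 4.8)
The plan is to unfold \pref{defi:Privacy_Leakage} and reduce the claim to a lower bound on the \emph{expected} normalized reconstruction distortion. Writing $A = \E\big[\frac{1}{|\calD_{\attacker}^{(k)}|}\sum_i \frac{1}{T}\sum_t \|X_{t,i}^{(k)} - X_i^{(k)}\|/D\big]$, so that $\epsilon_p^{(k)} = 1 - A$, the target inequality is equivalent to $A \ge \frac{c_a}{2D}\Delta_{\text{up}}^{(k)} - \sqrt{\ln(2/\gamma^{(k)})/(2|\calD_{\protector}^{(k)}|)}$. I would therefore split the argument into two independent pieces: a deterministic per-sample lower bound on the empirical reconstruction error, and a concentration step passing from the empirical average to its expectation $A$.

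For the deterministic piece, fix a sample $i$ with true data $X_i$ (true gradient $g$) and let $\wtilde X$ be the distorted feature attached to the uploaded gradient $\wtilde g$, so that the uplink distortion extent coincides with the gradient perturbation norm, $\|\wtilde g - g\| = \Delta_{\text{up}}^{(k)}$ (cf.\ \pref{thm: relation_between_two_link_distortion_and_delta_mt}). I would compare the round-$t$ reconstruction $X_{t,i}$ to $X_i$ through the triangle inequality $\|X_{t,i} - X_i\| \ge \|\wtilde X - X_i\| - \|X_{t,i} - \wtilde X\|$. The first term is lower-bounded by $c_a\Delta_{\text{up}}^{(k)}$ using the \emph{left} inequality of \pref{assump: two-sided Lipschitz} applied to $g$ and $\wtilde g$; the second is upper-bounded by $c_b\|\nabla\calL(\theta,X_{t,i},Y) - \wtilde g\|$ using the \emph{right} inequality. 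Averaging over $t$ and invoking \pref{assump: bounds_for_optimization_alg} to control $\frac{1}{T}\sum_t \|\nabla\calL(\theta,X_{t,i},Y) - \wtilde g\| \le c_2 T^{-1/2}$ gives $\frac{1}{T}\sum_t \|X_{t,i}-X_i\|/D \ge \frac{c_a}{D}\Delta_{\text{up}}^{(k)} - \frac{c_b c_2}{D\sqrt T}$. The hypothesis $\Delta_{\text{up}}^{(k)} \ge 2c_2 c_b/(c_a\sqrt T)$ is exactly what absorbs the regret term, leaving the clean bound $\ge \frac{c_a}{2D}\Delta_{\text{up}}^{(k)}$, uniformly in $i$; averaging over the mini-batch yields the same lower bound on the empirical average $\hat A$.

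For the concentration piece, I would note that the per-sample quantities $Z_i = \frac{1}{T}\sum_t \|X_{t,i}-X_i\|/D$ lie in $[0,1]$ (since $\|X_{t,i}-X_i\|\le D$ by \pref{assump: two-sided Lipschitz}) and are i.i.d.\ across the mini-batch, so \pref{lem:hoeffdingbound} applied to $\hat A$ gives $A \ge \hat A - \sqrt{\ln(2/\gamma^{(k)})/(2|\calD_{\protector}^{(k)}|)}$ with probability at least $1-\gamma^{(k)}$. Chaining this with the deterministic bound on $\hat A$ and substituting back into $\epsilon_p^{(k)} = 1 - A$ delivers the stated inequality. The main obstacle I anticipate is the bookkeeping of the two Lipschitz constants: one must route the lower bound on $\|\wtilde X - X_i\|$ through $c_a$ and the upper bound on $\|X_{t,i}-\wtilde X\|$ through $c_b$, since any swap produces the spurious threshold $2c_2/\sqrt T$ rather than the $2c_2 c_b/(c_a\sqrt T)$ that the hypothesis is tailored to cancel. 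A secondary subtlety is justifying that the regret bound of \pref{assump: bounds_for_optimization_alg} applies at the level of the individual reconstructions $X_{t,i}$ rather than only to the aggregated batch gradient.
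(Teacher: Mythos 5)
Your overall route --- the triangle inequality on each reconstruction, the two-sided Lipschitz assumption, absorbing the $c_2 c_b T^{-1/2}$ regret term via the hypothesis on $\Delta_{\text{up}}^{(k)}$, then one-sided Hoeffding to pass from the empirical average to the expectation --- is exactly the paper's proof. However, there is one genuine gap: your deterministic bound is claimed \emph{per sample}, ``uniformly in $i$'', and that step is false as stated. The uplink distortion extent is a batch-level quantity: by \pref{lem: relation_between_Delta_and_delta}, $\Delta_{\text{up}}^{(k)} = \|\delta^{(k)}\| = \big\|\frac{1}{|\calD_{\attacker}^{(k)}|}\sum_i \big(\nabla\calL(\theta,\wtilde X_i^{(k)},Y_i^{(k)}) - \nabla\calL(\theta,X_i^{(k)},Y_i^{(k)})\big)\big\|$, whereas \pref{assump: two-sided Lipschitz} only gives $\|\wtilde X_i^{(k)} - X_i^{(k)}\| \ge c_a\|\nabla\calL(\theta,\wtilde X_i^{(k)},Y_i^{(k)}) - \nabla\calL(\theta,X_i^{(k)},Y_i^{(k)})\|$, a bound in terms of the \emph{per-sample} gradient difference. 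That per-sample difference can be far smaller than $\Delta_{\text{up}}^{(k)}$ (the batch perturbation can be carried by a few samples while others are untouched, in which case the untouched samples violate your claimed bound), so the assertion that the first triangle-inequality term is $\ge c_a\Delta_{\text{up}}^{(k)}$ for each fixed $i$ does not follow. Your phrase ``applied to $g$ and $\wtilde g$'' is the symptom: the gradient attached to $X_i^{(k)}$ is its per-sample gradient, not the batch gradient $g^{(k)}$, and there is no single feature $\wtilde X$ whose distance to every $X_i^{(k)}$ controls $\|\wtilde g^{(k)} - g^{(k)}\|$.

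The repair is precisely the move the paper makes: do not fix $i$. Average the Lipschitz lower bound over the mini-batch first, then apply the triangle inequality for norms, $\frac{1}{n}\sum_i\|a_i\| \ge \big\|\frac{1}{n}\sum_i a_i\big\|$, to pull the average inside the norm and recognize $\|\delta^{(k)}\| = \Delta_{\text{up}}^{(k)}$. With the bound established for the batch average rather than per sample, your two remaining steps (absorbing the regret term using $\Delta_{\text{up}}^{(k)} \ge 2c_2c_b/(c_a\sqrt T)$, and Hoeffding over the i.i.d.\ $[0,1]$-valued per-sample quantities) go through verbatim and give the stated inequality. On your ``secondary subtlety'': you are right to flag it, and the paper does not resolve it either --- \pref{assump: bounds_for_optimization_alg} is invoked there on the double average $\frac{1}{T}\sum_t\frac{1}{|\calD_{\attacker}^{(k)}|}\sum_i\|\nabla\calL(\theta,X_{t,i}^{(k)},Y_i^{(k)}) - \nabla\calL(\theta,\wtilde X_i^{(k)},Y_i^{(k)})\|$, i.e., at the level of individual reconstructions, which is a charitable reading of the assumption rather than a consequence of its batch-level statement.
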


\begin{proof}
   Let $X_{t,i}^{(k)}$ represent the $i$-th data of the protector $k$ that is recovered by the attacker at the $t$-th round of the optimization algorithm, and $X_{i}^{(k)}$ represent the $i$-th original data of the protector $k$. Let $\calD_{\attacker}^{(k)} = \{(X_{1}^{(k)}, Y_1^{(k)}), \cdots, (X_{m_k}^{(k)}, Y_{m_k}^{(k)})\}$ represent the mini-batch that generates $\wtilde g$. We have that
\begin{align}
    &\frac{1}{|\calD_{\attacker}^{(k)}|}\sum_{ i = 1}^{|\calD_{\attacker}^{(k)}|}\frac{1}{T}\sum_{t = 1}^T ||X_{t,i}^{(k)} - X_{i}^{(k)}||\nonumber\\
    &\ge \frac{1}{|\calD_{\attacker}^{(k)}|}\sum_{ i = 1}^{|\calD_{\attacker}^{(k)}|}\frac{1}{T}\sum_{t = 1}^T||\wtilde X_{i}^{(k)} - X_{i}^{(k)}|| - \frac{1}{|\calD_{\attacker}^{(k)}|}\sum_{ i = 1}^{|\calD_{\attacker}^{(k)}|}\frac{1}{T}\sum_{t = 1}^T||X_{t,i}^{(k)} - \wtilde X_{i}^{(k)}||\nonumber\\
    &\ge  c_a\cdot\frac{1}{T}\sum_{t = 1}^T\frac{1}{|\calD_{\attacker}^{(k)}|}\sum_{ i = 1}^{|\calD_{\attacker}^{(k)}|}||\nabla \calL(\theta, \wtilde X_{i}^{(k)}, Y_i^{(k)}) - \nabla \calL(\theta, X_{i}^{(k)}, Y_i^{(k)})||\nonumber\\ 
    &\quad - c_b\cdot\frac{1}{T}\sum_{t = 1}^T\frac{1}{|\calD_{\attacker}^{(k)}|}\sum_{ i = 1}^{|\calD_{\attacker}^{(k)}|}||\nabla \calL(\theta, X_{t,i}^{(k)}, Y_i^{(k)}) - \nabla \calL(\theta, \wtilde X_{i}^{(k)}, Y_i^{(k)})||\nonumber\\
    &\ge c_a\cdot\frac{1}{T}\sum_{t = 1}^T||\frac{1}{|\calD_{\attacker}^{(k)}|}\sum_{ i = 1}^{|\calD_{\attacker}^{(k)}|} (\nabla \calL(\theta, \wtilde X_{i}^{(k)}, Y_i^{(k)}) - \nabla \calL(\theta, X_{i}^{(k)}, Y_i^{(k)}))||\nonumber\\
    &\quad - c_b\cdot\frac{1}{T}\sum_{t = 1}^T\frac{1}{|\calD_{\attacker}^{(k)}|}\sum_{ i = 1}^{|\calD_{\attacker}^{(k)}|}||\nabla \calL(\theta, X_{t,i}^{(k)}, Y_i^{(k)}) - \nabla \calL(\theta, \wtilde X_{i}^{(k)}, Y_i^{(k)})||\nonumber\\
    & = c_a\cdot\Delta_{\text{up}}^{(k)} - c_b\cdot\frac{1}{T}\sum_{t = 1}^T\frac{1}{|\calD_{\attacker}^{(k)}|}\sum_{ i = 1}^{|\calD_{\attacker}^{(k)}|}||\nabla \calL(\theta, X_{t,i}^{(k)}, Y_i^{(k)}) - \nabla \calL(\theta, \wtilde X_{i}^{(k)}, Y_i^{(k)})||\label{eq: data_gap_terms}, 
\end{align}
where the second inequality is due to $||\wtilde X_{i}^{(k)} - X_{i}^{(k)}||\ge c_a ||\nabla \calL(\theta, \wtilde X_{i}^{(k)}, Y_i^{(k)}) - \nabla \calL(\theta, X_{i}^{(k)}, Y_i^{(k)})||$ and $||X_{t,i}^{(k)} - \wtilde X_{i}^{(k)}||\le c_b ||\nabla \calL(\theta, X_{t,i}^{(k)}, Y_i^{(k)}) - \nabla \calL(\theta, \wtilde X_{i}^{(k)}, Y_i^{(k)})||$, the third inequality is due to the triangle inequality ($\|a\| + \|b\|\ge\|a + b\|$), and the last equality is due to $\|\frac{1}{|\calD_{\attacker}^{(k)}|}\sum_{ i = 1}^{|\calD_{\attacker}^{(k)}|} (\nabla \calL(\theta, \wtilde X_{i}^{(k)}, Y_i^{(k)}) - \nabla \calL(\theta, X_{i}^{(k)}, Y_i^{(k)}))\| = \|\delta^{(k)}\| = \Delta_{\text{up}}^{(k)}$ from \pref{lem: relation_between_Delta_and_delta}. Now we bound the second term of \pref{eq: data_gap_terms}. We have that
\begin{align*}
    &c_a\cdot\Delta_{\text{up}}^{(k)} - c_b\cdot\frac{1}{T}\sum_{t = 1}^T\frac{1}{|\calD_{\attacker}^{(k)}|}\sum_{ i = 1}^{|\calD_{\attacker}^{(k)}|}||\nabla \calL(\theta, X_{t,i}^{(k)}, Y_i^{(k)}) - \nabla \calL(\theta, \wtilde X_{i}^{(k)}, Y_i^{(k)})||\\
    &\ge c_a\Delta_{\text{up}}^{(k)} - c_2\cdot c_b T^{-\frac{1}{2}}\\
    &\ge\frac{c_a}{2}\cdot\Delta_{\text{up}}^{(k)},
\end{align*}
where the second inequality is due to $c_a\Delta_{\text{up}}^{(k)}\ge 2 c_2\cdot c_b T^{-\frac{1}{2}}$ from our assumption.\\
Using Hoeffding's inequality (\pref{lem:hoeffdingbound}), we have that with probability at least $1 - \delta$,
\begin{align}
    \left|\frac{1}{|\calD_{\attacker}^{(k)}|} \sum_{i = 1}^{|\calD_{\attacker}^{(k)}|} X_i - \E\left[\frac{1}{|\calD_{\attacker}^{(k)}|} \sum_{i = 1}^{|\calD_{\attacker}^{(k)}|} X_i\right]\right|\le \sqrt{\frac{\ln(2/\gamma^{(k)})}{2|\calD_{\attacker}^{(k)}|}},
\end{align}
where $|\calD_{\attacker}^{(k)}|$ represents the size of the mini-batch (the training set of the attacker).\\
Therefore, with probability at least $1 - \delta$, we have
\begin{align*}
   &\frac{1}{|\calD_{\attacker}^{(k)}|} \sum_{i = 1}^{|\calD_{\attacker}^{(k)}|} \frac{1}{T}\sum_{t = 1}^T \frac{||X_{t,i}^{(k)} - X_{i}^{(k)}||}{D}\\
   &\le \E\left[\frac{1}{|\calD_{\attacker}^{(k)}|} \sum_{i = 1}^{|\calD_{\attacker}^{(k)}|} \frac{1}{T}\sum_{t = 1}^T \frac{||X_{t,i}^{(k)} - X_{i}^{(k)}||}{D}\right] + \sqrt{\frac{\ln(2/\gamma^{(k)})}{2|\calD_{\attacker}^{(k)}|}}\\
   & = 1 - \epsilon_p^{(k)} + \sqrt{\frac{\ln(2/\gamma^{(k)})}{2|\calD_{\attacker}^{(k)}|}}.
\end{align*}
Therefore, we have
\begin{align*}
     1 - \epsilon_p^{(k)} + \sqrt{\frac{\ln(2/\gamma^{(k)})}{2|\calD_{\attacker}^{(k)}|}} 
     & \ge \frac{1}{|\calD_{\attacker}^{(k)}|}\sum_{ i = 1}^{|\calD_{\attacker}^{(k)}|}\frac{1}{T}\sum_{t = 1}^T \frac{||X_{t,i}^{(k)} - X_{i}^{(k)}||}{D}\\
     &\ge\frac{c_a\Delta_{\text{up}}^{(k)}}{2D}.
\end{align*}
Therefore,
   \begin{align}
    \Delta_{\text{up}}^{(k)}\le \frac{2D}{c_a}\cdot (1 - \epsilon_p^{(k)}) + \frac{2D}{c_a}\cdot\sqrt{\frac{\ln(2/\gamma^{(k)})}{2|\calD_{\attacker}^{(k)}|}}. 
  \end{align}
Notice that 
\begin{align}
    |\calD_{\attacker}^{(k)}| = |\calD_{\protector}^{(k)}|.
\end{align}
Therefore, we have that
\begin{align}
    \frac{c_a}{2D}\cdot\Delta_{\text{up}}^{(k)}\le 1 - \epsilon_p^{(k)} + \sqrt{\frac{\ln(2/\gamma^{(k)})}{2|\calD_{\protector}^{(k)}|}}. 
\end{align}
Therefore, we have
\begin{align}
    \epsilon_p^{(k)}\le 1 + \sqrt{\frac{\ln(2/\gamma^{(k)})}{2|\calD_{\protector}^{(k)}|}} - \frac{c_a}{2D}\cdot\Delta_{\text{up}}^{(k)}.
\end{align}
\end{proof}



\section{Analysis for \pref{thm: exponential_lower_bound_mt}}\label{app: exponential_lower_bound} 


We provide an exponential lower bound for the sample complexity of the attacker. With a sublinear amount of distortion of the model parameter, at least an exponentially amount of samples are required for the attacker to achieve PAC learnability. The high-level ideas are
\begin{itemize}
    \item First, the risk of a classifier generated by any learning algorithm with sub-exponential sample complexity is at least $\Omega\left(\frac{1}{|\calD_{\attacker}^{(k)}|}\right)$, where $|\calD_{\attacker}^{(k)}|$ represents the number of samples. 
    \item Second, we show that the original risk turns into a rather large adversarial risk with a certain amount of distortion, which contradicts the assumption of PAC learnability. 
\end{itemize}

Now we are ready to provide the analysis for this theorem that is motivated by \cite{diochnos2019lower}.

Notice that the distance between any data from the original dataset and that from the inferred dataset is about $\Theta(\Delta^{(k)})$.

\begin{thm}\label{thm: relationship_between_model_distortion_and_data_distortion}
   Let $\Delta_{\text{up}}^{(k)}$ denote the \textit{uplink distortion extent} (\pref{defi: uplink_distortion_extent}), and $\Delta^{(k)} = \max_{i} ||\wtilde X^{(k)}_i - X^{(k)}_i||$. Then we have that,
   \begin{align}
       \Delta^{(k)}\ge c_a\Delta_{\text{up}}^{(k)}.
   \end{align}
\end{thm}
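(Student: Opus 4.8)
The plan is to reduce the claimed bound to a per-sample application of the lower (left-hand) half of the two-sided Lipschitz condition in \pref{assump: two-sided Lipschitz}, combined with the identification of the \textit{uplink distortion extent} with an averaged gradient difference supplied by \pref{lem: relation_between_Delta_and_delta}. The whole statement is really just a sequence of norm inequalities, so the work is in lining up the ingredients rather than in any single hard estimate.

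First I would invoke \pref{lem: relation_between_Delta_and_delta}, which gives $\Delta_{\text{up}}^{(k)} = \big\| \frac{1}{|\calD_{\attacker}^{(k)}|}\sum_{i=1}^{|\calD_{\attacker}^{(k)}|}\big(\nabla\calL(\theta, \wtilde X_i^{(k)}, Y_i^{(k)}) - \nabla\calL(\theta, X_i^{(k)}, Y_i^{(k)})\big)\big\|$. Applying the triangle inequality to move the norm inside the average yields $\Delta_{\text{up}}^{(k)} \le \frac{1}{|\calD_{\attacker}^{(k)}|}\sum_i \|\nabla\calL(\theta, \wtilde X_i^{(k)}, Y_i^{(k)}) - \nabla\calL(\theta, X_i^{(k)}, Y_i^{(k)})\|$. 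Next, for each index $i$ I would apply the left inequality of \pref{assump: two-sided Lipschitz} with $X_1 = \wtilde X_i^{(k)}$ and $X_2 = X_i^{(k)}$, namely $c_a\|\nabla\calL(\theta, \wtilde X_i^{(k)}, Y_i^{(k)}) - \nabla\calL(\theta, X_i^{(k)}, Y_i^{(k)})\| \le \|\wtilde X_i^{(k)} - X_i^{(k)}\|$, to obtain $\|\nabla\calL(\theta, \wtilde X_i^{(k)}, Y_i^{(k)}) - \nabla\calL(\theta, X_i^{(k)}, Y_i^{(k)})\| \le \frac{1}{c_a}\|\wtilde X_i^{(k)} - X_i^{(k)}\|$. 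Bounding each $\|\wtilde X_i^{(k)} - X_i^{(k)}\|$ by its maximum $\Delta^{(k)} = \max_i \|\wtilde X_i^{(k)} - X_i^{(k)}\|$ then gives $\Delta_{\text{up}}^{(k)} \le \frac{1}{c_a}\cdot\frac{1}{|\calD_{\attacker}^{(k)}|}\sum_i \Delta^{(k)} = \frac{\Delta^{(k)}}{c_a}$, and rearranging produces $\Delta^{(k)} \ge c_a \Delta_{\text{up}}^{(k)}$, as desired.

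Since the derivation is essentially mechanical, there is no genuine obstacle; the only points requiring care are (i) using the correct \emph{lower} side of the two-sided Lipschitz bound, since the upper side would send the inequality in the wrong direction, and (ii) correctly passing from the norm of the averaged gradient difference to the individual per-sample terms via the triangle inequality, so that each data gap can be dominated by the maximum $\Delta^{(k)}$. It is also worth double-checking that the averaging in \pref{lem: relation_between_Delta_and_delta} runs over the same mini-batch $\calD_{\attacker}^{(k)}$ that indexes the data gaps, so that bounding by $\max_i$ applies uniformly across the sum.
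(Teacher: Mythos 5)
Your proposal is correct and is essentially the paper's own argument read in the reverse direction: the paper starts from $\Delta^{(k)} = \max_i \|\wtilde X^{(k)}_i - X^{(k)}_i\|$, lower-bounds it by the average of the per-sample gaps, applies the $c_a$ side of \pref{assump: two-sided Lipschitz} per sample, and then uses the triangle inequality to reach $c_a\|\wtilde g^{(k)} - g^{(k)}\| = c_a\Delta_{\text{up}}^{(k)}$, which is exactly your chain of inequalities rearranged. Both proofs use the same three ingredients (the identification $\Delta_{\text{up}}^{(k)} = \|\wtilde g^{(k)} - g^{(k)}\|$, the triangle inequality over the mini-batch average, and the lower Lipschitz constant $c_a$), so there is nothing to add.
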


\begin{proof}
We denote $g^{(k)}$ as the original gradient of client $k$, and $\wtilde{g}^{(k)}$ as the distorted gradient uploaded from client $k$ to the server. Recall that $g^{(k)} = \frac{1}{|\calD_{\attacker}^{(k)}|}\sum_{ i = 1}^{|\calD_{\attacker}^{(k)}|}\nabla \calL(\theta, X_{i}^{(k)}, Y_i^{(k)})$, and $\wtilde g^{(k)} = \frac{1}{|\calD_{\attacker}^{(k)}|}\sum_{ i = 1}^{|\calD_{\attacker}^{(k)}|} \nabla \calL(\theta, \wtilde X_{i}^{(k)}, Y_i^{(k)})$. The distortion extent of client $k$ 
\begin{align}
   \Delta_{\text{up}}^{(k)} = \| \wtilde{g}^{(k)} - g^{(k)}\|.
\end{align}
From \pref{assump: two-sided Lipschitz}, we have that
\begin{align}
    ||\wtilde X^{(k)}_i - X^{(k)}_i||
    & \ge c_a ||\nabla \calL(\theta, \wtilde X^{(k)}_i, Y_i) - \nabla \calL(\theta, X^{(k)}_i, Y_i)||.
\end{align}
Therefore, we have
\begin{align*}
    \max_{i} ||\wtilde X^{(k)}_i - X^{(k)}_i|| &\ge \frac{1}{|\calD_{\attacker}^{(k)}|}\sum_{ i = 1}^{|\calD_{\attacker}^{(k)}|} c_a ||\nabla \calL(\theta, \wtilde X_i^{(k)}, Y_i) - \nabla \calL(\theta, X_i^{(k)}, Y_i)||\\
    &\ge c_a \big\|\frac{1}{|\calD_{\attacker}^{(k)}|}\sum_{ i = 1}^{|\calD_{\attacker}^{(k)}|} \nabla \calL(\theta, \wtilde X_{i}^{(k)}, Y_i^{(k)}) - \frac{1}{|\calD_{\attacker}^{(k)}|}\sum_{ i = 1}^{|\calD_{\attacker}^{(k)}|}\nabla \calL(\theta, X_{i}^{(k)}, Y_i^{(k)})\big\|\\
    & = c_a ||\wtilde g^{(k)} - g^{(k)}||,
\end{align*}
where the second inequality is due to the triangle inequality ($\|a\| + \|b\|\ge\|a + b\|$).\\
Therefore, 
\begin{align}
    \Delta^{(k)} = \max_{i} ||\wtilde X^{(k)}_i - X^{(k)}_i||\ge c_a\|\distmodel^{(k)} - \truemodel^{(k)}\| = c_a\Delta_{\text{up}}^{(k)}.
\end{align}
Therefore, we have that
\begin{align}
    \Delta^{(k)} \ge c_a\Delta_{\text{up}}^{(k)}. 
\end{align}

\end{proof}

\begin{defi}[Normal Levy Family]
   Let $d$ represent the dimension. A family of metric probability spaces with concentration function $\alpha_d(\cdot)$ is called a normal Levy family, if there exist $k_1, k_2$ satisfying that
   \begin{align}
       \alpha_d(c)\le k_1\cdot e^{-k_2\cdot c^2/d}.
   \end{align}
\end{defi}

The intuition of the following lemma is: if the number of samples is not large enough, with high constant probability, all the instances in the training set could not provide any information for the learner to distinguish classifier $c_1$ from classifier $c_2$. Then, with high constant probability, we can provide a lower bound for the original risk of the learner.


\begin{lem}[Lower Bound for Original Risk]\label{lem: lower_bound_for_loss_app}
Let \pref{assump: about_close_functions} hold. Assume $1 > \delta > 0.5$. Let $|\calD_{\attacker}^{(k)}| = 2^{O(\Delta^{(k)})}$, then with probability at least $1 - \delta$, we have that
\begin{align}
    \risk(h,c)\ge\frac{a_1(2\delta - 1)}{4a_2|\calD_{\attacker}^{(k)}|},
\end{align}
where $a_1, a_2 > 0$ represent two constants.
\end{lem}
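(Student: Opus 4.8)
The plan is to run a two-point (Le Cam / ``no free lunch'') argument, in the spirit of \cite{diochnos2019lower}: produce two candidate true classifiers that are statistically hard to tell apart from few samples, and show that any learner must then pay risk proportional to their disagreement probability. First I would invoke \pref{assump: about_close_functions} to obtain classifiers $c_1,c_2$ with $p := \Pr_{\distpara\sim P}[c_1(\distpara)\neq c_2(\distpara)] = \Theta(\epsilon)$, say $a_1\epsilon\le p\le a_2\epsilon$, and I would tune the free parameter to $\epsilon = \frac{2\delta-1}{2a_2\,|\calD_{\attacker}^{(k)}|}$. The hypothesis $|\calD_{\attacker}^{(k)}| = 2^{O(\Delta^{(k)})}$ is precisely what keeps this $\epsilon$ inside the admissible window $\epsilon\ge 2^{-\Theta(d)}$ demanded by \pref{assump: about_close_functions}, so the two classifiers are guaranteed to exist.

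Next I would let the true classifier $c$ be drawn uniformly from $\{c_1,c_2\}$ and draw the training set of size $m := |\calD_{\attacker}^{(k)}|$. Write $A=\{\distpara: c_1(\distpara)\neq c_2(\distpara)\}$ for the disagreement region, so $P(A)=p$, and let $E$ be the event that no sample point lands in $A$. Since $E$ depends only on the unlabelled points, $\Pr[E]=(1-p)^m\ge 1-mp$, and the choice of $\epsilon$ forces $mp\le (2\delta-1)/2$, whence $\Pr[E]\ge \tfrac32-\delta\ge 2(1-\delta)$ for $\delta>\tfrac12$. The structural heart of the argument is that conditioned on $E$ the labelled sample has identical law whether $c=c_1$ or $c=c_2$ (they agree off $A$), so the learner's hypothesis $h$ is independent of which classifier is the truth.

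I would then exploit that $c_1,c_2$ disagree exactly on $A$: for the $h$ produced on $E$, at each point of $A$ the prediction $h(\distpara)$ matches at most one of $c_1,c_2$, so $\risk(h,c_1)+\risk(h,c_2)\ge p$ and hence $\max\{\risk(h,c_1),\risk(h,c_2)\}\ge p/2$. This yields $\Pr_c[\risk(h,c)\ge p/2\mid E]\ge \tfrac12$, and combining with the bound on $\Pr[E]$ gives $\Pr_{c,S}[\risk(h,c)\ge p/2]\ge \tfrac12\Pr[E]\ge 1-\delta$. Because $p/2\ge a_1\epsilon/2 = \frac{a_1(2\delta-1)}{4a_2 m}$, a final averaging over $c$ isolates a fixed $c^\star\in\{c_1,c_2\}$ for which $\risk(h,c^\star)\ge \frac{a_1(2\delta-1)}{4a_2 m}$ holds with probability at least $1-\delta$ over the sample; this $c^\star$ is the hard instance certifying the lemma.

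The main obstacle I anticipate is making ``no sample distinguishes $c_1$ from $c_2$'' fully rigorous: the coupling showing that $h\mid E$ has the same distribution under both hypotheses (including any internal randomness of the learner), together with the exact restricted identity $\risk(h,c_1)+\risk(h,c_2)=p$ on $A$. The second delicate point is purely quantitative --- one must pick $\epsilon$ so that simultaneously (i) $mp$ stays small enough to keep $\Pr[E]\ge 2(1-\delta)$ and (ii) $p/2$ remains of order $\Theta(1/m)$ --- and then verify that the resulting $\epsilon$ respects the $2^{-\Theta(d)}$ floor of \pref{assump: about_close_functions}, which is exactly where the hypothesis $m=2^{O(\Delta^{(k)})}$ is consumed.
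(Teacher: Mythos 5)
Your proposal is correct and follows essentially the same route as the paper's own proof: the identical two-point argument from \cite{diochnos2019lower}, with the same tuning $\epsilon=\frac{2\delta-1}{2a_2|\calD_{\attacker}^{(k)}|}$ (the paper writes this as $\gamma/(2|\calD_{\attacker}^{(k)}|)$ with $\gamma=(2\delta-1)/a_2$), the same event that no sample lands in $\diff(c_1,c_2)$ with the same $2(1-\delta)$ probability bound, and the same indistinguishability-plus-disagreement step giving risk at least $\frac{a_1(2\delta-1)}{4a_2|\calD_{\attacker}^{(k)}|}$. The only cosmetic differences are that you bound $\Pr[E]$ via $(1-p)^m\ge 1-mp$ where the paper uses a union bound, and you add an explicit final averaging step to extract a fixed hard classifier $c^\star$, which the paper leaves implicit.
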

\begin{proof}
Recall \pref{assump: about_close_functions} implies that there exist two classifiers $c_1, c_2$ (from $\calX$ to $\calY$) satisfying that $\Pr_{\distpara\sim\distdistribution}[c_1(\distpara)\neq c_2(\distpara)] = \Theta(\epsilon)$, for all $1\ge\epsilon\ge 2^{-\Theta(d)}$.

We define $\diff (c_1, c_2) = \{\distpara\in\calX| c_1(\distpara)\neq c_2(\distpara)\}$. Note that $|\calD_{\attacker}^{(k)}| = 2^{O(\Delta^{(k)})}$. From \pref{assump: about_close_functions} and $\frac{\gamma}{2|\calD_{\attacker}^{(k)}|}\ge 2^{-\Theta(\Delta^{(k)})}\ge 2^{-\Theta(d)}$, there exist two classifiers $c_1, c_2$ such that 
\begin{align}
    \frac{a_2\gamma}{|\calD_{\attacker}^{(k)}|} \ge\Pr_{\distpara\sim\distdistribution}[\distpara\in\diff(c_1, c_2)]\ge\frac{a_1\gamma}{2|\calD_{\attacker}^{(k)}|},
\end{align}
where $a_1, a_2 > 0$ represent two constants. 


    Assume the learner has a total of $|\calD_{\attacker}^{(k)}|$ i.i.d. samples, which is denoted as the training set $\calS$.
    Let $\calX = \{X_1, X_2, \cdots, X_{|\calD_{\attacker}^{(k)}|}\}$, $\calE_i = \{X_i\notin\diff (c_1, c_2)\}$, and $\calE = \{\forall X \in\calX| X\notin\diff (c_1, c_2)\}$. Therefore,
    \begin{align}
        \Pr[\bar\calE_i]\le\frac{a_2\gamma}{|\calD_{\attacker}^{(k)}|}.
    \end{align}
    Then, the probability that event $\calE$ holds is

    \begin{align}
        \Pr[\calE] & = \Pr[\calE_1\cap\cdots\cap\calE_{|\calD_{\attacker}^{(k)}|}]\\
        & = 1 - \Pr[\bar\calE_1\cup\cdots\cup\bar\calE_{|\calD_{\attacker}^{(k)}|}]\\
        & \ge 1 - \sum_i\Pr[\bar\calE_i]\\
        & \ge 1 - |\calD_{\attacker}^{(k)}|\cdot\frac{a_2\gamma}{|\calD_{\attacker}^{(k)}|} \\
        & = 1 - a_2\gamma. 
    \end{align}
Therefore,   
    \begin{align}
        \Pr[\calE]\ge 1 - a_2\gamma = 2(1 - \delta).
    \end{align}
    
    If all the samples in the training set do not belong to $\diff (c_1, c_2)$ (which corresponds to $\calE$), then the learner could not distinguish $c_1$ from $c_2$. Let $c$ be a random classifier that is sampled from $\{c_1, c_2\}$. With probability at least $1 - \delta$,

    \begin{align*}
       \risk(h,c) & = \Pr_{\distpara\sim\distdistribution}[h(\distpara)\neq c(\distpara)]\\
        &\ge\frac{1}{2}\cdot\Pr_{\distpara\sim\distdistribution}[\distpara\in\diff (c_1, c_2)]\\
        &\ge\frac{a_1 \gamma}{4|\calD_{\attacker}^{(k)}|}\\
        & = \frac{a_1(2\delta - 1)}{4a_2|\calD_{\attacker}^{(k)}|}.
    \end{align*}
\end{proof}


The following lemma measures the relationship between the risk and the adversarial risk using the distortion extent.
\begin{lem}[Relationship between Risk and Adversarial Risk]\label{lem: prepare_loss_to_AdvLoss_app}
   Let $\calB$ represent a Borel set. If the original risk $\Pr_{\distpara\sim\distdistribution}[h(\distpara)\neq c(\distpara)] > 1 - \inf\{\Pr_{\distpara\sim\distdistribution}[d(\distpara, \calB)\le\Delta_1^{(k)}]|\Pr_{\distpara\sim\distdistribution}[\distpara\in\calB]\ge\frac{1}{2}\}$, and the distortion extent is $\Delta_1^{(k)} + \Delta_2^{(k)}$, then the adversarial risk is larger than $1-\gamma$, where $\gamma > 1 - \inf\{\Pr_{\distpara\sim\distdistribution}[d(\distpara, \calB)\le\Delta_2^{(k)}]|\Pr_{\distpara\sim\distdistribution}[\distpara\in\calB]\ge\frac{1}{2}\}$.
\end{lem}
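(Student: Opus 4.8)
The plan is to reduce everything to the \emph{error set} $\calE = \{x \in \calX : h(x) \neq c(x)\}$ together with its metric neighborhoods, and then run a two-stage blow-up. Writing $\calE_\Delta = \{x : d(x,\calE) \le \Delta\}$ for the closed $\Delta$-neighborhood, I would first record the two reformulations that make the statement tractable: $\risk(h,c) = \Pr_{\distpara\sim\distdistribution}[\distpara \in \calE]$, and $\advrisk(h,c,\Delta) = \Pr_{\distpara\sim\distdistribution}[\distpara \in \calE_\Delta]$, the latter because the clause ``$\exists\,\truepara$ with $d(\truepara,\distpara)\le\Delta$ and $h(\truepara)\neq c(\truepara)$'' is exactly the condition $d(\distpara,\calE)\le\Delta$. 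I would also abbreviate the infimum in the hypothesis as $I(\Delta) := \inf\{\Pr_{\distpara}[d(\distpara,\calB)\le\Delta] : \Pr_{\distpara}[\distpara\in\calB]\ge \tfrac12\}$, which is one minus the concentration function of the metric measure space; note the argument below is purely metric–measure-theoretic, and \pref{assump: normal_levy_faimily} enters only later to quantify $I(\Delta)$.

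The first step, which I expect to be the main obstacle, is to show that the hypothesis $\Pr[\calE] > 1 - I(\Delta_1^{(k)})$ forces $\Pr[\calE_{\Delta_1^{(k)}}] \ge \tfrac12$. I would argue by contradiction: if $\Pr[\calE_{\Delta_1^{(k)}}] < \tfrac12$, then $\calB := \calX \setminus \calE_{\Delta_1^{(k)}}$ satisfies $\Pr[\calB] > \tfrac12$. The key geometric observation is that the $\Delta_1^{(k)}$-neighborhood of $\calB$ is disjoint from $\calE$: if some $x \in \calE$ lay within $\Delta_1^{(k)}$ of a point $y \in \calB$, then $d(y,\calE) \le \Delta_1^{(k)}$, contradicting $y \notin \calE_{\Delta_1^{(k)}}$. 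Hence $\Pr[d(\distpara,\calB)\le\Delta_1^{(k)}] \le 1 - \Pr[\calE] < I(\Delta_1^{(k)})$, whereas applying the definition of $I$ to the measure-$\ge\tfrac12$ set $\calB$ gives the reverse bound $\Pr[d(\distpara,\calB)\le\Delta_1^{(k)}] \ge I(\Delta_1^{(k)})$ --- a contradiction. This is the delicate part, since it hinges on pairing the complement trick with the precise form of the infimum.

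With $\Pr[\calE_{\Delta_1^{(k)}}] \ge \tfrac12$ established, the second step is a direct concentration blow-up: applying the definition of $I(\Delta_2^{(k)})$ to the measure-$\ge\tfrac12$ set $\calE_{\Delta_1^{(k)}}$ yields $\Pr[(\calE_{\Delta_1^{(k)}})_{\Delta_2^{(k)}}] \ge I(\Delta_2^{(k)})$, and the triangle inequality gives the containment $(\calE_{\Delta_1^{(k)}})_{\Delta_2^{(k)}} \subseteq \calE_{\Delta_1^{(k)}+\Delta_2^{(k)}}$. Since the distortion budget equals $\Delta_1^{(k)}+\Delta_2^{(k)}$, I would chain these to obtain
\[
\advrisk\bigl(h,c,\Delta_1^{(k)}+\Delta_2^{(k)}\bigr) = \Pr[\calE_{\Delta_1^{(k)}+\Delta_2^{(k)}}] \ge I(\Delta_2^{(k)}) > 1 - \gamma,
\]
where the final strict inequality is exactly the hypothesis $\gamma > 1 - I(\Delta_2^{(k)})$. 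This delivers the claimed lower bound on the adversarial risk and completes the argument.
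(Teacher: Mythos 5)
Your proposal is correct and follows essentially the same route as the paper's proof: both arguments first show by contradiction---using the complement $\calB=\calX\setminus\{x: d(x,\diff(h,c))\le\Delta_1^{(k)}\}$ and the concentration infimum---that the $\Delta_1^{(k)}$-neighborhood of the error set has probability at least $\tfrac{1}{2}$, and then apply the infimum once more with $\Delta_2^{(k)}$ together with the triangle-inequality containment to lower bound the adversarial risk at budget $\Delta_1^{(k)}+\Delta_2^{(k)}$. The only difference is presentational: in the first stage you derive the contradiction by bounding $\Pr[d(\distpara,\calB)\le\Delta_1^{(k)}]$ from both sides via a disjointness argument, while the paper exhibits a witness pair of points $\what X,\wtilde X$; these are dual formulations of the same geometric fact.
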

\begin{proof}
The risk of a classifier $h\in\calH$ is defined as
\begin{align}
        \risk(h, c) = \Pr_{\distpara\sim\distdistribution}[h(\distpara)\neq c(\distpara)]. 
\end{align}
We denote $\diff(h,c) = \{X\in\calX| h(\truepara)\neq c(\truepara)\}$, and $\diff(h,c, \Delta^{(k)}) = \{X\in\calX|d(\truepara,\diff(h,c))\le \Delta^{(k)}\}$. The adversarial risk is defined as 
\begin{align}
    \advrisk(h,c,\Delta^{(k)}) 
    & = \Pr_{\distpara\sim\distdistribution}[\exists \truepara \in\calX: d(\truepara, \distpara)\le \Delta^{(k)}, h(\truepara)\neq c(\truepara)]\\
    & = \Pr_{\distpara\sim\distdistribution}[d(\distpara, \diff(h,c))\le\Delta^{(k)}].
\end{align}
Assume that $\advrisk(h,c,\Delta_1^{(k)})\le \frac{1}{2}$, and denote $\calB_0 = \calX\setminus\diff(h,c, \Delta_1^{(k)})$. Then, it holds that
\begin{align*}
    \Pr_{\distpara\sim\distdistribution}[\distpara\in\calB_0] & = \Pr_{\distpara\sim\distdistribution}[d(\distpara, \diff(h,c))>\Delta_1^{(k)}]\\
    & = 1 - \Pr_{\distpara\sim\distdistribution}[\exists \truepara \in\calX: d(\truepara, \distpara)\le \Delta_1^{(k)}, h(\truepara)\neq c(\truepara)]\\
    & = 1 - \advrisk(h,c,\Delta_1^{(k)})\\
    &\ge \frac{1}{2}.
\end{align*}
We denote $\calB_1 = \{X\in\calX|d(\truepara,\calB_0)\le \Delta_1^{(k)}\}$. We know that
\begin{align}
   \Pr_{\distpara\sim\distdistribution}[\distpara\in\calB_1] & = \Pr_{\distpara\sim\distdistribution}[d(\distpara, \calB_0)\le\Delta_1^{(k)}]\\
    &\ge \inf\{\Pr_{\distpara\sim\distdistribution}[d(\distpara, \calB)\le\Delta_1^{(k)}]|\Pr_{\distpara\sim\distdistribution}[\distpara\in\calB]\ge\frac{1}{2}\}\\
    &> 1- \Pr_{\distpara\sim\distdistribution}[h(\distpara)\neq c(\distpara)]\\
    & = \Pr_{\distpara\sim\distdistribution}[h(\distpara) = c(\distpara)],
\end{align}
where the first inequality is due to $\Pr_{\distpara\sim\distdistribution}[\distpara\in\calB_0]\ge \frac{1}{2}$, and the second inequality is due to $\Pr_{\distpara\sim\distdistribution}[h(\distpara)\neq c(\distpara)]>1 - \inf\{\Pr_{\distpara\sim\distdistribution}[d(\distpara, \calB)\le\Delta_1^{(k)}]|\Pr_{\distpara\sim\distdistribution}[\distpara\in\calB]\ge\frac{1}{2}\}$ from our assumption.\\
Therefore, there exist $\what X\in\calX\setminus\diff(h,c, \Delta_1^{(k)})$ and $\wtilde X\in\diff(h,c)\cap\calB_1$ such that
\begin{align}
    d(\wtilde X, \hat X)\le \Delta_1^{(k)}.
\end{align}
$\wtilde X\in\diff(h,c)$ further implies that
\begin{align}
    \what X\in\diff(h,c, \Delta_1^{(k)}),
\end{align}
which leads to a contradiction.\\
Therefore, we have that
\begin{align}
    \advrisk(h,c,\Delta_1^{(k)}) 
    & = \Pr_{\distpara\sim\distdistribution}[d(\distpara, \diff(h,c))\le\Delta_1^{(k)}]\\
    & = \Pr_{\distpara\sim\distdistribution}[\distpara\in\diff(h,c, \Delta_1^{(k)})]\\
    &>\frac{1}{2}.
\end{align}
Let $\calB_2 = \{X\in\calX|d(\truepara,\diff(h,c, \Delta_1^{(k)}))\le \Delta_2^{(k)}\}$. From $\gamma > 1 - \inf\{\Pr_{\distpara\sim\distdistribution}[d(\distpara, \calB)\le\Delta_2^{(k)}]|\Pr_{\distpara\sim\distdistribution}[\distpara\in\calB]\ge\frac{1}{2}\}$, we have that
\begin{align}
    \Pr_{\distpara\sim\distdistribution}[\distpara\in\calB_2]&\ge \inf\{\Pr_{\distpara\sim\distdistribution}[d(\distpara, \calB)\le\Delta_2^{(k)}]|\Pr_{\distpara\sim\distdistribution}[\distpara\in\calB]\ge\frac{1}{2}\}\\
    & > 1 - \gamma. 
\end{align}
Therefore, we have that
\begin{align}
    \advrisk(h,c,\Delta_1^{(k)} + \Delta_2^{(k)}) > 1 - \gamma. 
\end{align}
\end{proof}

Assume the original risk is $\Omega\left(\frac{1}{|\calD_{\attacker}^{(k)}|}\right)$, and the distortion extent is $\Theta(\sqrt{\ln |\calD_{\attacker}^{(k)}|})$. The following lemma shows that the adversarial risk is larger than $1 - \gamma$.


\begin{lem}\label{lem: delta_and_lower_bound_for_attacking}
Let \pref{assump: normal_levy_faimily} hold. Let $\eta > 0$ denote the original risk. Let $\Delta^{(k)} = \frac{\sqrt{\ln(k_1/\eta) + \ln(k_1/\gamma)}}{\sqrt{k_2 d}}$ represent the distortion extent. Then the adversarial risk is larger than $1 - \gamma$.
\end{lem}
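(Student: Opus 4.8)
The plan is to obtain this bound as a direct consequence of \pref{lem: prepare_loss_to_AdvLoss_app}, whose two premises are each phrased through the quantity $1-\inf\{\Pr_{\distpara\sim\distdistribution}[d(\distpara,\calB)\le c]:\Pr_{\distpara\sim\distdistribution}[\distpara\in\calB]\ge\tfrac12\}$. The first move I would make is to recognize this quantity as exactly the concentration function $\alpha_d(c)$ of the metric probability space carrying the protected information, since taking, over all Borel sets $\calB$ of measure at least $\tfrac12$, the supremum of the deficiency $1-\Pr_{\distpara\sim\distdistribution}[d(\distpara,\calB)\le c]$ is the definition of $\alpha_d(c)$. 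Under this identification the two premises of \pref{lem: prepare_loss_to_AdvLoss_app} collapse to: (i) the original risk exceeds $\alpha_d(\Delta_1^{(k)})$, and (ii) $\gamma>\alpha_d(\Delta_2^{(k)})$; its conclusion that the adversarial risk exceeds $1-\gamma$ then holds as soon as the total distortion equals $\Delta_1^{(k)}+\Delta_2^{(k)}$.

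The second step is to produce radii $\Delta_1^{(k)},\Delta_2^{(k)}$ meeting (i) and (ii) whose sum is the budget $\Delta^{(k)}$. Since \pref{assump: normal_levy_faimily} provides a Gaussian-type tail bound on $\alpha_d$ in terms of $k_1,k_2$ and $d$, I would invert it at the two target levels: solving the bound so that $\alpha_d(\Delta_1^{(k)})<\eta$ yields a radius of order $\sqrt{\ln(k_1/\eta)}/\sqrt{k_2 d}$, and because the original risk equals $\eta$ by hypothesis, premise (i) is satisfied; symmetrically, taking $\Delta_2^{(k)}$ of order $\sqrt{\ln(k_1/\gamma)}/\sqrt{k_2 d}$ forces $\alpha_d(\Delta_2^{(k)})<\gamma$, which is premise (ii). It then remains to check that these two radii can be carried within the stated budget $\Delta^{(k)}=\sqrt{\ln(k_1/\eta)+\ln(k_1/\gamma)}/\sqrt{k_2 d}$, after which \pref{lem: prepare_loss_to_AdvLoss_app} applies at distortion $\Delta^{(k)}$ and returns $\advrisk(h,c,\Delta^{(k)})>1-\gamma$.

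I expect the last verification to be the main obstacle. The two premises must be met \emph{simultaneously} inside one budget, so the inversion of the concentration tail has to be arranged so that the contributions of $\eta$ and $\gamma$ merge into the single radicand $\sqrt{\ln(k_1/\eta)+\ln(k_1/\gamma)}$ rather than the looser $\sqrt{\ln(k_1/\eta)}+\sqrt{\ln(k_1/\gamma)}$; getting exactly this packaging is where the precise form of the Normal Levy exponent and the constants $k_1,k_2,d$ are used, and where the split of $\Delta^{(k)}$ into $\Delta_1^{(k)}$ and $\Delta_2^{(k)}$ must be chosen with care. A secondary technicality is the strictness of the inequalities: the tail bound naturally delivers $\alpha_d(\Delta_1^{(k)})\le\eta$ and $\alpha_d(\Delta_2^{(k)})\le\gamma$, whereas \pref{lem: prepare_loss_to_AdvLoss_app} requires the strict forms, so I would either enlarge the radii infinitesimally or invoke the fact that the tail estimate is not attained with equality to secure $\alpha_d(\Delta_1^{(k)})<\eta$ and $\alpha_d(\Delta_2^{(k)})<\gamma$.
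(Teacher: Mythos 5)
Your strategy coincides with the paper's own proof: identify $1-\inf\{\Pr_{\distpara\sim\distdistribution}[d(\distpara,\calB)\le c]\,:\,\Pr_{\distpara\sim\distdistribution}[\distpara\in\calB]\ge\frac{1}{2}\}$ with the concentration function $\alpha(c)$, split the budget as $\Delta^{(k)}=\Delta_1^{(k)}+\Delta_2^{(k)}$, invert the Levy tail bound $\alpha(c)\le k_1 e^{-k_2 c^2 d}$ at the levels $\eta$ and $\gamma$, and finish with \pref{lem: prepare_loss_to_AdvLoss_app}. However, the step you defer to the end --- fitting both inverted radii inside the stated budget --- is not a packaging issue that a clever split resolves; it is a genuine obstruction. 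Using only the Normal Levy upper bound on $\alpha$, premise (i) can be certified only by taking $\Delta_1^{(k)}>\sqrt{\ln(k_1/\eta)/(k_2 d)}$ and premise (ii) only by taking $\Delta_2^{(k)}\ge\sqrt{\ln(k_1/\gamma)/(k_2 d)}$, so any admissible split requires a total budget of at least $\bigl(\sqrt{\ln(k_1/\eta)}+\sqrt{\ln(k_1/\gamma)}\,\bigr)/\sqrt{k_2 d}$. Since $\sqrt{a+b}<\sqrt{a}+\sqrt{b}$ whenever $a,b>0$, the stated budget $\Delta^{(k)}=\sqrt{\ln(k_1/\eta)+\ln(k_1/\gamma)}/\sqrt{k_2 d}$ lies strictly below this threshold, so no choice of $\Delta_1^{(k)},\Delta_2^{(k)}$ satisfies both premises simultaneously. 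The merging of the two contributions into a single radicand that you hope the ``precise form of the Normal Levy exponent'' will deliver cannot happen: the tail bound is applied separately at the two radii, and each requirement is forced individually.

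You should know that the paper's proof runs into exactly this wall and crosses it with a reversed inequality: it sets $\Delta_2^{(k)}=\sqrt{\ln(k_1/\gamma)/(k_2 d)}$ and $\Delta_1^{(k)}=\Delta^{(k)}-\Delta_2^{(k)}$, and then asserts $\Delta_1^{(k)}>\sqrt{\ln(k_1/\eta)/(k_2 d)}$, whereas subadditivity of the square root gives $\Delta_1^{(k)}\le\sqrt{\ln(k_1/\eta)/(k_2 d)}$, strictly so when both logarithms are positive. So your instinct that this verification is the main obstacle is exactly right, and neither your outline nor the paper overcomes it with the budget as stated. The lemma does become provable by precisely your argument if the distortion extent is enlarged to $\Delta^{(k)}=\bigl(\sqrt{\ln(k_1/\eta)}+\sqrt{\ln(k_1/\gamma)}\,\bigr)/\sqrt{k_2 d}$, which weakens the downstream conclusion of \pref{thm: exponential_lower_bound_mt} only by a constant factor in the exponent, since $(\sqrt{a}+\sqrt{b})^2\le 2(a+b)$. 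Your secondary remark about strict versus non-strict inequalities is also correct but minor: the paper's step $\gamma = k_1 e^{-k_2 (\Delta_2^{(k)})^2 d}\ge\alpha(\Delta_2^{(k)})$ indeed yields only $\gamma\ge\alpha(\Delta_2^{(k)})$ rather than the strict inequality that \pref{lem: prepare_loss_to_AdvLoss_app} demands, and this is fixable by the infinitesimal enlargement you propose.
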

\begin{proof}
Let the system be a $(k_1, k_2)$-Levy family, and $d$ represent the dimension. Setting $\gamma$ and $\eta$ as positive constants. 
Let $\Delta_2^{(k)} = \sqrt{\frac{\ln(k_1/\gamma)}{k_2\cdot d}}$, and denote
\begin{align}\label{eq: delta_1_and_delta_2}
   \Delta_1^{(k)} = \Delta^{(k)} - \Delta_2^{(k)}.
\end{align} 
Note that $\Delta^{(k)} = \frac{\sqrt{\ln(k_1/\eta) + \ln(k_1/\gamma)}}{\sqrt{k_2 d}}$. Therefore, we have that
\begin{align}
    \Delta_1^{(k)} = \Delta^{(k)} - \Delta_2^{(k)} > \sqrt{\frac{\ln(k_1/\eta)}{k_2\cdot d}}.
\end{align}
Therefore, we have
\begin{align}\label{eq: Delta_1_lower_bounding_epsilon}
    \eta > k_1\cdot e^{-k_2\cdot (\Delta_1^{(k)})^{2}\cdot d}.
\end{align}
From the definition of $(k_1, k_2)$-normal Levy family, we know that the concentration function $\alpha(\cdot)$ satisfies that
\begin{align}\label{eq: exp_k2_lower_bound}
    \alpha(\Delta^{(k)})\le k_1\cdot e^{-k_2\cdot (\Delta^{(k)})^{2}\cdot d}.
\end{align}
Therefore, we have that
\begin{align}\label{eq: gamma_lower_bound}
   \gamma = k_1\cdot e^{-k_2\cdot (\Delta_2^{(k)})^2\cdot d} \ge \alpha(\Delta_2^{(k)}),
\end{align}
and
\begin{align}\label{eq: eta_lower_bound}
    \eta  = \risk(h,c) = \Pr_{\distpara\sim\distdistribution}[h(\distpara)\neq c(\distpara)] > k_1\cdot e^{-k_2\cdot (\Delta_1^{(k)})^{2}\cdot d}\ge \alpha(\Delta_1^{(k)}),
\end{align}
where the first inequality is due to \pref{eq: Delta_1_lower_bounding_epsilon}, and the second inequality is due to \pref{eq: exp_k2_lower_bound}.\\
From \pref{lem: prepare_loss_to_AdvLoss_app}, \pref{eq: delta_1_and_delta_2}, \pref{eq: gamma_lower_bound} and \pref{eq: eta_lower_bound}, we know that the adversarial risk
\begin{align}
    \advrisk(h,c,\Delta^{(k)}) > 1 - \gamma,
\end{align}
where $\Delta^{(k)} = \Delta_1^{(k)} + \Delta_2^{(k)}$.
\end{proof}

\begin{thm}[Lower bound for PAC Learnability of the Attackers]\label{thm: exponential_lower_bound_main_result_app}
    Let \pref{assump: two-sided Lipschitz}, \pref{assump: about_close_functions} and \pref{assump: normal_levy_faimily} hold. Assume $1 > \delta > 0.5$. Let $c_a$ be introduced in \pref{assump: two-sided Lipschitz}. For any $(\epsilon, \delta)$-PAC learning attacker, the number of samples associated with protector $k$ should be at least
   \begin{align}
       |\calD_{\attacker}^{(k)}|\ge \min\{2\delta - 1, 1-\epsilon\}\cdot 2^{c_a^2{\Delta_{\text{up}}^{(k)}}^2},
   \end{align}
   where $\Delta_{\text{up}}^{(k)}$ represents the \textit{uplink distortion extent} of client $k$.
\end{thm}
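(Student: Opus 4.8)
The plan is to argue by contraposition: assume the attacker uses fewer samples than the claimed threshold and derive that its classifier cannot meet the $(\epsilon,\delta)$-PAC requirement, i.e. that its adversarial risk exceeds $\epsilon$ with probability at least $1-\delta$. Concretely, I would suppose toward a contradiction that $|\calD_{\attacker}^{(k)}| < \min\{2\delta-1,\,1-\epsilon\}\cdot 2^{c_a^2{\Delta_{\text{up}}^{(k)}}^2}$ and work through the two high-level ideas already flagged in the excerpt: too few samples force a non-negligible risk on the inferred distribution, and this risk is amplified by the distortion into an adversarial risk close to $1$.

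First I would pass from the model-level distortion to the data-level distortion. By \pref{thm: relationship_between_model_distortion_and_data_distortion}, the induced data distortion $\Delta^{(k)}=\max_i\|\wtilde X_i^{(k)}-X_i^{(k)}\|$ satisfies $\Delta^{(k)}\ge c_a\Delta_{\text{up}}^{(k)}$, so any bound I obtain in terms of $\Delta^{(k)}$ can be re-expressed in $c_a\Delta_{\text{up}}^{(k)}$. Since the contradiction hypothesis keeps $|\calD_{\attacker}^{(k)}|$ sub-exponential in $\Delta^{(k)}$, the hypothesis of \pref{lem: lower_bound_for_loss_app} is met (this is exactly the regime in which \pref{assump: about_close_functions} supplies two nearby classifiers that the learner cannot tell apart), and that lemma yields, with probability at least $1-\delta$, the original-risk lower bound $\risk(h,c)\ge\eta:=\frac{a_1(2\delta-1)}{4a_2|\calD_{\attacker}^{(k)}|}$.

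Next I would amplify this into a statement about adversarial risk. Setting $\gamma=1-\epsilon$ and invoking \pref{lem: delta_and_lower_bound_for_attacking} (together with \pref{assump: normal_levy_faimily}) with original risk $\eta$, I get that once the distortion reaches the threshold for which $k_2 d\,(\Delta^{(k)})^2\ge\ln(k_1/\eta)+\ln(k_1/\gamma)$, the adversarial risk exceeds $1-\gamma=\epsilon$; since the adversarial risk is monotone in the budget and $\Delta^{(k)}\ge c_a\Delta_{\text{up}}^{(k)}$, it suffices to check $c_a^2{\Delta_{\text{up}}^{(k)}}^2 k_2 d\ge\ln(k_1/\eta)+\ln(k_1/\gamma)$. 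Substituting the expressions for $\eta$ and $\gamma$, isolating $\ln|\calD_{\attacker}^{(k)}|$, and absorbing the Levy-family constants $k_1,k_2$, the dimension $d$, and $a_1,a_2$ into the normalization of the exponent reproduces precisely the threshold $\min\{2\delta-1,\,1-\epsilon\}\cdot 2^{c_a^2{\Delta_{\text{up}}^{(k)}}^2}$. Hence under the contradiction hypothesis the adversarial risk is larger than $\epsilon$ with probability at least $1-\delta$, contradicting $(\epsilon,\delta)$-PAC learnability, so the sample size must meet the stated lower bound.

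The main obstacle I anticipate is the constant bookkeeping in this final step rather than any conceptual gap. I must ensure that the two logarithmic terms $\ln(k_1/\eta)$ and $\ln(k_1/\gamma)$ collapse cleanly into the single factor $2^{c_a^2{\Delta_{\text{up}}^{(k)}}^2}$ and, crucially, that the prefactor emerges as $\min\{2\delta-1,\,1-\epsilon\}$ rather than the product $(2\delta-1)(1-\epsilon)$ that a naive substitution produces; this is exactly where the choice $\gamma=1-\epsilon$ and the $2\delta-1$ supplied by \pref{lem: lower_bound_for_loss_app} must be reconciled, and where the $\Theta(\cdot)$/constant-absorption conventions of the statement do the work. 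A secondary point I would verify is the compatibility of the sub-exponential regime demanded by \pref{lem: lower_bound_for_loss_app} with the contradiction hypothesis, which I expect to hold automatically via $\Delta^{(k)}\ge c_a\Delta_{\text{up}}^{(k)}$.
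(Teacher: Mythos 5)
Your proposal is correct and follows essentially the same route as the paper's own proof: a contradiction argument that combines the original-risk lower bound of \pref{lem: lower_bound_for_loss_app}, the risk-to-adversarial-risk amplification of \pref{lem: delta_and_lower_bound_for_attacking} with $\gamma = 1-\epsilon$, and the data/model distortion relation $\Delta^{(k)} \ge c_a \Delta_{\text{up}}^{(k)}$ from \pref{thm: relationship_between_model_distortion_and_data_distortion}, with the same constant-absorption conventions at the end. The only differences are cosmetic (you apply the distortion relation at the outset rather than at the end, and you state the full negation of the claimed bound as the contradiction hypothesis, which is in fact slightly cleaner than the paper's), and the loose points you flag — the $\min$ versus product prefactor and the sub-exponential hypothesis of \pref{lem: lower_bound_for_loss_app} — are handled no more rigorously in the paper itself.
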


\begin{proof}
Let $|\calD_{\attacker}^{(k)}|$ represent the number of samples required by the attacker to achieve $(\epsilon, \delta)$-PAC learning in Phase 2, which guarantees that
\begin{align}
    \Pr[\advrisk(h,c,\Delta^{(k)})\ge\epsilon]\le\delta. 
\end{align}

Let \pref{assump: normal_levy_faimily} hold. Let $\eta$ denote the original risk, and $\Delta^{(k)}$ represent the distortion extent. 
Assume that $|\calD_{\attacker}^{(k)}|< (1-\epsilon)\cdot 2^{{\Delta_{\text{up}}^{(k)}}^2}$. From \pref{lem: lower_bound_for_loss_app}, with probability at least $1 - \delta$, we have 
\begin{align}
    \eta = \risk(h,c)\ge\frac{a_1(2\delta - 1)}{4a_2|\calD_{\attacker}^{(k)}|} = \Omega\left(\frac{1}{|\calD_{\attacker}^{(k)}|}\right).
\end{align}
Let $\Delta^{(k)}>\frac{\sqrt{\ln(k_1/\eta) + \ln(k_1/(1 - \epsilon))}}{\sqrt{k_2 d}}$. From \pref{lem: delta_and_lower_bound_for_attacking}, the adversarial risk is at least $\epsilon$ for any attacker with a distortion extent of $\Delta^{(k)}$, which contradicts the assumption that the learner achieves $(\epsilon, \delta)$-PAC learning.\\
From the above analysis, we know that
\begin{align}
    \Delta^{(k)} & \le\frac{\sqrt{\ln(k_1/\eta) + \ln(k_1/(1 - \epsilon))}}{\sqrt{k_2 d}}\\ 
    &= O\left(\sqrt{\ln \frac{|\calD_{\attacker}^{(k)}|}{\min\{2\delta - 1, 1-\epsilon\}}}\right).
\end{align}
Therefore, $|\calD_{\attacker}^{(k)}|\ge\min\{2\delta - 1, 1-\epsilon\}\cdot 2^{(\Delta^{(k)})^2}$ is required to guarantee $(\epsilon, \delta)$-PAC learning.
From \pref{thm: relationship_between_model_distortion_and_data_distortion}, we know that
   \begin{align}
       \Delta^{(k)} \ge c_a\Delta_{\text{up}}^{(k)}.
   \end{align}
Therefore, we have 
\begin{align}
       |\calD_{\attacker}^{(k)}|\ge \min\{2\delta - 1, 1-\epsilon\}\cdot 2^{c_a^2{\Delta_{\text{up}}^{(k)}}^2}.
\end{align}
\end{proof}

\section{Analysis for \pref{thm: relation_utility_loss}}
\label{appendix: analysis_for_the_relation_utility_loss}

From the Breteganolle-Huber-Carol inequality (\cite{wellner2013weak}), we have the following concentration inequality.

\begin{lem}\label{lem: concentratioS_inequality}
   Let $(|S_1|, \cdots, |S_m|)$ represent i.i.d. multinomial random variable with parameter $(\mu_1, \cdots, \mu_m)$. Then with probability at least $1-\delta$, we have that
   \begin{align}
    \Pr\left[\sum_{i = 1}^{m} \left|\frac{|S_i|}{\sum_{j = 1}^m |S_j|} - \mu_i\right|\ge\epsilon\right]\le 2^{m}\exp\left(\epsilon^2\frac{-\sum_{j = 1}^m |S_j|}{2}\right).
\end{align}
\end{lem}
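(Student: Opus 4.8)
The plan is to recognize the displayed bound as the Bretagnolle--Huber--Carol inequality and to derive it from the scalar Hoeffding bound already available as \pref{lem:hoeffdingbound}. Write $n = \sum_{j=1}^m |S_j|$ for the total number of trials and $\widehat\mu_i = |S_i|/n$ for the empirical frequencies, so that the quantity to be controlled is the total-variation-type deviation $\sum_{i=1}^m |\widehat\mu_i - \mu_i|$. The first step is the variational (dual) representation of the $\ell_1$ norm: since $\sum_i |a_i| = \max_{\sigma \in \{-1,+1\}^m} \sum_i \sigma_i a_i$, we obtain
\[
\sum_{i=1}^m \left|\widehat\mu_i - \mu_i\right| \;=\; \max_{\sigma \in \{-1,+1\}^m}\ \sum_{i=1}^m \sigma_i(\widehat\mu_i - \mu_i).
\]
This trades a single $\ell_1$ deviation for a maximum of $2^m$ signed linear deviations, each of which is amenable to a Hoeffding-type tail bound.

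Second, I would fix a sign vector $\sigma$ and bound the corresponding signed deviation $Z_\sigma := \sum_i \sigma_i(\widehat\mu_i - \mu_i)$. The key observation is that the multinomial vector decomposes into $n$ i.i.d.\ categorical trials: letting $c(1),\dots,c(n)$ denote the category labels of the individual draws, so that $|S_i| = \sum_{t=1}^n \one[c(t)=i]$, one has $\widehat\mu_i = \frac{1}{n}\sum_t \one[c(t)=i]$ and hence $Z_\sigma = \frac{1}{n}\sum_{t=1}^n \big(\sigma_{c(t)} - \E[\sigma_{c(t)}]\big)$. Each summand $\sigma_{c(t)}$ takes values in $\{-1,+1\}$ and the draws are i.i.d., so rescaling via $W_t = (\sigma_{c(t)}+1)/2 \in \{0,1\} \subseteq [0,1]$ turns $Z_\sigma$ into exactly twice the centered empirical mean of the i.i.d.\ $[0,1]$-valued variables $W_t$. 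Applying \pref{lem:hoeffdingbound} to $\{W_t\}$ at deviation level $\epsilon/2$ then yields $\Pr[Z_\sigma \ge \epsilon] \le \exp(-n\epsilon^2/2)$ for each fixed $\sigma$.

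Finally, I would combine the two steps by a union bound over the $2^m$ sign vectors:
\[
\Pr\left[\sum_{i=1}^m |\widehat\mu_i - \mu_i| \ge \epsilon\right] = \Pr\left[\max_\sigma Z_\sigma \ge \epsilon\right] \le \sum_{\sigma \in \{-1,+1\}^m} \Pr[Z_\sigma \ge \epsilon] \le 2^m \exp\left(-\frac{n\epsilon^2}{2}\right),
\]
which is exactly the claimed bound with $n = \sum_{j=1}^m |S_j|$.

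The main obstacle is the second step: one must set up the per-trial i.i.d.\ representation of the multinomial correctly and carefully track the affine rescaling so that the range-$2$ summands produce the constant $\tfrac12$ in the exponent, since \pref{lem:hoeffdingbound} is stated for $[0,1]$-valued variables and therefore forces the mapping $\sigma_{c(t)}\mapsto(\sigma_{c(t)}+1)/2$ together with halving the deviation level. Everything else---the $\ell_1$ duality and the union bound---is routine. I note in passing that the ``$1-\delta$'' qualifier in the statement is redundant as written: the displayed tail inequality is the substantive content, and a genuine high-probability guarantee is recovered afterward simply by setting $\delta = 2^m \exp(-n\epsilon^2/2)$ and solving for $\epsilon$.
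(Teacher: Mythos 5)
Your proof is correct, but it is genuinely different from what the paper does: the paper offers no derivation at all for \pref{lem: concentratioS_inequality}, simply citing it as the Bretagnolle--Huber--Carol inequality from \cite{wellner2013weak}. You instead reconstruct the classical proof of that inequality from ingredients already in the appendix: the $\ell_1$ duality $\sum_i |a_i| = \max_{\sigma\in\{-1,+1\}^m}\sum_i \sigma_i a_i$, the per-trial categorical decomposition of the multinomial so that each signed deviation $Z_\sigma$ becomes a centered empirical mean of i.i.d.\ $\{-1,+1\}$-valued summands, the affine rescaling to $[0,1]$ so that \pref{lem:hoeffdingbound} applies at level $\epsilon/2$, and a union bound over the $2^m$ sign vectors. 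This buys a self-contained argument that makes transparent exactly where the constants $2^m$ and $\tfrac12$ in the exponent come from, at the cost of a page where the paper spends a citation. One small bookkeeping point: \pref{lem:hoeffdingbound} as stated is two-sided with a leading factor of $2$, so applying it literally to each $Z_\sigma$ yields $2^{m+1}\exp(-n\epsilon^2/2)$ rather than $2^m\exp(-n\epsilon^2/2)$; your claimed one-sided bound $\Pr[Z_\sigma\ge\epsilon]\le\exp(-n\epsilon^2/2)$ is true (one-sided Hoeffding holds without the factor $2$), but to stay strictly within the paper's stated lemma you should either invoke the one-sided form explicitly or exploit the symmetry $Z_{-\sigma}=-Z_\sigma$ to union-bound over $2^{m-1}$ antipodal pairs, each contributing the two-sided factor $2$. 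Your closing observation is also accurate: the ``with probability at least $1-\delta$'' qualifier in the lemma statement is vacuous since $\delta$ never appears in the displayed tail bound, and this reflects an infelicity in the paper's statement rather than any defect in your argument.
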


In terms of the covering number, we refer to the following lemma.

\begin{lem}[\cite{papaspiliopoulos2020high}]
   Let $P$ be a polytope in $\R^d$ with $M$ vertices and whose diameter is bounded by $1$. Then $P$ can be covered by at most $M^{\frac{1}{\epsilon^2}}$ Euclidean balls of radius $\epsilon>0$.
\end{lem}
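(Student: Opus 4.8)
The plan is to use Maurey's empirical approximation method (an instance of the probabilistic method). Let $v_1, \ldots, v_M \in \R^d$ denote the vertices of $P$. Since $P$ is the convex hull of its vertices, every point $x \in P$ can be written as $x = \sum_{i=1}^M \lambda_i v_i$ with $\lambda_i \ge 0$ and $\sum_i \lambda_i = 1$; equivalently, $\lambda = (\lambda_1, \ldots, \lambda_M)$ is a probability distribution supported on the vertex set. The idea is to show that the average of a small number of (possibly repeated) vertices, sampled according to $\lambda$, approximates $x$ well in expectation, and then to invoke the existence of a good realization.

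First I would fix $k = \lceil 1/\epsilon^2 \rceil$ and introduce i.i.d. random vectors $Z_1, \ldots, Z_k$, each drawn from $\{v_1, \ldots, v_M\}$ according to $\lambda$, so that $\E[Z_j] = x$ for every $j$. Setting $\bar Z = \frac{1}{k}\sum_{j=1}^k Z_j$, independence and $\E[Z_j]=x$ kill the cross terms, giving
\begin{align}
\E\norm{\bar Z - x}^2 = \frac{1}{k}\,\E\norm{Z_1 - x}^2 \le \frac{1}{k}\max_{1\le i\le M}\norm{v_i - x}^2 \le \frac{1}{k} \le \epsilon^2,
\end{align}
where the middle inequality holds because both $v_i$ and $x$ lie in $P$ and $\mathrm{diam}(P)\le 1$. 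Since the expected squared distance is at most $\epsilon^2$, there must exist at least one realization $(z_1,\ldots,z_k)$ of the sample for which $\norm{\frac{1}{k}\sum_{j} z_j - x}\le\epsilon$.

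Each $z_j$ is one of the vertices, so the approximating average is an element of the finite set
\begin{align}
\calN = \Big\{\tfrac{1}{k}\textstyle\sum_{j=1}^k v_{i_j} : (i_1, \ldots, i_k) \in \{1,\ldots,M\}^k\Big\}.
\end{align}
Consequently every $x\in P$ is within Euclidean distance $\epsilon$ of some point of $\calN$, i.e. the balls of radius $\epsilon$ centered at the elements of $\calN$ cover $P$. It remains only to count: the centers are indexed by sequences in $\{1,\ldots,M\}^k$, of which there are $M^k$ (distinct sequences may yield the same average, so this is merely an upper bound), whence $|\calN|\le M^k = M^{\lceil 1/\epsilon^2\rceil}$.

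The argument is essentially routine once the empirical-method setup is in place; the only real subtlety is the mismatch between the clean exponent $1/\epsilon^2$ stated in the lemma and the integer $k=\lceil 1/\epsilon^2\rceil$ that the sampling forces. I would handle this by absorbing the ceiling into the bound (noting it only weakens the count by rounding the exponent up), or equivalently by stating the covering number as $M^{\lceil 1/\epsilon^2\rceil}$; since the lemma is used only as an upper bound on the covering number $N^{(k)}$ feeding into \pref{thm: relation_utility_loss}, this rounding is harmless. A secondary point worth verifying explicitly is the second-moment identity $\E\norm{\bar Z-x}^2=\frac1k\E\norm{Z_1-x}^2$, which relies on the unbiasedness $\E[Z_j]=x$ to eliminate the off-diagonal covariance terms.
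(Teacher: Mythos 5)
Your proof is correct, and it is essentially the canonical argument for this lemma: the paper itself states it only as a citation (to Vershynin's \emph{High-Dimensional Probability}) with no internal proof, and the cited source proves it exactly via Maurey's empirical method as you do — sample $k=\lceil 1/\epsilon^2\rceil$ i.i.d.\ vertices from the convex-combination weights, use independence and unbiasedness to get $\E\norm{\bar Z - x}^2 = \frac{1}{k}\E\norm{Z_1-x}^2 \le 1/k \le \epsilon^2$, and count the $M^k$ possible empirical averages. Your flag on the exponent is also well taken: the honest count is $M^{\lceil 1/\epsilon^2\rceil}$ (which is how the cited reference states it), and the paper's cleaner $M^{1/\epsilon^2}$ is a slight rounding that is harmless here since the lemma only feeds an upper bound on the covering number $N^{(k)}$ in \pref{thm: relation_utility_loss}.
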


This lemma could be further generalized as follows.
\begin{lem}\label{lem: number_of_cover_set}
   Let $P$ be a polytope in $\R^d$ with $M$ vertices and whose diameter is bounded by $L$. Then $P$ can be covered by at most $M^{\frac{L}{\epsilon^2}}$ Euclidean balls of radius $\epsilon>0$.
\end{lem}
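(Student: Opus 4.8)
The plan is to deduce Lemma~\ref{lem: number_of_cover_set} from the preceding unit-diameter result of \cite{papaspiliopoulos2020high} by a homogeneity (rescaling) argument, rather than redoing the underlying empirical-approximation estimate from scratch. The key structural fact is that Euclidean covering numbers transform cleanly under dilations: for the scaling map $\phi_t(x) = x/t$ with $t>0$, a ball of radius $r$ is sent to a ball of radius $r/t$, so any $r$-cover of a set $S$ is carried bijectively to an $(r/t)$-cover of $\phi_t(S)$, and conversely. Consequently, covering $P$ by balls of radius $\epsilon$ is equivalent to covering $\phi_L(P)$ by balls of radius $\epsilon/L$, using exactly the same number of balls.

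Concretely, I would set $P' := \phi_L(P) = \{x/L : x \in P\}$. Scaling by a positive constant preserves the combinatorial type of a polytope, so $P'$ is again a polytope with the same $M$ vertices; and since $\|u-v\|\le L$ for all $u,v\in P$, the dilation gives $\mathrm{diam}(P')\le 1$. Thus $P'$ meets the hypotheses of the cited lemma, which covers $P'$ by at most $M^{1/r^2}$ balls of radius $r$ for any $r>0$. Choosing $r=\epsilon/L$ and pulling the cover back through $\phi_L^{-1}(y)=Ly$, which rescales each radius-$(\epsilon/L)$ ball to a radius-$\epsilon$ ball, then yields a cover of $P$ by the same number of balls.

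The one delicate point, and the step I expect to be the main obstacle, is the exponent bookkeeping. Shrinking the covering radius to $\epsilon/L$ before invoking the quadratic exponent $1/r^2$ of the base lemma produces $M^{L^2/\epsilon^2}$, whereas the statement records $M^{L/\epsilon^2}$; the same quadratic dependence on the scale arises if one instead runs a direct Maurey-type empirical average over the $M$ vertices, so the exponent $L^2/\epsilon^2$ appears intrinsic to the method. The two expressions coincide in the base case $L=1$, and for $L\le 1$ the honest conclusion $M^{L^2/\epsilon^2}$ is no weaker than the displayed bound, since $L^2\le L$ forces $M^{L^2/\epsilon^2}\le M^{L/\epsilon^2}$, so fewer balls suffice and the stated inequality follows a fortiori. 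I would therefore either restrict attention to the regime $L\le 1$, where the claim is immediate from rescaling, or reconcile the two by reading the cited bound as $M^{\mathrm{diam}(P')^2/r^2}$ with $L$ understood as the squared diameter scale; resolving which reading is intended, and confirming it against the downstream use $N^{(k)}=(2d)^{2D/(\lambda^{(k)})^2+1}$ in \pref{thm: relation_utility_loss}, is the part that warrants the most care.
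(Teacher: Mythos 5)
Your rescaling argument is the right---and essentially the only---route here: the paper offers no proof of this lemma at all, merely asserting it as a generalization of the unit-diameter covering bound cited from \cite{papaspiliopoulos2020high}, so the only question is whether the assertion itself survives scrutiny. Your argument is correct as far as it goes: the dilation $x\mapsto x/L$ turns a diameter-$L$ polytope into a unit-diameter polytope with the same $M$ vertices and carries $\epsilon$-covers to $(\epsilon/L)$-covers bijectively, so the base lemma yields a cover of $P$ by at most $M^{L^2/\epsilon^2}$ balls of radius $\epsilon$. Running Maurey's empirical argument directly gives the same thing: for $x\in P$ written as a convex combination of vertices, averaging $k$ i.i.d.\ vertex samples gives expected squared error at most $L^2/k$, so $k=\lceil L^2/\epsilon^2\rceil$ suffices and the at most $M^k$ empirical averages serve as centers. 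The quadratic dependence on the scale is therefore intrinsic, exactly as you suspected.

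Your diagnosis of the exponent mismatch is also correct, and it points to a genuine error in the statement rather than a gap in your proof. The claimed bound $M^{L/\epsilon^2}$ cannot hold in general for $L>1$: covering numbers obey the exact scaling relation $N(P,\epsilon)=N(P/L,\epsilon/L)$, and since the $1/r^2$ dependence in the unit-diameter bound is essentially optimal (e.g.\ for simplices), a bound linear in $L$ would contradict this scaling once $L$ is large. So the lemma is true only with exponent $\lceil L^2/\epsilon^2\rceil$, or a fortiori in the regime $L\le 1$ where $M^{L^2/\epsilon^2}\le M^{L/\epsilon^2}$. Checking the downstream use, as you propose, settles the intended reading: \pref{lem: relation_between_r_and_N_k} invokes the lemma with $M=2d$, $L=2D$, $\epsilon=\lambda^{(k)}$ and records $N^{(k)}=(2d)^{2D/(\lambda^{(k)})^2+1}$, i.e.\ the paper propagates the linear-in-$L$ exponent rather than treating $L$ as a squared diameter. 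Under the corrected lemma these covering numbers should read $(2d)^{4D^2/(\lambda^{(k)})^2+1}$, which alters only constants in the exponents appearing in \pref{thm: relation_utility_loss} and the results built on it, not their qualitative structure.
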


Now we provide a bound for the size of the cover set.
\begin{lem}\label{lem: relation_between_r_and_N_k}
   Let $d$ be the dimension of the model parameter, and $D$ be introduced in \pref{assump: two-sided Lipschitz}. For $\lambda^{(k)} > 0$, the covering number of $\lambda^{(k)}$-cover is at most $(2d)^{2D/(\lambda^{(k)})^{2} + 1}$.
\end{lem}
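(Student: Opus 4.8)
The plan is to deduce this directly from the polytope covering bound of \pref{lem: number_of_cover_set}, which states that a polytope in $\R^d$ with $M$ vertices and diameter $L$ can be covered by at most $M^{L/\epsilon^2}$ Euclidean balls of radius $\epsilon$. I will apply it with $\epsilon = \lambda^{(k)}$, $M = 2d$, and $L$ proportional to $D$, so that the target exponent $2D/(\lambda^{(k)})^2 + 1$ emerges after rounding the exponent up to an integer.

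First I would invoke \pref{assump: two-sided Lipschitz}, which guarantees $\|X_1 - X_2\|\le D$ for any two data points; hence the whole training set $\calD_{\protector}^{(k)}$ has Euclidean diameter at most $D$. Fixing any anchor point $\mathbf u_0\in\calD_{\protector}^{(k)}$, every data point lies in the ball $B(\mathbf u_0, D)$. The key geometric step is then to enclose $B(\mathbf u_0, D)$, and therefore the data, inside a polytope with exactly $2d$ vertices. The natural candidate is the translated, scaled cross-polytope $P = \{\mathbf x : \|\mathbf x - \mathbf u_0\|_1 \le R\}$, whose $2d$ vertices are $\mathbf u_0 \pm R\,\mathbf e_i$ and whose diameter is $2R$; choosing $R$ just large enough (via the comparison between $\|\cdot\|_1$ and $\|\cdot\|_2$) that $P \supseteq B(\mathbf u_0, D)$ gives an enclosing polytope with $M = 2d$ vertices and diameter $L$ controlled by $D$ (taken as $L = 2D$ in the statement).

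Finally I would apply \pref{lem: number_of_cover_set} to $P$ with $M = 2d$, diameter $L = 2D$, and radius $\epsilon = \lambda^{(k)}$: the data is covered by at most $(2d)^{2D/(\lambda^{(k)})^2}$ Euclidean balls of radius $\lambda^{(k)}$, and the centers of these balls (or data points inside them) furnish a $\lambda^{(k)}$-cover in the sense of \pref{defi: r_cover}. Since the number of covering balls must be an integer, the exponent is effectively $\lceil 2D/(\lambda^{(k)})^2\rceil \le 2D/(\lambda^{(k)})^2 + 1$, which accounts for the extra ``$+1$'' and yields the claimed covering number $(2d)^{2D/(\lambda^{(k)})^2 + 1}$.

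The main obstacle I anticipate is the enclosing-polytope step: producing a $2d$-vertex polytope whose diameter is genuinely bounded by $2D$ while still containing all of the data. The tension is that the smallest cross-polytope containing a Euclidean ball of radius $D$ has diameter scaling like $D\sqrt{d}$ rather than $2D$, so to land the clean constant in the exponent one must either read the diameter bound $D$ of \pref{assump: two-sided Lipschitz} in the appropriate norm or absorb the dimensional factor into the constant. Getting this constant right, rather than the application of \pref{lem: number_of_cover_set} itself, is where the care is needed.
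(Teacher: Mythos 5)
Your proposal follows essentially the same route as the paper's own proof: bound the diameter of the data set, enclose it in a polytope with $2d$ vertices and diameter of order $D$, and then apply \pref{lem: number_of_cover_set} with $M = 2d$, $L = 2D$, $\epsilon = \lambda^{(k)}$, treating the ``$+1$'' in the exponent as rounding slop (this is consistent with the ceiling $\lceil\cdot\rceil$ in the Maurey-type covering bound from which \pref{lem: number_of_cover_set} derives). The one place where you diverge is that the paper never constructs the enclosing polytope: it simply asserts that $\calX_{\protector}^{(k)}$ ``can be covered by a polytope with $\ell_2$-diameter smaller than $2D$ and $2d$ vertices'' and moves on. You attempted to actually exhibit this polytope, and the obstacle you flag at the end is genuine, not a technicality you can absorb into a constant.

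Concretely: the cross-polytope $\{\mathbf x : \|\mathbf x - \mathbf u_0\|_1 \le R\}$ contains $B_2(\mathbf u_0, D)$ only when $R \ge D\sqrt{d}$, so its Euclidean diameter is $2D\sqrt{d}$ and your exponent becomes $2D\sqrt{d}/(\lambda^{(k)})^2$, not $2D/(\lambda^{(k)})^2$. Moreover, no cleverer choice of polytope removes the dimensional factor: if the data points are spread over a sphere of radius $D/2$ (which diameter $D$ permits), then any polytope with only $2d$ vertices containing them essentially contains the ball $B_2(\mathbf u_0, D/2)$, and a standard spherical-cap argument (for each vertex $v$ and uniformly random unit $u$, $\Pr[\langle v, u\rangle \ge D/2] \le e^{-dD^2/(8\|v\|^2)}$, while the $2d$ vertices must cover all directions) forces a vertex at distance at least $(D/2)\sqrt{d/(2\ln(2d))}$ from the center, hence diameter $\Omega\bigl(D\sqrt{d/\ln d}\bigr)$. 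So the clean exponent claimed in \pref{lem: relation_between_r_and_N_k} is not reachable along this route in general; it survives only under a rescue such as (i) the data set has at most $2d$ points, so its own convex hull is the required polytope, (ii) $D$ in \pref{assump: two-sided Lipschitz} is read as an $\ell_1$-diameter bound, exactly as you suggest, in which case the $\ell_1$-ball of radius $D$ has $2d$ vertices and $\ell_2$-diameter $2D$, or (iii) one accepts the extra $\sqrt{d}$ in the exponent. Be aware, though, that this gap is inherited from, not introduced by, your argument: the paper's proof contains the identical hole, hidden inside the unproved assertion that the enclosing polytope exists, and your $\ell_1$ reading is the natural repair for both. (A smaller shared looseness: \pref{defi: r_cover} requires cover centers to be data points, so converting the external ball centers produced by \pref{lem: number_of_cover_set} into data points doubles the radius; neither you nor the paper accounts for this factor.)
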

\begin{proof}
Assume that the distribution of the feature has compact support $\calX\subset\R^{d}$. Therefore, there exists a constant $D>0$, satisfying that $\|x_1 - x_2\|_2\le D$, $\forall x_1, x_2\in\calX$. Therefore $\calX_{\protector}^{(k)}$ can be covered by a polytope with $l_2$-diameter smaller than $2D$ and $2d$ vertices. Let $\calN(\lambda^{(k)}, \calX_{\protector}^{(k)}, \|\cdot\|_2)$ represent the cardinality of the smallest $\lambda^{(k)}$-cover. From \pref{lem: number_of_cover_set}, the cardinality of the smallest $\lambda^{(k)}$-cover (the covering number of $\lambda^{(k)}$-cover) is at most $\calN(\lambda^{(k)}, \calX_{\protector}^{(k)}, \|\cdot\|_2) \le (2d)^{2D/(\lambda^{(k)})^{2} + 1} = N^{(k)}$ (e.g., $\lambda^{(k)} = \Delta_{\text{two}}$).

\end{proof}

\begin{thm}[Utility Loss against Semi-honest Attackers]\label{thm: relation_utility_loss_and_delta_app}
    Let $\theta\in\R^d$ represent the model parameter. Assume that $\max_{Z\in\calD_{\protector}^{(k)}}|\calL^{(k)}(\theta + \delta , Z)|\le M^{(k)}$. We denote the empirical loss as $\calL^{(k)}_{\text{emp}}(\theta, Z) = \frac{1}{m_k}\sum_{i = 1}^{m_k} \calL^{(k)}(\theta, Z_i^{(k)})$, and denote the expected loss as $\calL^{(k)}_{\text{exp}}(\theta + \delta) = \E_{Z\sim P}[\calL^{(k)}(\theta + \delta , Z)]$. With probability at least $1 - \eta^{(k)}$,
   \begin{align}
       \epsilon_u^{(k)} &= |\calL^{(k)}_{\text{exp}}(\theta + \delta) - \calL^{(k)}_{\text{emp}}(\theta)|\nonumber\\
       &\le C\cdot\lambda^{(k)} + C\cdot\Delta_{\text{two}} + M^{(k)}\sqrt{\frac{2N^{(k)}\ln 2 + 2\ln(1/\eta^{(k)})}{|\calD_{\protector}^{(k)}|}},
   \end{align}
   where $\calD_{\protector}^{(k)} = \{Z_1^{(k)}, \cdots, Z_{m_k}^{(k)}\}$ represents the training set of client $k$, $\lambda^{(k)} > \Delta_{\text{two}}$ represents a constant, $M^{(k)}$ is a constant, $N^{(k)} = (2d)^{2D/(\lambda^{(k)})^{2} + 1}$ represents the covering number of $\lambda^{(k)}$-cover (Definition \ref{defi: r_cover}), $\Delta_{\text{two}}$ represents the \textit{two-way distortion extent} (Definition \ref{defi: two_way_distortion_extent}), and $\|\delta\| = \Delta_{\text{two}}$.
\end{thm}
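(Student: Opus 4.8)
The plan is to bound $\epsilon_u^{(k)}$ by inserting the intermediate quantity $\calL^{(k)}_{\text{exp}}(\theta)$ and applying the triangle inequality,
\begin{align*}
\epsilon_u^{(k)} \le \left|\calL^{(k)}_{\text{exp}}(\theta + \delta) - \calL^{(k)}_{\text{exp}}(\theta)\right| + \left|\calL^{(k)}_{\text{exp}}(\theta) - \calL^{(k)}_{\text{emp}}(\theta)\right|.
\end{align*}
The first summand is a pure \emph{distortion} effect and the second is a \emph{generalization} gap; these are designed to produce the $C\cdot\Delta_{\text{two}}$ term and the $C\cdot\lambda^{(k)} + M^{(k)}\sqrt{\cdots}$ terms, respectively. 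For the distortion summand I would invoke Lipschitz continuity of the loss in its parameter argument: writing $\calL^{(k)}_{\text{exp}}(\theta+\delta)-\calL^{(k)}_{\text{exp}}(\theta) = \E_{Z\sim P}[\calL^{(k)}(\theta+\delta,Z)-\calL^{(k)}(\theta,Z)]$, a $C$-Lipschitz bound followed by Jensen's inequality gives $\left|\calL^{(k)}_{\text{exp}}(\theta+\delta)-\calL^{(k)}_{\text{exp}}(\theta)\right| \le C\|\delta\| = C\cdot\Delta_{\text{two}}$, using $\|\delta\| = \Delta_{\text{two}}$ from the hypothesis.

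The generalization gap is where the covering argument enters. First I would fix a $\lambda^{(k)}$-cover $\{u_1,\dots,u_{N^{(k)}}\}$ of $\calX_{\protector}^{(k)}$ in the sense of \pref{defi: r_cover}, whose cardinality is controlled by \pref{lem: relation_between_r_and_N_k} as $N^{(k)} = (2d)^{2D/(\lambda^{(k)})^{2}+1}$, and use it to partition the data space into cells $\{C_i\}_{i=1}^{N^{(k)}}$ by assigning each point to its nearest center. Writing $\mu_i = P(C_i)$ for the true cell mass and $|S_i|$ for the number of training samples falling in $C_i$ (so that $\sum_i|S_i| = |\calD_{\protector}^{(k)}|$), I replace $\calL^{(k)}(\theta,z)$ by $\calL^{(k)}(\theta,u_i)$ on each cell. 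By Lipschitz continuity in the data argument every point is within $\lambda^{(k)}$ of its center, so this discretization perturbs each of the expected and empirical losses by at most $O(\lambda^{(k)})$, the resulting constants being absorbed into $C$. This reduces the task to bounding
\begin{align*}
\left|\sum_i\mu_i\calL^{(k)}(\theta,u_i) - \sum_i\frac{|S_i|}{|\calD_{\protector}^{(k)}|}\calL^{(k)}(\theta,u_i)\right| \le M^{(k)}\sum_i\left|\mu_i - \frac{|S_i|}{|\calD_{\protector}^{(k)}|}\right|,
\end{align*}
where $M^{(k)}$ uniformly bounds the loss values by assumption.

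Finally, $\sum_i|\mu_i - |S_i|/|\calD_{\protector}^{(k)}||$ is exactly the total-variation deviation of an empirical multinomial from its mean over $N^{(k)}$ cells, which is governed by the Bretagnolle–Huber–Carol inequality (\pref{lem: concentratioS_inequality}). Setting its failure probability $2^{N^{(k)}}\exp(-\epsilon^2|\calD_{\protector}^{(k)}|/2)$ equal to $\eta^{(k)}$ and solving for $\epsilon$ gives the threshold $\sqrt{(2N^{(k)}\ln 2 + 2\ln(1/\eta^{(k)}))/|\calD_{\protector}^{(k)}|}$; multiplying by $M^{(k)}$ yields the last term, and summing the three contributions completes the proof. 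I expect the main obstacle to be the discretization step: it requires genuine Lipschitz control of the loss in the data argument so that the per-cell approximation error is truly $O(\lambda^{(k)})$, together with careful bookkeeping of how the $\lambda^{(k)}$-cover induces the multinomial partition that feeds into \pref{lem: concentratioS_inequality} — in particular matching the number of cells to the covering number $N^{(k)}$ that appears inside the logarithm.
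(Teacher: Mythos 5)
Your proof is correct in substance and uses the same three ingredients as the paper's: the $\lambda^{(k)}$-cover of cardinality $N^{(k)}$ from \pref{lem: relation_between_r_and_N_k}, Lipschitz continuity of the loss in both the data and the parameter arguments, and the Bretagnolle--Huber--Carol inequality (\pref{lem: concentratioS_inequality}) calibrated exactly as you describe. The route differs in the decomposition: you insert $\calL^{(k)}_{\text{exp}}(\theta)$ and split into a pure distortion term plus a generalization gap at the \emph{unprotected} parameter $\theta$, whereas the paper inserts the empirically-weighted cell-conditional expectation of the \emph{protected} loss, $\sum_{i}\E[\calL^{(k)}(\theta+\delta,Z)\mid Z\in C_i]\,|S_i|/m_k$, so that the $C\lambda^{(k)}$, $C\Delta_{\text{two}}$, and $M^{(k)}\sqrt{\cdot}$ terms all emerge from a single pass over the cells. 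Your split is conceptually cleaner, but it creates one mismatch you should patch: the hypothesis bounds $|\calL^{(k)}(\theta+\delta,Z)|\le M^{(k)}$, i.e., the loss at the protected parameter, while your multinomial term carries $\calL^{(k)}(\theta,u_i)$, which the assumption does not directly bound. Either flip the decomposition to $|\calL^{(k)}_{\text{exp}}(\theta+\delta)-\calL^{(k)}_{\text{emp}}(\theta+\delta)| + |\calL^{(k)}_{\text{emp}}(\theta+\delta)-\calL^{(k)}_{\text{emp}}(\theta)|$, so that $M^{(k)}$ applies verbatim to the concentration term and the second term is at most $C\Delta_{\text{two}}$ by parameter-Lipschitzness, or note $|\calL^{(k)}(\theta,u_i)|\le M^{(k)}+C\Delta_{\text{two}}$ and absorb the extra $O(\Delta_{\text{two}})$ contribution into the constant. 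A second, harmless point of the same kind: discretizing both the expected and the empirical loss costs $2C\lambda^{(k)}$ rather than $C\lambda^{(k)}$, a constant-factor slack comparable to what the paper itself tolerates (its own within-cell bound should really be $2\lambda^{(k)}$, the cell diameter).
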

\begin{proof}
Let $\calD_{\protector}^{(k)} = \{Z_1^{(k)}, \cdots, Z_{m_k}^{(k)}\}$, and $\calX_{\protector}^{(k)} = \{X_1^{(k)}, \cdots, X_{m_k}^{(k)}\}\subset\mathbb R^{d}$. Let $\calX_{\protector}^{(k)}$ be partitioned into a total of $N^{(k)}$ disjoint sets $C_1, C_2, \cdots, C_{N^{(k)}}$. We denote $S_i$ as the set of index of points of $\calX_{\protector}^{(k)}$ that belong to the set $C_i$, and $n$ as the size of $\calX_{\protector}^{(k)}$. We decompose $\E_{Z\sim P}[\calL^{(k)}(\theta + \delta , Z)]$ as $\sum_{i = 1}^{N^{(k)}} \E[\calL^{(k)}(\theta + \delta , Z)|Z\in C_i]\Pr[Z\in C_i]$.

We can construct a $\lambda^{(k)}$-cover of $(\calX_{\protector}^{(k)}, \|\cdot\|_2)$, which is a point set $\{\mathbf u\}\subset\calX_{\protector}^{(k)}$ such that for any $\mathbf v\in\calX_{\protector}^{(k)}$, there exists $\mathbf u$ satisfies $\|\mathbf u - \mathbf v\|_2\le\lambda^{(k)}$. From \pref{lem: relation_between_r_and_N_k}, the cardinality of the smallest $\lambda^{(k)}$-cover is at most $N^{(k)} = (2d)^{2D/(\lambda^{(k)})^{2} + 1}$ (e.g., $\lambda^{(k)} = \|\delta\|$). 

\begin{align}
    \epsilon_u^{(k)} & = |\calL^{(k)}_{\text{exp}}(\theta + \delta) - \calL^{(k)}_{\text{emp}}(\theta)|\nonumber\\
    & = |\E_{Z\sim P}[\calL^{(k)}(\theta + \delta , Z)] - \frac{1}{m_k}\sum_{i = 1}^{m_k} \calL^{(k)}(\theta, Z_i^{(k)})|\nonumber\\
    & = |\sum_{i = 1}^{N^{(k)}} \E[\calL^{(k)}(\theta + \delta , Z)|Z\in C_i]\Pr[Z\in C_i] - \frac{1}{m_k}\sum_{i = 1}^{m_k} \calL^{(k)}(\theta, Z_i^{(k)})|\nonumber\\
    & = |\sum_{i = 1}^{N^{(k)}} \E[\calL^{(k)}(\theta + \delta , Z)|Z\in C_i]\frac{|S_i|}{m_k} - \frac{1}{m_k}\sum_{i = 1}^{m_k} \calL^{(k)}(\theta, Z_i^{(k)})|\nonumber\\
    & \quad + |\sum_{i = 1}^{N^{(k)}} \E[\calL^{(k)}(\theta + \delta , Z)|Z\in C_i]\Pr[Z\in C_i] - \sum_{i = 1}^{N^{(k)}} \E[\calL^{(k)}(\theta + \delta , Z)|Z\in C_i]\frac{|S_i|}{m_k}|\nonumber\\
    & = |\frac{1}{m_k}\sum_{i = 1}^{N^{(k)}}\sum_{j\in S_i}\E[\calL^{(k)}(\theta + \delta , Z)|Z\in C_i] - \frac{1}{m_k}\sum_{i = 1}^{N^{(k)}}\sum_{j\in S_i} \calL^{(k)}(\theta, Z_j^{(k)})|\nonumber\\
    & \quad + |\sum_{i = 1}^{N^{(k)}} \E[\calL^{(k)}(\theta + \delta , Z)|Z\in C_i]\Pr[Z\in C_i] - \sum_{i = 1}^{N^{(k)}} \E[\calL^{(k)}(\theta + \delta , Z)|Z\in C_i]\frac{|S_i|}{m_k}|\label{eq: term1_and_term2}.
\end{align}
\textbf{Bounding the first term of \pref{eq: term1_and_term2}.}\\
Now we provide bounds for $|\frac{1}{m_k}\sum_{i = 1}^{N^{(k)}}\sum_{j\in S_i}\E[\calL^{(k)}(\theta + \delta , Z)|Z\in C_i] - \frac{1}{m_k}\sum_{i = 1}^{N^{(k)}}\sum_{j\in S_i} \calL^{(k)}(\theta, Z_j^{(k)})|$.

\begin{align*}
    &\left|\frac{1}{m_k}\sum_{i = 1}^{N^{(k)}}\sum_{j\in S_i}\E[\calL^{(k)}(\theta + \delta , Z)|Z\in C_i] - \frac{1}{m_k}\sum_{i = 1}^{N^{(k)}}\sum_{j\in S_i} \calL^{(k)}(\theta, Z_j^{(k)})\right|\\
    &=\left|\frac{1}{m_k}\sum_{i = 1}^{N^{(k)}}\sum_{j\in S_i}\left(\E[\calL^{(k)}(\theta + \delta , Z)|Z\in C_i] - \calL^{(k)}(\theta, Z_j^{(k)})\right)\right|\\ 
    &\le\frac{1}{m_k}\sum_{i = 1}^{N^{(k)}}\sum_{j\in S_i} \left|\E[\calL^{(k)}(\theta + \delta , Z)|Z\in C_i] - \calL^{(k)}(\theta + \delta , Z_j^{(k)}) + \calL^{(k)}(\theta + \delta , Z_j^{(k)}) - \calL^{(k)}(\theta, Z_j^{(k)})\right|\\
    &\le\frac{1}{m_k}\sum_{i = 1}^{N^{(k)}}\sum_{j\in S_i} \left|\E[\calL^{(k)}(\theta + \delta , Z)|Z\in C_i] - \calL^{(k)}(\theta + \delta , Z_j^{(k)})\right| + \frac{1}{m_k}\sum_{i = 1}^{N^{(k)}}\sum_{j\in S_i} \left|\calL^{(k)}(\theta + \delta , Z_j^{(k)}) - \calL^{(k)}(\theta, Z_j^{(k)})\right|. 
\end{align*}
For any $\theta\in\R^d$, $Z\in C_i, Z_j\in S_i$, we have
\begin{align*}
   | \calL^{(k)}(\theta + \delta , Z) - \calL^{(k)}(\theta + \delta , Z_j)|
   &\le C\|Z - Z_j\|\\
   &\le C\cdot\lambda^{(k)}, 
\end{align*}
where the first inequality is due to the Lipschitz's inequality, and the second inequality is due to the definition of the cover set.\\
Therefore, we have
\begin{align}
    |\E[\calL^{(k)}(\theta + \delta , Z)|Z\in C_i] - \calL^{(k)}(\theta + \delta , Z_j^{(k)})|&\le \max_{Z\in C_i} | \calL^{(k)}(\theta + \delta , Z) - \calL^{(k)}(\theta + \delta , Z_j^{(k)})|\nonumber\\
    &\le C\cdot\lambda^{(k)}\label{eq: loss_gap_z_and_Z_j^{(k)}}.
\end{align}
From Lipschitz's inequality, we know that
\begin{align}\label{eq: loss_gap_w_and_w_plus_delta}
    |\calL^{(k)}(\theta, Z_j^{(k)}) - \calL^{(k)}(\theta + \delta , Z_j^{(k)})|\le C\|\delta\|.
\end{align}
Therefore,
\begin{align*}
    &\left|\frac{1}{m_k}\sum_{i = 1}^{N^{(k)}}\sum_{j\in S_i}\E[\calL^{(k)}(\theta + \delta , Z)|Z\in C_i] - \frac{1}{m_k}\sum_{i = 1}^{N^{(k)}}\sum_{j\in S_i} \calL^{(k)}(\theta, Z_j^{(k)})\right|\\
     &\le\frac{1}{m_k}\sum_{i = 1}^{N^{(k)}}\sum_{j\in S_i} \left|\E[\calL^{(k)}(\theta + \delta , Z)|Z\in C_i] - \calL^{(k)}(\theta + \delta , Z_j^{(k)})\right|\\ 
     & + \frac{1}{m_k}\sum_{i = 1}^{N^{(k)}}\sum_{j\in S_i} \left|\calL^{(k)}(\theta + \delta , Z_j^{(k)}) - \calL^{(k)}(\theta, Z_j^{(k)})\right|\\
    &\le C\cdot\lambda^{(k)} + C\cdot\|\delta\| \\
    & = C\cdot\lambda^{(k)} + C\cdot \Delta_{\text{two}},
\end{align*}
where the second inequality is due to \pref{eq: loss_gap_z_and_Z_j^{(k)}} and \pref{eq: loss_gap_w_and_w_plus_delta}, and the last equality is due to $\Delta_{\text{two}} = \|\delta\|$ from \pref{lem: relation_between_two_link_distortion_and_delta}.\\
\textbf{Bounding the second term of \pref{eq: term1_and_term2}.}\\
Now we provide bounds for $|\sum_{i = 1}^{N^{(k)}} \E[\calL^{(k)}(\theta + \delta , Z)|Z\in C_i]\Pr[Z\in C_i] - \sum_{i = 1}^{N^{(k)}} \E[\calL^{(k)}(\theta + \delta , Z)|Z\in C_i]\frac{|S_i|}{m_k}|$.
We have
\begin{align*}
    &|\sum_{i = 1}^{N^{(k)}} \E[\calL^{(k)}(\theta + \delta , Z)|Z\in C_i]\Pr[Z\in C_i] - \sum_{i = 1}^{N^{(k)}} \E[\calL^{(k)}(\theta + \delta , Z)|Z\in C_i]\frac{|S_i|}{m_k}|\\
    &\le \max_{Z\in\calD_{\protector}^{(k)}}|\calL^{(k)}(\theta + \delta , Z)|\sum_{i = 1}^{N^{(k)}} \left|\Pr[Z\in C_i] - \frac{|S_i|}{m_k}\right|\\
    &\le M\sum_{i = 1}^{N^{(k)}}\left|\Pr[Z\in C_i] - \frac{|S_i|}{m_k}\right|.
\end{align*}
From \pref{lem: concentratioS_inequality}, with probability at least $1-\eta^{(k)}$, we have that
\begin{align}
    \sum_{i = 1}^{N^{(k)}} \left|\frac{|S_i|}{m_k} - \Pr[Z\in C_i]\right|\le\sqrt{\frac{2{N^{(k)}}\ln 2 + 2\ln(1/\eta^{(k)})}{m_k}}.
\end{align}
Therefore, we have
\begin{align}
    &|\calL^{(k)}_{\text{exp}}(\theta + \delta) - \calL^{(k)}_{\text{emp}}(\theta, Z)|\\ 
    & = |\frac{1}{m_k}\sum_{i = 1}^{N^{(k)}}\sum_{j\in S_i}\E[\calL^{(k)}(\theta + \delta , Z)|Z\in C_i] - \frac{1}{m_k}\sum_{i = 1}^{N^{(k)}}\sum_{j\in S_i} \calL^{(k)}(\theta, Z_j^{(k)})|\nonumber\\
    & \quad + |\sum_{i = 1}^{N^{(k)}} \E[\calL^{(k)}(\theta + \delta , Z)|Z\in C_i]\Pr[Z\in C_i] - \sum_{i = 1}^{N^{(k)}} \E[\calL^{(k)}(\theta + \delta , Z)|Z\in C_i]\frac{|S_i|}{m_k}|\nonumber\\
    &\le \frac{1}{m_k}\sum_{i = 1}^{N^{(k)}}\sum_{j\in S_i}\max_{Z_j^{(k)}\in C_i}|\calL^{(k)}(\theta, Z_j^{(k)}) - \calL^{(k)}(\theta + \delta , Z_j^{(k)})|\nonumber\\
    & \quad + \max_{Z\in\calD_{\protector}^{(k)}}|\calL^{(k)}(\theta + \delta , Z)|\sum_{i = 1}^{N^{(k)}} \left|\Pr[Z\in C_i] - \frac{|S_i|}{m_k}\right|\nonumber\\
    &\le C\cdot\lambda^{(k)} + C\cdot\frac{1}{K}\sum_{k = 1}^K \Delta_{\text{two}} + M^{(k)}\sum_{i = 1}^{N^{(k)}}\left|\Pr[Z\in C_i] - \frac{|S_i|}{m_k}\right|\nonumber\\
    &\le C\cdot\lambda^{(k)} + C\cdot\Delta_{\text{two}} + M^{(k)}\sqrt{\frac{2N^{(k)}\ln 2 + 2\ln(1/\eta^{(k)})}{|\calD_{\protector}^{(k)}|}},
\end{align}
where $N^{(k)} = (2d)^{2D/(\lambda^{(k)})^{2} + 1}$.
\end{proof}

\section{Analysis for \pref{thm: relation_between_two_link_distortion_and_delta_mt}}

The following lemma illustrates the relationship between $\delta$ and $\Delta_{\text{two}}$.
\begin{lem}\label{lem: relation_between_two_link_distortion_and_delta}
Let $\Delta_{\text{two}}$ denote the \textit{two-way distortion extent} (\pref{defi: two_way_distortion_extent}). The protector $k$ distorts the gradient according to \pref{eq: distortion_approach}, and $\delta = \wtilde\theta - \theta$. Assume that the protection mechanism satisfies that $\text{Dec}(\theta) = \theta$, $\forall\theta$. Then we have
   \begin{align}
          \Delta_{\text{two}} = \|\delta\|. 
      \end{align}
\end{lem}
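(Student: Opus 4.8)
The plan is to proceed by direct substitution, unwinding the definition of the \textit{two-way distortion extent} and then applying the identity-decoder hypothesis. Recall from \pref{defi: two_way_distortion_extent} that $\Delta_{\text{two}} = \|\text{Dec}(\wtilde\theta) - \theta\|$, where $\theta$ denotes the original parameter and $\wtilde\theta$ the distorted parameter received from the server and initialized for local optimization. The quantity $\delta = \wtilde\theta - \theta$ is supplied directly in the statement, so the only work is to show that the decoder contributes nothing on the downlink.

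First I would invoke the standing hypothesis that the protection mechanism satisfies $\text{Dec}(\theta) = \theta$ for every $\theta$. Because this holds universally over the decoder's domain, it applies in particular to the argument $\wtilde\theta$, yielding $\text{Dec}(\wtilde\theta) = \wtilde\theta$. Substituting this into the definition collapses the two-way distortion extent to $\Delta_{\text{two}} = \|\wtilde\theta - \theta\|$.

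Finally I would use the defining relation $\delta = \wtilde\theta - \theta$ to rewrite the right-hand side as $\|\delta\|$, which is precisely the claimed equality. There is no genuine obstacle here: the entire content is the observation that an identity decoder makes the downlink contribution to the distortion vanish, so that the two-way distortion is accounted for entirely by the single additive perturbation $\delta$. The one point worth stating explicitly is that it is the \emph{universal} quantification of the assumption that licenses evaluating $\text{Dec}$ at the specific argument $\wtilde\theta$; once that is noted, the chain $\Delta_{\text{two}} = \|\text{Dec}(\wtilde\theta) - \theta\| = \|\wtilde\theta - \theta\| = \|\delta\|$ is immediate.
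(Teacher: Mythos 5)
Your proposal is correct and follows essentially the same route as the paper's own proof: unwind \pref{defi: two_way_distortion_extent} to get $\Delta_{\text{two}} = \|\text{Dec}(\wtilde\theta) - \theta\|$, apply the identity-decoder assumption at the argument $\wtilde\theta$, and substitute $\delta = \wtilde\theta - \theta$. Your explicit remark that the universal quantification of $\text{Dec}(\theta)=\theta$ is what licenses the evaluation at $\wtilde\theta$ is a small clarity improvement over the paper, which performs the same substitution silently.
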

\begin{proof}
      The process of sending updated parameter from clients to the server is referred to as uplink. The process of sending parameters from the server to clients is referred to as downlink. We denote $\theta$ as the original information, and $\wtilde{\theta}$ as the distorted information downloaded from the server and initialized for local optimization. 
   The \textit{two-way distortion extent} 
      \begin{align}
         \Delta_{\text{two}} 
         & = \| \text{Dec}(\wtilde\theta) - \theta\|\\
         & = \| \wtilde\theta - \theta\|\\
         & = \|\delta\|.
      \end{align}
\end{proof}

The following lemma illustrates that $\Delta_{\text{up}}^{(k)}$ is equal to the norm of $\delta^{(k)}$.
\begin{lem}\label{lem: relation_between_Delta_and_delta}
Let $\Delta_{\text{up}}^{(k)}$ denote the \textit{uplink distortion extent} (\pref{defi: uplink_distortion_extent}). The protector $k$ distorts the gradient according to \pref{eq: distortion_approach}, and $\delta^{(k)}$ is introduced therein. Then we have
   \begin{align}
       \Delta_{\text{up}}^{(k)} = \|\delta^{(k)}\|.
   \end{align}
\end{lem}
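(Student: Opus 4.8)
The plan is to unwind the definitions and observe that the identity is essentially immediate once we correctly identify the quantity transmitted on the uplink channel. By \pref{defi: uplink_distortion_extent}, the uplink distortion extent of client $k$ is the norm of the gap between the original information uploaded by the client and its distorted counterpart. In the privacy-preserving HFL protocol of \pref{subsec:learning_setting}, the object sent from each client to the server is the gradient, so the ``original information'' is the true gradient $g^{(k)} = \frac{1}{|\calD_{\protector}^{(k)}|}\sum_{i = 1}^{|\calD_{\protector}^{(k)}|} \nabla\calL(\theta, X_i^{(k)}, Y_i^{(k)})$ and the distorted information is $\wtilde g^{(k)}$, whence $\Delta_{\text{up}}^{(k)} = \|\wtilde g^{(k)} - g^{(k)}\|$.

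First I would invoke \pref{eq: distortion_approach}, which prescribes exactly how the protector distorts its gradient, namely $\wtilde g^{(k)} = g^{(k)} + \delta^{(k)}$. Subtracting the true gradient gives $\wtilde g^{(k)} - g^{(k)} = \delta^{(k)}$, and taking norms on both sides yields $\Delta_{\text{up}}^{(k)} = \|\wtilde g^{(k)} - g^{(k)}\| = \|\delta^{(k)}\|$, which is precisely the claimed identity. There is no inequality to establish and no probabilistic argument involved; the result is a direct algebraic consequence of the additive form of the distortion in \pref{eq: distortion_approach}.

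The only point that requires care---and it is a matter of notational bookkeeping rather than a genuine obstacle---is reconciling \pref{defi: uplink_distortion_extent}, stated for the generic uploaded parameters $\theta^{(k)}, \wtilde\theta^{(k)}$, with \pref{eq: distortion_approach}, stated for gradients. The resolution is that on the uplink channel the transmitted ``information'' is the gradient, so the placeholder symbols $\theta^{(k)}$ and $\wtilde\theta^{(k)}$ in the definition are instantiated by $g^{(k)}$ and $\wtilde g^{(k)}$, respectively (and the round index in $\delta^{(k)}_t$ is suppressed, as the statement concerns a fixed distortion $\delta^{(k)}$). Once this identification is made explicit, the equality follows by a single substitution, consistent with how the lemma is later used in \pref{thm: privacy_distortion_and_datasize_app} and \pref{thm: relationship_between_model_distortion_and_data_distortion}.
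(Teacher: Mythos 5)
Your proposal is correct and follows essentially the same route as the paper's proof: the paper likewise identifies the uplink information with the gradients, writes $\Delta_{\text{up}}^{(k)} = \|\wtilde g^{(k)} - g^{(k)}\|$, and substitutes the additive form $\wtilde g^{(k)} = g^{(k)} + \delta^{(k)}$ from \pref{eq: distortion_approach} to conclude $\Delta_{\text{up}}^{(k)} = \|\delta^{(k)}\|$. Your explicit remark reconciling the $\theta^{(k)}$-notation of \pref{defi: uplink_distortion_extent} with the gradient-valued uplink message is a point the paper glosses over silently, so your write-up is, if anything, slightly more careful on that bookkeeping step.
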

\begin{proof}
We denote $g^{(k)}$ as the original gradient of client $k$, and $\wtilde{g}^{(k)}$ as the distorted gradient uploaded from client $k$ to the server. The distortion extent of client $k$ 
\begin{align}
   \Delta_{\text{up}}^{(k)} = \| \wtilde{g}^{(k)} - g^{(k)}\|.
\end{align}
Recall that $g^{(k)} = \frac{1}{|\calD_{\attacker}^{(k)}|}\sum_{ i = 1}^{|\calD_{\attacker}^{(k)}|}\nabla \calL(\theta, X_{i}^{(k)}, Y_i^{(k)})$, and $\wtilde g^{(k)} = \frac{1}{|\calD_{\attacker}^{(k)}|}\sum_{ i = 1}^{|\calD_{\attacker}^{(k)}|} \nabla \calL(\theta, \wtilde X_{i}^{(k)}, Y_i^{(k)})$.
Therefore, we have
\begin{align}
   \Delta_{\text{up}}^{(k)} 
   & = \| \wtilde{g}^{(k)} - g^{(k)}\|\\
   & = \|\frac{1}{|\calD_{\attacker}^{(k)}|}\sum_{ i = 1}^{|\calD_{\attacker}^{(k)}|} (\nabla \calL(\theta, \wtilde X_{i}^{(k)}, Y_i^{(k)}) - \nabla \calL(\theta, X_{i}^{(k)}, Y_i^{(k)}))\|\\
   & = \|\frac{1}{|\calD_{\attacker}^{(k)}|}\sum_{ i = 1}^{|\calD_{\attacker}^{(k)}|}\nabla \calL(\theta, X_{i}^{(k)}, Y_i^{(k)}) + \delta^{(k)} - \nabla \calL(\theta, X_{i}^{(k)}, Y_i^{(k)}))\|\\
   & = \|\delta^{(k)}\|. 
\end{align}  
\end{proof}


\begin{lem}\label{lem: delta_two_and_delta_up_for_randomization}
Let $\Delta_{\text{two}}$ denote the \textit{two-way distortion extent} (\pref{defi: two_way_distortion_extent}), and $\Delta_{\text{up}}^{(k)}$ denote the \textit{uplink distortion extent} (\pref{defi: uplink_distortion_extent}). The protector $k$ distorts the gradient according to \pref{eq: distortion_approach}. Assume that $\text{Dec}(\wtilde\theta) = \wtilde\theta$, and that $\delta^{(1)} = \cdots = \delta^{(K)}$, where $\delta^{(k)}$ are introduced in \pref{eq: distortion_approach}. Then we have
\begin{align}
    \Delta_{\text{two}} = \Delta_{\text{up}}^{(k)}.
\end{align}
\end{lem}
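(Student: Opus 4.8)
The plan is to chain together the two structural lemmas already established in this appendix and then reconcile the global parameter distortion with the per-client gradient distortions through the FedAvg aggregation rule. First I would invoke \pref{lem: relation_between_two_link_distortion_and_delta}: the hypothesis $\text{Dec}(\wtilde\theta) = \wtilde\theta$ is exactly the trivial-decoding assumption required there, so it immediately yields $\Delta_{\text{two}} = \|\delta\|$ with $\delta = \wtilde\theta - \theta$. In parallel, \pref{lem: relation_between_Delta_and_delta} gives $\Delta_{\text{up}}^{(k)} = \|\delta^{(k)}\|$, where $\delta^{(k)}$ is the gradient distortion of client $k$ from \pref{eq: distortion_approach}. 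The entire problem therefore reduces to proving the single scalar identity $\|\delta\| = \|\delta^{(k)}\|$.

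The central step is to express the global distortion $\delta$ in terms of the individual $\delta^{(k)}$ via the server aggregation of \pref{eq: hfl_step4}. Comparing the protected update, which aggregates the distorted gradients $\wtilde g^{(k)} = g^{(k)} + \delta^{(k)}$, against the idealized unprotected update, which aggregates the clean gradients $g^{(k)}$, the two trajectories differ precisely by the weighted combination $\sum_{k=1}^K w_k\,\delta^{(k)}$, where $w_k = |\calD_{\protector}^{(k)}|/\sum_{j=1}^K |\calD_{\protector}^{(j)}|$. Since the decoding is trivial, no further distortion is injected on the downlink, so $\delta$ coincides with this aggregate. I would then impose the homogeneity hypothesis $\delta^{(1)} = \cdots = \delta^{(K)} =: \delta_0$ and use that the weights form a convex combination, $\sum_{k=1}^K w_k = 1$, to collapse the average:
\begin{align}
\delta = \sum_{k=1}^K w_k\,\delta^{(k)} = \delta_0\sum_{k=1}^K w_k = \delta_0 = \delta^{(k)}.
\end{align}
Taking norms gives $\|\delta\| = \|\delta^{(k)}\|$, and combining with the first paragraph yields $\Delta_{\text{two}} = \|\delta\| = \|\delta^{(k)}\| = \Delta_{\text{up}}^{(k)}$, as claimed.

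I expect the main obstacle to lie in the middle step: cleanly justifying that $\delta$, which is measured in parameter space by \pref{defi: two_way_distortion_extent}, equals the aggregate of the $\delta^{(k)}$, which are measured in gradient space by \pref{defi: uplink_distortion_extent}. One must argue that under trivial decoding the distortion propagates undamped from the uplink gradient aggregation into the global parameter, without a residual learning-rate factor surviving in \pref{eq: hfl_step4}; this requires interpreting the distortion extent consistently at the aggregation level, exactly as the two supporting lemmas implicitly do. Once that identification is granted, the remaining work—collapsing a convex combination of identical vectors and taking a norm—is entirely routine.
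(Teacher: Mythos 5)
Your reduction has the right architecture, but the middle identification is a genuine gap, and it is exactly the step you flagged and then ``granted'' rather than proved. Under the aggregation rule \pref{eq: hfl_step4}, the protected and unprotected global parameters differ by
\begin{align}
\delta \;=\; \wtilde\theta - \theta \;=\; -\,\eta\cdot\sum_{k=1}^K w_k\,\delta^{(k)},
\qquad w_k = \frac{|\calD_{\protector}^{(k)}|}{\sum_{j=1}^K |\calD_{\protector}^{(j)}|},
\end{align}
not by $\sum_{k=1}^K w_k\,\delta^{(k)}$: the learning rate does survive, because each $\delta^{(k)}$ is a distortion of a \emph{gradient} while $\delta$ lives in \emph{parameter} space. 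If you keep the gradient-space normalization $\Delta_{\text{up}}^{(k)} = \|\delta^{(k)}\|$ from \pref{lem: relation_between_Delta_and_delta}, then the corrected computation under $\delta^{(1)}=\cdots=\delta^{(K)}$ yields only $\Delta_{\text{two}} = \eta\,\Delta_{\text{up}}^{(k)}$, which is the claimed identity only when $\eta = 1$. Your fallback --- that the two supporting lemmas ``implicitly'' interpret the distortion extents consistently at the aggregation level --- is not correct: \pref{lem: relation_between_two_link_distortion_and_delta} measures $\Delta_{\text{two}}$ in parameter space while \pref{lem: relation_between_Delta_and_delta} measures $\Delta_{\text{up}}^{(k)}$ in gradient space, and this mismatch is precisely the source of the stray factor $\eta$; chaining them as stated cannot close the gap.

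The paper's own proof avoids this by \emph{not} reusing the gradient-space identity for $\Delta_{\text{up}}^{(k)}$. It re-derives the uplink distortion at the parameter level: client $k$'s updated local models are $\theta^{(k)} = \theta - \eta g^{(k)}$ and $\wtilde\theta^{(k)} = \theta - \eta\wtilde g^{(k)}$, so $\Delta_{\text{up}}^{(k)} = \|\wtilde\theta^{(k)} - \theta^{(k)}\| = \|\eta\,\delta^{(k)}\|$, invoking \pref{lem: relation_between_Delta_and_delta} only for the identification $\wtilde g^{(k)} - g^{(k)} = \delta^{(k)}$. Since $\Delta_{\text{two}} = \|\eta\sum_{k=1}^K w_k\,\delta^{(k)}\|$ carries the same factor $\eta$, the homogeneity assumption makes both quantities equal to $\eta\|\delta^{(1)}\|$ and the learning-rate factors cancel. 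To repair your argument you must adopt this parameter-level reading of \pref{defi: uplink_distortion_extent} (or otherwise show the $\eta$'s appear on both sides); as written, the routine collapse of the convex combination is applied to an identity that the protocol does not satisfy.
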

\begin{proof}
   The process of sending updated parameter from clients to the server is referred to as uplink. The process of sending parameters from the server to clients is referred to as downlink. We denote $\theta$ as the original information, and $\wtilde\theta$ as the distorted information downloaded from the server and initialized for local optimization. 
   The \textit{two-way distortion extent} 
      \begin{align}
         \Delta_{\text{two}} 
         & = \| \text{Dec}(\wtilde\theta) - \theta\|\\
         & = \|\wtilde\theta - \theta\|
      \end{align}
      measures the gap between the original information and the distorted information during the whole process of uplink and downlink, and is related to utility loss.
      
We denote $\wtilde g^{(k)} = \frac{1}{|\calD_{\protector}^{(k)}|}\sum_{i = 1}^{|\calD_{\protector}^{(k)}|} \nabla \calL(\theta, X_{i}^{(k)}, Y_i^{(k)}) + \delta^{(k)}$, and $g^{(k)} = \frac{1}{|\calD_{\protector}^{(k)}|}\sum_{i = 1}^{|\calD_{\protector}^{(k)}|} \nabla \calL(\theta, X_{i}^{(k)}, Y_i^{(k)})$. The protected model parameter

  \begin{align}
      \wtilde\theta 
      & \leftarrow \theta - \eta\cdot\sum_{k = 1}^K \frac{|\calD^{(k)}|}{\sum_{k = 1}^K |\calD^{(k)}|}\wtilde g^{(k)}\\
      & = \theta - \eta\cdot\sum_{k = 1}^K \frac{|\calD^{(k)}|}{\sum_{k = 1}^K |\calD^{(k)}|} g^{(k)} - \eta\cdot\sum_{k = 1}^K \frac{|\calD^{(k)}|}{\sum_{k = 1}^K |\calD^{(k)}|}\delta^{(k)}.
  \end{align}
The unprotected model parameter 
  \begin{align}
      \theta\leftarrow \theta - \eta\cdot\sum_{k = 1}^K \frac{|\calD^{(k)}|}{\sum_{k = 1}^K |\calD^{(k)}|} g^{(k)}.
  \end{align}
Therefore, we have
\begin{align}
    \Delta_{\text{two}} & = \|\wtilde\theta - \theta\|\\
     & = \|\delta\|\\
     & = \|- \eta\cdot\sum_{k = 1}^K \frac{|\calD^{(k)}|}{\sum_{k = 1}^K |\calD^{(k)}|}\delta^{(k)}\|.
\end{align}

We denote $g^{(k)}$ as the original gradient, and $\wtilde{g}^{(k)}$ as the distorted gradient uploaded from the client to the server. The original model of client $k$
\begin{align}
    \theta^{(k)} = \theta -\eta\cdot\frac{1}{|\calD_{\protector}^{(k)}|}\sum_{i = 1}^{|\calD_{\protector}^{(k)}|} \nabla \calL(\theta, X_{i}^{(k)}, Y_i^{(k)}) = \theta - \eta\cdot g^{(k)}, 
\end{align}
and the distorted model of client $k$
\begin{align}
    \wtilde\theta^{(k)} = \theta - \eta\cdot \wtilde g^{(k)}. 
\end{align}
The distortion extent of client $k$ 
\begin{align}
   \Delta_{\text{up}}^{(k)} 
   & = \| \wtilde{\theta}^{(k)} - \theta^{(k)}\|\\
   & = \| \eta\cdot(\wtilde{g}^{(k)} - g^{(k)})\|\\
   & = \|\eta\cdot\delta^{(k)}\|,
\end{align}
where the third equality is due to \pref{lem: relation_between_Delta_and_delta}.
Assume $\delta^{(1)} = \cdots = \delta^{(K)}$. Then we have
\begin{align}
    \Delta_{\text{two}} = \Delta_{\text{up}}^{(k)}.
\end{align}
\end{proof}

   



\section{Analysis for \pref{thm: trade-off analysis for general protection mechanism}}\label{sec: trade-off analysis for general protection mechanism}
\begin{thm}[Utility, Privacy and Efficiency Trade-off for General Protection Mechanisms]
   Let $M^{(k)}$ be a constant satisfying that $\max_{Z\in\calD_{\protector}^{(k)}}|\calL(\theta + \delta, Z)|\le M^{(k)}$. Assume that $\Delta_{\text{two}} = \Theta(\Delta_{\text{up}}^{(k)})$, then there exists a constant $L$, satisfying that $\Delta_{\text{two}}\le L\cdot\Delta_{\text{up}}^{(k)}$. With probability at least $1 - \eta^{(k)} - \sum_{k =1}^K \gamma^{(k)}$, we have that
   \begin{align}
      \epsilon_u^{(k)} 
       &\le C\cdot L\cdot\frac{1}{K}\sum_{k = 1}^K \left(\frac{2D}{c_a}\cdot (1 - \epsilon_p^{(k)}) + \frac{2D}{c_a}\cdot\sqrt{\frac{\ln(1/\gamma^{(k)})}{\epsilon_e^{(k)}}}\right)\nonumber\\
       & + M^{(k)}\sqrt{\frac{2 N^{(k)}\ln 2 + 2\ln(1/\eta^{(k)})}{\epsilon_e^{(k)}}}.
   \end{align}
\end{thm}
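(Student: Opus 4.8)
The plan is to assemble the claimed inequality from the three ingredients already in hand: the privacy-leakage upper bound (\pref{thm: privacy_distortion_and_datasize}), the utility-loss upper bound (\pref{thm: relation_utility_loss}), and the identification of the distortion extents (\pref{thm: relation_between_two_link_distortion_and_delta_mt}). The central idea is to use the privacy bound to rewrite the \textit{uplink distortion extent} $\Delta_{\text{up}}^{(k)}$ in terms of $\epsilon_p^{(k)}$ and the sample size $|\calD_{\protector}^{(k)}|$, and then feed this into the utility bound after replacing the $\Delta_{\text{two}}$ term by a multiple of the per-client $\Delta_{\text{up}}^{(k)}$.

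First I would rearrange \pref{thm: privacy_distortion_and_datasize}, moving the distortion term to the left, to obtain that with probability at least $1-\gamma^{(k)}$,
\begin{align}
\Delta_{\text{up}}^{(k)} \le \frac{2D}{c_a}\left(1-\epsilon_p^{(k)}\right) + \frac{2D}{c_a}\sqrt{\frac{\ln(2/\gamma^{(k)})}{2|\calD_{\protector}^{(k)}|}}.
\end{align}
Taking the confidence parameters with $\gamma^{(k)}\le 1/2$, I would absorb the factor $\tfrac12$ and the additive $\ln 2$ via $\tfrac{\ln(2/\gamma^{(k)})}{2}\le\ln(1/\gamma^{(k)})$, and recall from \pref{defi:Training_Efficiency} that $\epsilon_e^{(k)}=|\calD_{\protector}^{(k)}|$, so the square-root term becomes $\sqrt{\ln(1/\gamma^{(k)})/\epsilon_e^{(k)}}$, matching the form inside the target sum.

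Next, since $\Delta_{\text{two}}$ does not depend on $k$, I would write $\Delta_{\text{two}}=\tfrac1K\sum_{k=1}^K\Delta_{\text{two}}$ and apply the hypothesis $\Delta_{\text{two}}\le L\cdot\Delta_{\text{up}}^{(k)}$ to each summand, yielding $\Delta_{\text{two}}\le L\cdot\tfrac1K\sum_{k=1}^K\Delta_{\text{up}}^{(k)}$. Combining this with the per-client bound on $\Delta_{\text{up}}^{(k)}$ from the previous step produces exactly the first (double) summand of the target, with prefactor $C\cdot L$. The residual $C\cdot\lambda^{(k)}$ term from \pref{thm: relation_utility_loss} I would fold into this expression by choosing $\lambda^{(k)}=\Theta(\Delta_{\text{two}})=\Theta(\Delta_{\text{up}}^{(k)})$ and enlarging the constant $L$ accordingly; the covering number $N^{(k)}=(2d)^{2D/(\lambda^{(k)})^2+1}$ and the Hoeffding-type term $M^{(k)}\sqrt{(2N^{(k)}\ln2+2\ln(1/\eta^{(k)}))/|\calD_{\protector}^{(k)}|}$ carry over verbatim, with $|\calD_{\protector}^{(k)}|$ rewritten as $\epsilon_e^{(k)}$.

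Finally I would handle the probability bookkeeping. The utility estimate holds on an event of probability at least $1-\eta^{(k)}$, while the bound on each $\Delta_{\text{up}}^{(k')}$ (for $k'=1,\dots,K$) holds on an event of probability at least $1-\gamma^{(k')}$; since the final inequality references all $K$ distortion extents through the average, a union bound over these $K+1$ events gives the stated confidence $1-\eta^{(k)}-\sum_{k=1}^K\gamma^{(k)}$. I expect the main obstacle to be not any single estimate—the heavy analytic work is already done in the cited theorems—but the bookkeeping: justifying the $\tfrac1K\sum$ structure (which follows from averaging the $k$-independent quantity $\Delta_{\text{two}}$ against the per-client hypothesis $\Delta_{\text{two}}\le L\,\Delta_{\text{up}}^{(k)}$), reconciling the constants when $C\lambda^{(k)}$ is merged into $C\cdot L$, and tracking precisely which probabilistic events must be intersected so that the union bound yields exactly $1-\eta^{(k)}-\sum_{k}\gamma^{(k)}$.
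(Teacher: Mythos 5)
Your proposal is correct and takes essentially the same route as the paper's own proof: bound $\Delta_{\text{up}}^{(k)}$ via \pref{thm: privacy_distortion_and_datasize}, plug it into the utility bound of \pref{thm: relation_utility_loss} through the hypothesis $\Delta_{\text{two}}\le L\cdot\Delta_{\text{up}}^{(k)}$ (using \pref{thm: relation_between_two_link_distortion_and_delta_mt}), and close with a union bound. If anything, you are more careful than the paper, which silently drops the $C\cdot\lambda^{(k)}$ term, skips the $\gamma^{(k)}\le 1/2$ step needed to pass from $\ln(2/\gamma^{(k)})/2$ to $\ln(1/\gamma^{(k)})$, and never spells out the probability bookkeeping that yields $1-\eta^{(k)}-\sum_{k=1}^K\gamma^{(k)}$.
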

\begin{proof}
   From \pref{thm: relation_utility_loss}, we have that
      \begin{align}
       \epsilon_u^{(k)}\le C\cdot\frac{1}{K}\sum_{k = 1}^K \Delta_{\text{two}} + M^{(k)}\sqrt{\frac{2 N^{(k)}\ln 2 + 2\ln(1/\eta^{(k)})}{\epsilon_e^{(k)}}},
   \end{align}
   From \pref{thm: privacy_distortion_and_datasize}, we have that 
      \begin{align}
    \Delta_{\text{up}}^{(k)}\le \frac{2D}{c_a}\cdot (1 - \epsilon_p^{(k)}) + \frac{2D}{c_a}\cdot\sqrt{\frac{\ln(1/\gamma^{(k)})}{\epsilon_e^{(k)}}}. 
  \end{align}
     From the assumption that $\Delta_{\text{two}}\le L\cdot\Delta_{\text{up}}^{(k)}$, we have that
   \begin{align}
      \epsilon_u^{(k)} 
       &\le C\cdot L\cdot\frac{1}{K}\sum_{k = 1}^K \left(\frac{2D}{c_a}\cdot (1 - \epsilon_p^{(k)}) + \frac{2D}{c_a}\cdot\sqrt{\frac{\ln(1/\gamma^{(k)})}{\epsilon_e^{(k)}}}\right)\nonumber\\
       & + M^{(k)}\sqrt{\frac{2 N^{(k)}\ln 2 + 2\ln(1/\eta^{(k)})}{\epsilon_e^{(k)}}},
   \end{align}
   where $L$ represents a constant.
\end{proof}

\section{Analysis for \pref{thm: utility_privacy_efficiency}}\label{appendix: analysis_for_utility_privacy_efficiency}

\begin{thm}[Trade-off Between Utility, Privacy and Efficiency for \textit{Randomization Mechanism}]\label{thm: utility_privacy_efficiency_app}
   Assume that $\max_{Z\in\calD_{\protector}^{(k)}}|\calL(\theta + \delta, Z)|\le M^{(k)}$. For \textit{Randomization Mechanism}, with probability at least $1 - \eta^{(k)} - \sum_{k =1}^K \gamma^{(k)}$, we have that
   \begin{align}\label{eq: utility_privacy_efficiency}
       \epsilon_u^{(k)} 
       &\le (2 + \rho) C\cdot\frac{1}{K}\sum_{k = 1}^K \frac{2D}{c_a}\cdot (1 - \epsilon_p^{(k)}) + (2 + \rho) C\cdot\frac{1}{K}\sum_{k = 1}^K \frac{2D}{c_a}\cdot\sqrt{\frac{\ln(1/\gamma^{(k)})}{\epsilon_e^{(k)}}}\nonumber\\
       & + M^{(k)}\sqrt{\frac{2 N^{(k)}\ln 2 + 2\ln(1/\eta^{(k)})}{\epsilon_e^{(k)}}},
   \end{align}
   where $\rho > 0$ represents a constant.
\end{thm}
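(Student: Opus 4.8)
The plan is to obtain this bound as the explicit specialization of \pref{thm: trade-off analysis for general protection mechanism} to the \textit{Randomization Mechanism}, the only new content being the identification of the abstract constant $L$ with $2 + \rho$. First I would invoke \pref{thm: relation_utility_loss}, which bounds the utility loss by $\epsilon_u^{(k)} \le C\lambda^{(k)} + C\Delta_{\text{two}} + M^{(k)}\sqrt{(2N^{(k)}\ln 2 + 2\ln(1/\eta^{(k)}))/|\calD_{\protector}^{(k)}|}$ with probability at least $1 - \eta^{(k)}$. The hypothesis $\lambda^{(k)} > \Delta_{\text{two}}$ in that theorem is exactly the slack I need: choosing $\lambda^{(k)} = (1+\rho)\Delta_{\text{two}}$ for an arbitrarily small constant $\rho > 0$ merges the first two terms into $(2+\rho)C\Delta_{\text{two}}$, which is the origin of the coefficient $2 + \rho$ that premultiplies both privacy terms in the statement.

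Next I would pass from $\Delta_{\text{two}}$ to the per-client uplink distortions. For the \textit{Randomization Mechanism}, \pref{lem: delta_two_and_delta_up_for_randomization} gives $\Delta_{\text{two}} = \Delta_{\text{up}}^{(k)}$ when the client noises coincide; in general the aggregation identity $\wtilde\theta - \theta = -\eta\sum_{k=1}^K \frac{|\calD^{(k)}|}{\sum_j |\calD^{(j)}|}\delta^{(k)}$ combined with the triangle inequality yields $\Delta_{\text{two}} \le \frac{1}{K}\sum_{k=1}^K \Delta_{\text{up}}^{(k)}$ under uniform aggregation weights, which is the source of the averaging $\frac{1}{K}\sum_k$ in the statement. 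I would then apply \pref{thm: privacy_distortion_and_datasize} to each client and rearrange its conclusion into $\Delta_{\text{up}}^{(k)} \le \frac{2D}{c_a}(1 - \epsilon_p^{(k)}) + \frac{2D}{c_a}\sqrt{\ln(1/\gamma^{(k)})/|\calD_{\protector}^{(k)}|}$, which holds with probability at least $1 - \gamma^{(k)}$.

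To finish, I would substitute the per-client privacy bound into $(2+\rho)C \cdot \frac{1}{K}\sum_k \Delta_{\text{up}}^{(k)}$, identify $\epsilon_e^{(k)} = |\calD_{\protector}^{(k)}|$ through \pref{defi:Training_Efficiency}, and carry the residual concentration term $M^{(k)}\sqrt{\cdots}$ over unchanged from \pref{thm: relation_utility_loss}. Since the utility-loss event holds with probability $1 - \eta^{(k)}$ while each of the $K$ per-client privacy events holds with probability $1 - \gamma^{(k)}$, a union bound produces the stated confidence $1 - \eta^{(k)} - \sum_{k=1}^K \gamma^{(k)}$.

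I expect the main obstacle to be the constant bookkeeping rather than any analytic difficulty: one must check that the cover-resolution term $C\lambda^{(k)}$ and the distortion term $C\Delta_{\text{two}}$ genuinely fuse into the single factor $(2+\rho)C$ that multiplies \emph{both} $\frac{1}{K}\sum_k \frac{2D}{c_a}(1-\epsilon_p^{(k)})$ and $\frac{1}{K}\sum_k \frac{2D}{c_a}\sqrt{\ln(1/\gamma^{(k)})/\epsilon_e^{(k)}}$, and that the uniform-weight reduction $\Delta_{\text{two}} \le \frac{1}{K}\sum_k \Delta_{\text{up}}^{(k)}$ is the intended reading of the aggregation step. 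It is also worth confirming that the $\Delta_{\text{two}} = \Theta(\Delta_{\text{up}}^{(k)})$ hypothesis of \pref{thm: trade-off analysis for general protection mechanism} is satisfied here, which is immediate from \pref{lem: delta_two_and_delta_up_for_randomization}.
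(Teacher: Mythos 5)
Your proposal is correct and follows essentially the same route as the paper's proof: invoke \pref{thm: relation_utility_loss}, set $\lambda^{(k)} = (1+\rho)\Delta_{\text{two}}$ to merge the two distortion terms into $(2+\rho)C\Delta_{\text{two}}$, use \pref{lem: delta_two_and_delta_up_for_randomization} to identify $\Delta_{\text{two}}$ with $\Delta_{\text{up}}^{(k)}$, substitute the bound from \pref{thm: privacy_distortion_and_datasize} with $\epsilon_e^{(k)} = |\calD_{\protector}^{(k)}|$, and take a union bound for the confidence level. Your triangle-inequality observation $\Delta_{\text{two}} \le \frac{1}{K}\sum_{k}\Delta_{\text{up}}^{(k)}$ is a nice bonus that justifies the $\frac{1}{K}\sum_k$ averaging more honestly than the paper does (where the average is trivial since $\Delta_{\text{two}}$ does not depend on $k$), but it does not change the substance of the argument.
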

\begin{proof}
   From \pref{thm: relation_utility_loss}, we have that
   \begin{align}
       \epsilon_u^{(k)}\le C\cdot\lambda^{(k)} + C\cdot\Delta_{\text{two}} + M^{(k)}\sqrt{\frac{2N^{(k)}\ln 2 + 2\ln(1/\eta^{(k)})}{\epsilon_e^{(k)}}},
   \end{align}
   where $\lambda^{(k)}> \Delta_{\text{two}}$ represents a constant. Setting $\lambda^{(k)} = (1 + \rho)\cdot\Delta_{\text{two}}$, where $\rho > 0$ represents a constant. We have
   \begin{align}
      \epsilon_u^{(k)} 
      &\le C\cdot\lambda^{(k)} + C\cdot\Delta_{\text{two}} + M^{(k)}\sqrt{\frac{2N^{(k)}\ln 2 + 2\ln(1/\eta^{(k)})}{\epsilon_e^{(k)}}}\\
      &= (2 + \rho) C\cdot\Delta_{\text{two}} + M^{(k)}\sqrt{\frac{2N^{(k)}\ln 2 + 2\ln(1/\eta^{(k)})}{\epsilon_e^{(k)}}}. 
   \end{align}
From \pref{thm: privacy_distortion_and_datasize}, we have that 
      \begin{align}
    \Delta_{\text{up}}^{(k)}\le \frac{2D}{c_a}\cdot (1 - \epsilon_p^{(k)}) + \frac{2D}{c_a}\cdot\sqrt{\frac{\ln(1/\gamma^{(k)})}{\epsilon_e^{(k)}}}. 
  \end{align}
     From \pref{lem: delta_two_and_delta_up_for_randomization}, for \textit{Randomization Mechanism}, we know that
   \begin{align}
      \Delta_{\text{two}} = \Delta_{\text{up}}^{(k)}.
   \end{align}
   Therefore, we have that
   \begin{align}
      \epsilon_u^{(k)} 
       &\le (2 + \rho) C\cdot\frac{1}{K}\sum_{k = 1}^K \frac{2D}{c_a}\cdot (1 - \epsilon_p^{(k)}) + (2 + \rho) C\cdot\frac{1}{K}\sum_{k = 1}^K \frac{2D}{c_a}\cdot\sqrt{\frac{\ln(1/\gamma^{(k)})}{\epsilon_e^{(k)}}}\nonumber\\
       & + M^{(k)}\sqrt{\frac{2 N^{(k)}\ln 2 + 2\ln(1/\eta^{(k)})}{\epsilon_e^{(k)}}}.
   \end{align}
\end{proof}

\section{Analysis for \pref{lem: two_way_distortion_HE_mt}}\label{appendix: two_way_distortion_HE_app}
The following lemma illustrates the value of $\Delta_{\text{two}}$ for \textit{Homomorphic Encryption Mechanism}.
\begin{lem}
   Let $\Delta_{\text{two}}$ denote the \textit{two-way distortion extent} (\pref{defi: two_way_distortion_extent}). For \textit{Homomorphic Encryption Mechanism}, we have that
   \begin{align}
          \Delta_{\text{two}} = 0. 
      \end{align}
\end{lem}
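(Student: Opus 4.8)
The plan is to exploit the defining correctness property of \textit{Homomorphic Encryption}: encryption followed by decryption recovers the plaintext exactly, so no information is lost across the encode--decode cycle. Recall from \pref{defi: two_way_distortion_extent} that the \textit{two-way distortion extent} is
\begin{align}
   \Delta_{\text{two}} = \|\text{Dec}(\wtilde\theta) - \theta\|,
\end{align}
where $\theta$ is the original parameter and $\wtilde\theta$ is the (encrypted) information downloaded from the server and decrypted before local optimization. Unlike \textit{Randomization Mechanism}, which perturbs the parameter by an additive noise $\delta$ that cannot be undone, HE merely transforms $\theta$ into a ciphertext $\wtilde\theta = \text{Enc}(\theta)$ whose decryption returns the original value.

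The single key step is therefore to record the correctness guarantee $\text{Dec}(\wtilde\theta) = \text{Dec}(\text{Enc}(\theta)) = \theta$, which holds for any (fully) homomorphic encryption scheme by construction. Substituting into the definition yields
\begin{align}
   \Delta_{\text{two}} = \|\text{Dec}(\wtilde\theta) - \theta\| = \|\theta - \theta\| = 0,
\end{align}
as claimed. This matches the convention introduced in \pref{subsec:learning_setting}, where $\text{Dec}$ is the (lossless) decoding operation applied upon receiving the global model.

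There is essentially no analytical obstacle here; the only subtlety worth flagging is to keep the two distortion notions strictly separate. The identity $\Delta_{\text{two}} = 0$ does \emph{not} imply $\Delta_{\text{up}}^{(k)} = 0$: the \textit{uplink distortion extent} is measured in the encrypted domain (the gap $\|\wtilde{g}^{(k)} - g^{(k)}\|$ between the ciphertext uploaded to the server and the true gradient), which remains strictly positive. This is precisely why the hypothesis $\Delta_{\text{two}} = \Theta(\Delta_{\text{up}}^{(k)})$ of \pref{thm: trade-off analysis for general protection mechanism} fails for HE, so that the general three-way trade-off cannot be applied and one must instead resort to the separate utility--efficiency statement of \pref{thm: relation_utility_loss_and_he}.
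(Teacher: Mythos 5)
Your proof is correct and takes essentially the same route as the paper's: both simply invoke the losslessness of the encrypt--decrypt cycle to write $\Delta_{\text{two}} = \|\text{Dec}(\wtilde\theta) - \theta\| = \|\theta - \theta\| = 0$. Your explicit statement of the HE correctness property $\text{Dec}(\text{Enc}(\theta)) = \theta$, and the closing remark distinguishing $\Delta_{\text{two}}$ from $\Delta_{\text{up}}^{(k)}$, only make explicit what the paper leaves implicit.
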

\begin{proof}
      The process of sending updated parameter from clients to the server is referred to as uplink. The process of sending parameters from the server to clients is referred to as downlink. We denote $\theta$ as the original information, and $\wtilde\theta$ as the distorted information downloaded from the server and initialized for local optimization. 
   The \textit{two-way distortion extent} 
      \begin{align}
         \Delta_{\text{two}} 
         & = \| \text{Dec}(\wtilde\theta) - \theta\|\\
         & = \|\theta - \theta\|\\
         & = 0.
      \end{align}
    Therefore, for \textit{Homomorphic Encryption Mechanism}, the \textit{two-way distortion extent}
      \begin{align}
          \Delta_{\text{two}} = 0. 
      \end{align}
\end{proof}

\section{Analysis for \pref{corr: utility_privacy_efficiency}}
\label{appendix: analysis_for_corollary_utility_privacy_efficiency}

\begin{corollary}[Private PAC Learning]\label{corr: utility_privacy_efficiency_app}
Let $\calX$ represent the feature set, and $\calY$ represent the label set. Let $\epsilon_p$ represent the amount of privacy leaked to the optimization-base semi-honest attackers.  
    For any $\alpha > c_2 (1 - \epsilon_p)$, given $n = \Theta(\frac{\ln(\frac{1}{\eta})}{(\alpha - c_2 (1 - \epsilon_p))^2})$ i.i.d. samples from any distribution $P$ on $\calX\times\calY$, with probability at least $1 - \eta$, the utility loss of the protector is at most $\alpha$, as is shown in the following equation:
    \begin{equation}
        {\rm Pr}(\epsilon_u \leq \alpha) \geq 1-\eta.
    \end{equation}
\end{corollary}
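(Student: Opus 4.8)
The plan is to specialize the trade-off bound of \pref{thm: utility_privacy_efficiency} (the \textit{Randomization Mechanism} instance of \pref{thm: trade-off analysis for general protection mechanism}) and simply read off the sample size $n := \epsilon_e^{(k)} = |\calD_{\protector}^{(k)}|$ needed to force its right-hand side below $\alpha$. The key observation is that the utility-loss bound splits into an \emph{irreducible} part proportional to $(1-\epsilon_p)$ — the floor imposed by the privacy requirement — and a \emph{statistical} part that decays like $n^{-1/2}$. Writing the bound schematically as
\begin{align}
\epsilon_u^{(k)} \le c_2\,(1-\epsilon_p) + A\sqrt{\frac{\ln(1/\gamma^{(k)})}{n}} + M^{(k)}\sqrt{\frac{2N^{(k)}\ln 2 + 2\ln(1/\eta^{(k)})}{n}},
\end{align}
where $c_2$ denotes the effective constant multiplying $(1-\epsilon_p)$ (namely $(2+\rho)C\cdot\tfrac{2D}{c_a}$, using \pref{defi:Training_Efficiency} to identify $\epsilon_e^{(k)}$ with the sample size) and $A$ collects the remaining constants, makes the structure transparent.

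First I would allocate the failure budget: set $\eta^{(k)}$ and each $\gamma^{(k)}$ to a fixed fraction of $\eta$ so that a union bound over the events of \pref{thm: relation_utility_loss} and \pref{thm: privacy_distortion_and_datasize} yields total failure probability at most $\eta$; this turns $\ln(1/\eta^{(k)})$ and $\ln(1/\gamma^{(k)})$ into $\Theta(\ln(1/\eta))$, with the $K$ factor absorbed into constants. Next I would invoke the hypothesis $\alpha > c_2(1-\epsilon_p)$, so that the residual slack $\alpha - c_2(1-\epsilon_p)$ is strictly positive and is available to absorb the two statistical terms.

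Then I would solve the inequality
\begin{align}
A\sqrt{\frac{\ln(1/\gamma^{(k)})}{n}} + M^{(k)}\sqrt{\frac{2N^{(k)}\ln 2 + 2\ln(1/\eta^{(k)})}{n}} \le \alpha - c_2(1-\epsilon_p)
\end{align}
for $n$. Since both summands scale as $\sqrt{(\mathrm{const}+\ln(1/\eta))/n}$, squaring and rearranging gives
\begin{align}
n = \Theta\!\left(\frac{\ln(1/\eta)}{\big(\alpha - c_2(1-\epsilon_p)\big)^2}\right),
\end{align}
exactly the stated sample complexity; on the high-probability event above, $\epsilon_u \le \alpha$, which is $\Pr(\epsilon_u \le \alpha)\ge 1-\eta$.

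The main obstacle — really the only delicate point — is the covering-number term $N^{(k)} = (2d)^{2D/(\lambda^{(k)})^2 + 1}$ sitting inside the square root: it is enormous (exponential in the dimension), yet it is a \emph{constant} in $n$ and $\eta$, so for the $\Theta$ scaling it must be folded into the hidden constants rather than allowed to interact with $\alpha$ or $\eta$. I would also flag the bookkeeping around the symbol $c_2$: in this corollary it plays the role of the coefficient of the privacy floor $(1-\epsilon_p)$ inherited from \pref{thm: utility_privacy_efficiency}, so this identification should be stated explicitly to keep the constants consistent. The two-sided $\Theta$ is driven by the sufficiency direction just derived; the matching necessity simply reflects that the $n^{-1/2}$ concentration rate is not improvable, so no smaller sample size can in general guarantee the slack $\alpha - c_2(1-\epsilon_p)$.
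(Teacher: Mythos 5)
Your proposal is correct and takes essentially the same route as the paper's own proof: both specialize \pref{thm: utility_privacy_efficiency} (the \textit{Randomization Mechanism} trade-off), identify the corollary's $c_2$ with the coefficient $(2+\rho)C\cdot\frac{2D}{c_a}$ multiplying $(1-\epsilon_p)$, and then choose $n = \Theta\big(\ln(1/\eta)/(\alpha - c_2(1-\epsilon_p))^2\big)$ so that the two $n^{-1/2}$ statistical terms fit inside the positive slack $\alpha - c_2(1-\epsilon_p)$. Your explicit allocation of the failure budget across $\eta^{(k)}$ and $\gamma^{(k)}$ is in fact slightly tidier than the paper's version, which sets $K=1$ and concludes with probability $1-2\eta$ while the statement claims $1-\eta$, a constant-factor slip your union-bound bookkeeping avoids.
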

\begin{proof}
   From \pref{eq: utility_privacy_efficiency}, with probability at least $1 - \eta^{(k)} - \sum_{k =1}^K \gamma^{(k)}$, we have that
   \begin{align}
       \epsilon_u^{(k)} 
       &\le (2 + \rho) C\cdot\frac{1}{K}\sum_{k = 1}^K \frac{2D}{c_a}\cdot (1 - \epsilon_p^{(k)}) + (2 + \rho) C\cdot\frac{1}{K}\sum_{k = 1}^K \frac{2D}{c_a}\cdot\sqrt{\frac{\ln(1/\gamma^{(k)})}{|D^{(k)}_{\protector}|}}\nonumber\\
       & + M^{(k)}\sqrt{\frac{2 N^{(k)}\ln 2 + 2\ln(1/\eta^{(k)})}{|D^{(k)}_{\protector}|}},
   \end{align}
   where $\rho > 0$ represents a constant.
   
   Assume that $K = 1$, then with probability at least $1 - 2\eta$, 
   \begin{align}\label{eq: upper_bound_for_utility_loss_of_client_k}
       \epsilon_u\le (2 + \rho) C\cdot\frac{2D}{c_a}\cdot (1 - \epsilon_p) + (2 + \rho) C\cdot\frac{2D}{c_a}\cdot\sqrt{\frac{\ln(1/\gamma)}{|\calD_{\protector}|}} + M\sqrt{\frac{2 N\ln 2 + 2\ln(1/\eta)}{|\calD_{\protector}|}}. 
   \end{align}
Assume that
\begin{align}
    |\calD_{\protector}| = \Theta\left(\frac{\ln(\frac{1}{\eta})}{(\alpha - c_2 (1 - \epsilon_p))^2}\right),
\end{align}
then 
\begin{align}
    (2 + \rho) C\cdot\frac{2D}{c_a}\cdot (1 - \epsilon_p) + (2 + \rho) C\cdot\frac{2D}{c_a}\cdot\sqrt{\frac{\ln(1/\gamma)}{|\calD_{\protector}|}} + M\sqrt{\frac{2 N\ln 2 + 2\ln(1/\eta)}{|\calD_{\protector}|}} = \alpha.
\end{align}
Therefore, we have that
\begin{align}
    \epsilon_u\le\alpha.
\end{align}
\end{proof}

\end{document}